\documentclass[10pt,journal,compsoc]{IEEEtran}

\ifCLASSOPTIONcompsoc
  \usepackage[nocompress]{cite}
\else
  \usepackage{cite}
\fi
\usepackage{array}
\usepackage{graphicx}
\usepackage{stfloats}
\usepackage{subfigure}
\usepackage{multirow}
\usepackage{booktabs}
\usepackage{makecell}
\usepackage{amsmath}
\usepackage{color}
\usepackage[dvipsnames]{xcolor}
\usepackage{caption}
\makeatletter
\let\NAT@parse\undefined
\makeatother
\usepackage{bm}
\usepackage[numbers,sort&compress]{natbib}
\usepackage{amssymb}
\usepackage{ragged2e}
\usepackage{enumitem} 
\usepackage{colortbl}
\definecolor{light}{gray}{.85}
\usepackage{CJKutf8}
\usepackage{amsthm}
\newtheorem{theorem}{Theorem}
\usepackage{pifont}
\usepackage{tablefootnote}
\usepackage{algorithm}
\usepackage{algorithmic}
\newtheorem{lemma}[theorem]{Lemma}
\usepackage{hyperref}  

\newcommand{\revised}{\textcolor{black}}
\ifCLASSINFOpdf
\else
\fi
\begin{document}
\title{Learning Disentangled Representations for Generalized Multi-view Clustering}

\author{Xin~Zou$^{\ast,\ddagger}$,
        Ruimeng Liu$^{\ast}$,
        Chang~Tang$^{\dagger}$,~\IEEEmembership{Senior Member,~IEEE},\\
        Zhenglai Li,
        Xinwang~Liu,
        Kunlun He,
        and Wanqing~Li,~\IEEEmembership{Senior Member,~IEEE}
\IEEEcompsocitemizethanks{
\IEEEcompsocthanksitem The work was supported in part by the National Natural Science Foundation of China under grant 62522604 and 62476258, and in part by the Interdisciplinary Research Program of HUST under grant 2025JCY3024, and in part by the Natural Science Foundation of Hubei Province under grant 2025AFA113. ($\dagger$Corresponding Author: Chang Tang, and $\ddagger$ denotes the project lead, $^{\ast}$ indicates core contributors.)
\IEEEcompsocthanksitem Xin Zou is with the AI Thrust, The Hong Kong University of Science and Technology (Guangzhou) (e-mail: dylan.zoux@gmail.com).
\IEEEcompsocthanksitem Ruimeng Liu is with the School of Computer Science and Technology, Huazhong University of Science and Technology, Wuhan 430074, China (e-mail: lrmhust@icloud.com).
\IEEEcompsocthanksitem Chang Tang is with the School of Software Engineering, Huazhong University of Science and Technology, Wuhan 430074, China (e-mail: tangchang@hust.edu.cn).
\IEEEcompsocthanksitem Zhenglai Li is with Shenzhen Institutes of Advanced Technology, Chinese Academy of Sciences, Shenzhen 518055, China (e-mail: zhenglai.li@siat.ac.cn).
\IEEEcompsocthanksitem Xinwang Liu is with the School of Computer, National University of Defense Technology, Changsha 410073, China (e-mail: xinwangliu@nudt.edu.cn).
\IEEEcompsocthanksitem Kunlun He is with Medical Big Data Research Center, Medical Engineering Laboratory of Chinese PLA General Hospital, Beijing, China (e-mail: kunlunhe@plagh.org).
\IEEEcompsocthanksitem Wanqing Li is with the School of Computing and Information Technology, University of Wollongong, NSW, 2500, Australia (e-mail: wanqing@uow.edu.au).}
}


\markboth{IEEE Transactions on Pattern Analysis and Machine Intelligence}%
{Shell \MakeLowercase{\textit{et al.}}: Bare Advanced Demo of IEEEtran.cls for IEEE Computer Society Journals}
%

\IEEEtitleabstractindextext{%
\begin{abstract}
\justifying  
\revised{Multi-View Clustering (MVC) has gained significant attention for its ability to leverage complementary information across diverse views.
However, existing deep MVC methods often struggle with view-distribution entanglement during cross-view fusion, which hampers the quality of the shared latent space and leads to suboptimal Figures.
To address this issue, we propose the Generalized Multi-view Auto-Encoder (GMAE), a framework designed to preserve cross-view complementarity through disentangled representation learning. Specifically, GMAE employs dual-path autoencoders to decouple source features into view-specific and view-common embeddings, facilitating the discovery of clearer clustering structures. We further construct cross-view adversarial discriminators to guide view-specific encoders in capturing more discriminative features. By strategically modulating mutual information, GMAE effectively aligns distributions and prevents representation collapse, ensuring the generation of robust, non-trivial embeddings. Comprehensive experiments on 13 benchmark datasets demonstrate that GMAE consistently outperforms state-of-the-art methods in both complete and incomplete MVC tasks. Our code implementation is available at the repository: \url{https://github.com/obananas/GMAE}.}

\end{abstract}

\begin{IEEEkeywords}
Multi-view clustering, self-supervised learning, distribution alignment, and representation learning. 
\end{IEEEkeywords}}

\maketitle

\IEEEdisplaynontitleabstractindextext

%
\IEEEpeerreviewmaketitle

\ifCLASSOPTIONcompsoc
\IEEEraisesectionheading{\section{Introduction}\label{sec:introduction}}
\else
\section{Introduction}\label{sec:introduction}
\fi
\IEEEPARstart{I}{n} \revised{the era of information explosion, data is often presented in multiple views, each offering a unique perspective on the underlying phenomena \cite{kan2015multi,sridharan2008information,fang2023comprehensive}. For instance, a social media user profile can be represented by textual posts, images, and interaction patterns \cite{ding2017multi}; a medical diagnosis can be based on multi-omics data (\textit{e.g.}, genome, proteome, transcriptome) \cite{boekel2015multi,zou2023hierarchical,zou2023dpnet}, clinical symptoms \cite{zou2024dai}, and multi-modal medical images \cite{zhang2021modality}. The challenge lies in effectively leveraging this multifaceted knowledge to uncover the intrinsic structures and patterns obscured when views are considered in isolation. Multi-view clustering (MVC) is a promising solution to address this challenge by integrating complementary information from diverse sources \cite{trosten2021reconsidering,li2021consensus,xu2023adaptive,11057929}.}

\begin{figure}[t]
  \centering
  \vspace{-1em}
\includegraphics[width=1\linewidth]{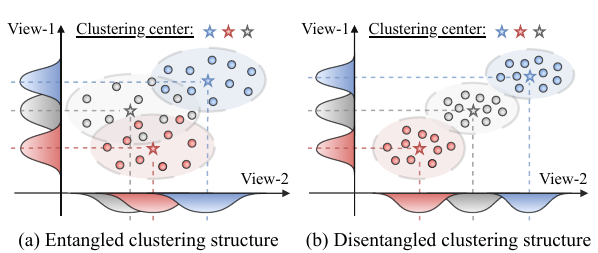}
  \caption{An illustrative example of our motivation. As cluster distributions of different views are distinguished distinctly, previous methods that fuse multi-view entanglement representations for clustering lead to unclear cluster structures. We fix it by learning disentangled representations.}
  \label{figure:exp1} 
  \vspace{-1em}
\end{figure}
\revised{Traditional MVC methods, including matrix factorization techniques \cite{liu2021one,li2024orthogonal,10577554}, graph-based approaches \cite{wang2019gmc,zou2023inclusivity,pan2021multi,tang2022unified}, and subspace learning algorithms \cite{cao2015diversity,gao2015multi,zhang2018generalized}, have made significant strides in harnessing multi-view data for clustering tasks. However, the reliance on handcrafted features and simple fusion strategies generally limits their capability to capture the intricate inter-dependencies and complementary information across views fully\cite{chao2021survey}, especially when confronted with such noise, incompleteness, and inter-cluster entanglement in multi-view data. Thus, some researchers have turned to deep learning-based approaches \cite{li2019deep,tang2022deep,xiao2025dual} that learn robust and discriminative representations from multi-view data to enhance Figures. Deep MVC methods typically focus on integrating view-specific representations into a unified latent space or learning a consensus representation across views, and have demonstrated remarkable success in various unsupervised learning tasks \cite{trosten2023effects,cui2024novel,zhu2024trusted}. Despite their potential, existing deep MVC methods tend to struggle with the issue of learning disentangled representations \cite{ke2024rethinking} that are not only aligned across view distributions but also robust to view-specific noise \cite{ke2024rethinking,chen2023deep}.}

 Existing deep MVC methods generally induce unclear clustering structures due to variations in the distribution of data across views, \textit{i.e.}, the distributions of different views are not aligned and calibrated. We call this problem view distribution entanglement. As shown in Fig. \ref{figure:exp1} (a), alignment without view distribution disentanglement causes the clustering structure to be unclear and the group clusters not tight. We expect to obtain a disentangled and tight clustering distribution as shown in Fig. \ref{figure:exp1} (b).
This paper introduces GMAE, a novel disentangled multi-view clustering framework with cross-view adversarial alignment to address this issue. Briefly, we learn disentangled representations by multi-view Auto-Encoders (AE), which decompose views into specific and common factors to facilitate collaborative multi-view clustering in the next. Further, we propose view-adversarial alignment to obtain intra-view independent and inter-view consistent representations for information collaboration, thus enhancing generalized clustering capability. Experimentally, Fig. \ref{fig:allvis} shows the clustering distribution of several SOTA methods on STL-10 dataset. From Fig. \ref{fig:allvis}, it can be observed that the clustering structure learned by the previous approaches is unclear and discrete. GMAE apparently learns a clear and tight clustering structure. We believe the reason for this gap is that the previous methods failed to implement view distribution alignment with disentangled learning. Experimental results on 13 datasets show the robustness and effectiveness of our strategy. While other SOTA methods have fluctuating Figures on various datasets from different domains due to the lack of alignment with disentangled learning.
 
\begin{figure*}[ht]
\centering
\subfigure[DealMVC{\tiny\textcolor{gray}{[CVPR’21]}}\cite{lin2021completer}] { 
\includegraphics[width=0.378\columnwidth]{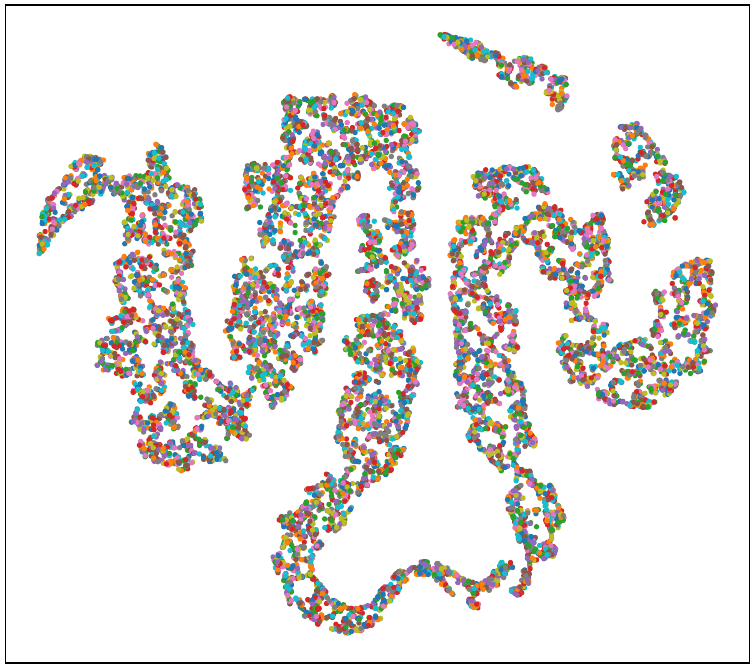}
}
\subfigure[GCFAgg{\tiny\textcolor{gray}{[CVPR’23]}}\cite{Gcfaggyan2023gcfagg}] { 
\includegraphics[width=0.378\columnwidth]{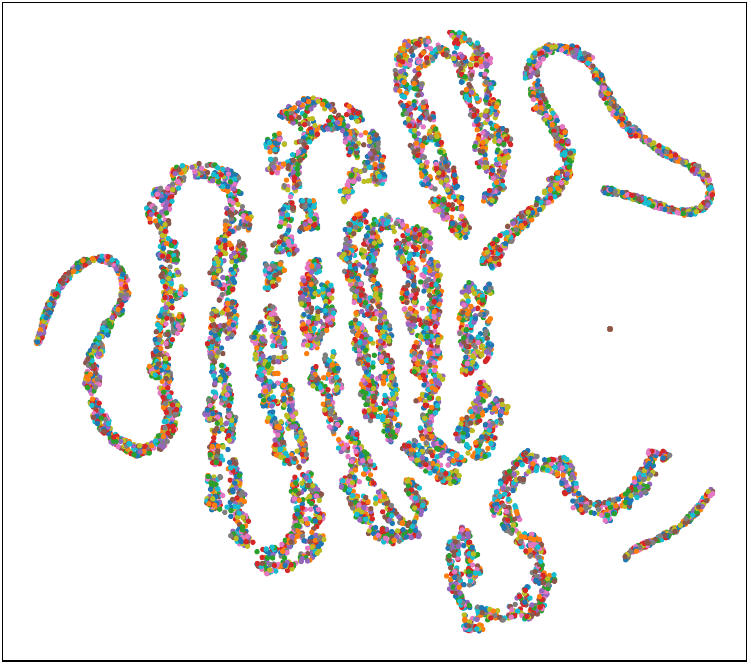}
}
\subfigure[SCMVC{\tiny\textcolor{gray}{[TMM’24]}}\cite{SCMVCwu2024self}] { 
\includegraphics[width=0.378\columnwidth]{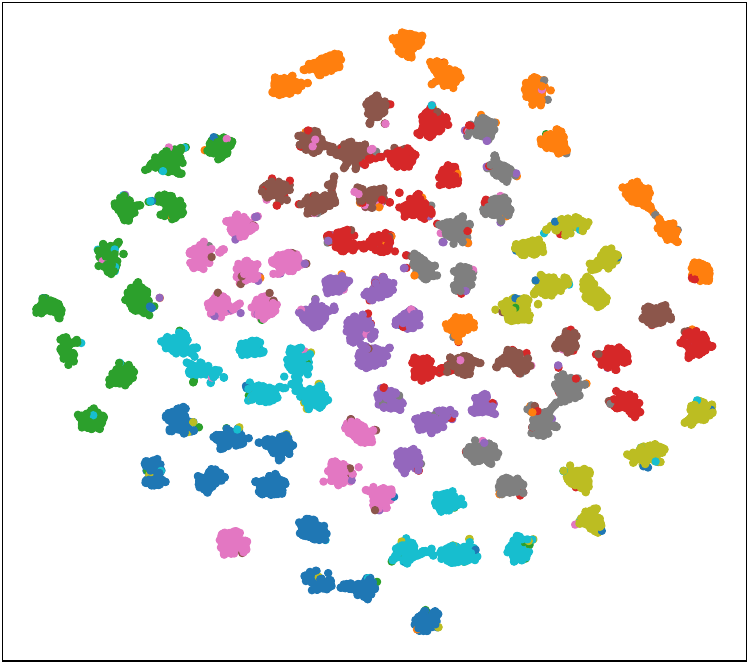}
}
\subfigure[MVCAN{\tiny\textcolor{gray}{[CVPR’24]}}\cite{MVCANxu2024investigating}] { 
\includegraphics[width=0.378\columnwidth]{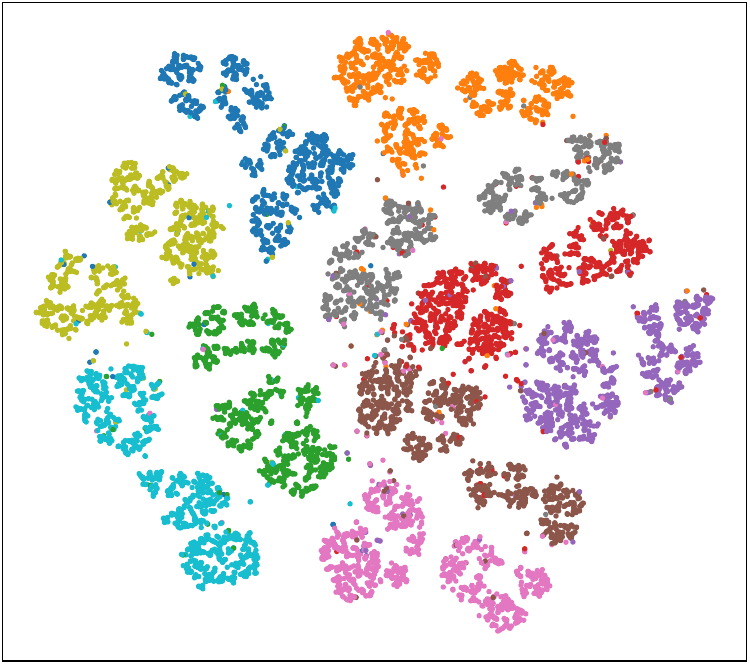}
}
\subfigure[GMAE (ours)] { 
\includegraphics[width=0.375\columnwidth]{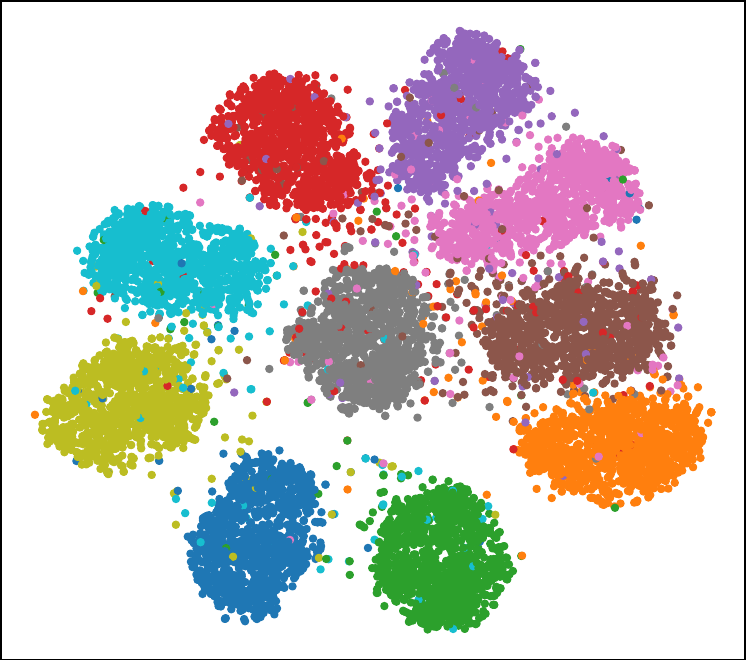}
}\caption{The t-SNE visualization \cite{van2008visualizing} results of feature representations on the STL-10 dataset for different SOTA methods.}
\label{fig:allvis} 
\end{figure*}

Specifically, GMAE adopts a disentangled autoencoder framework to extract view-specific and view-common representations in a self-supervised manner. To further enforce disentanglement, a cross-view adversarial mechanism is introduced. This mechanism facilitates the learning of a common representation that captures the shared structure across views while preserving the distinct characteristics of each view. Additionally, GMAE integrates a disentanglement criterion designed to encourage the learning of independent factors within the representations, thereby enhancing their robustness and discriminative power for clustering tasks. The main contributions can be summarized as follows: \vspace{0.5em}
\begin{itemize}[leftmargin=*]
\item[\ding{182}] To the best of our knowledge, the proposed self-supervised disentangled representation learning framework is the first to address the challenge of view distribution and inter-cluster entanglement in MVC. 
\vspace{0.5em}
\item[\ding{183}] We rigorously derive an optimizable loss function based on the task of mutual information estimation, and provide a theoretical analysis of its capabilities. Our findings demonstrate that the disentangled representations learned by GMAE framework contain more cluster-relevant information and less cluster-irrelevant information.
\vspace{0.5em}
\item[\ding{184}] The proposed framework simultaneously improves the clustering consistency across views and preserves their diversity. Experimental evaluations across different types of public datasets, including text, image, omics, and synthetic data, demonstrate the superiority of our proposed GMAE over state-of-the-art deep MVC methods in both complete and incomplete multi-view clustering tasks.  
\end{itemize}
\vspace{0.5em}
The remainder of this article is organized as follows. In Section \ref{sec:relatedwork}, we review some autoencode-based multi-view clustering methods, representation learning with mutual information and disentangled representation. Section \ref{sec:method} provides a formulation of the problem and introduces the proposed GMAE framework, cross-view distribution alignment strategy, and the analysis of generalization bounds. Extensive experiments and comparisons with SOTA methods are presented in Section \ref{sec:exp}. Section \ref{sec:conclusion} concludes this paper.

\section{Related Works}\label{sec:relatedwork}
This section briefly reviews the topics related to this work, including AE-based clustering, mutual information representation, and disentangled representation learning, and highlights the key differences in the proposed framework.
\subsection{AE-based Multi-view Clustering} 
\revised{Autoencoders, first introduced by Hinton et al. (1993) \cite{hinton1993autoencoders}, use an encoder to approximate the latent distribution of data and a decoder to minimize reconstruction error, laying the foundation for deep learning autoencoders. In clustering, autoencoders have advanced significantly, particularly for high-dimensional data, by learning compressed latent representations that capture non-linear structures difficult for traditional clustering methods. Xie et al. \cite{xie2016unsupervised} integrated autoencoders with distance-based clustering, introducing a stacked framework that iteratively optimized cluster centers. Guo et al. \cite{guo2017improved} improved this approach by adding reconstruction loss to preserve local structures. Jiang et al. \cite{jiang2016variational} further advanced clustering by combining Variational Autoencoders with Gaussian Mixture Models, offering a probabilistic approach for complex data. The Gaussian Mixture VAE \cite{dilokthanakul2016deep} introduced additional random variables to capture intricate data relationships. Xu et al. \cite{xu2021multi} enhanced clustering in multi-view data by using separate latent variables to capture common and unique information. Meanwhile, generative adversarial networks \cite{goodfellow2014generative,goodfellow2020generative,corvi2023intriguing} gained popularity. For example, Ge et al. \cite{ge2019dual} used adversarial networks to regularize the latent space for improved clustering. Further, SparseMVC \cite{liu2025sparsemvc} incorporated sparsity constraints and sparse autoencoders into the latent representation learning process to filter out noise and redundant information.}

Different from these AE-based MVC methods, we propose a disentangled AE framework designed to separate view-specific and view-common representation learning. 

\subsection{Representation with Mutual Information} 
Mutual information (MI) is essential in multi-view learning, especially for representation learning and deep clustering \cite{chen2022representation}. By maximizing shared information across views, MI enhances data representations, improving clustering and classification. Bell and Sejnowski \cite{bell1995information} pioneered an information maximization approach for blind source separation, which influenced later work. Hjelm et al. \cite{hjelm2018learning} expanded on this in the Deep InfoMax framework, where maximizing MI between input data and encoded representations boosted representation learning \cite{bachman2019learning}. Bachman et al. further optimized MI to extract shared semantic features for multi-view learning. In multi-view clustering, Mao et al. \cite{mao2021deep} introduced a deep MI maximin method to improve clustering by maximizing shared and minimizing redundant information, with variational optimization ensuring convergence. Zhang et al. \cite{zhang2023multi} proposed triplex information maximization to strengthen cross-view relationships for more consistent clustering. Recently, Lu et al. \cite{lu2024decoupled} used high-order random walk contrastive learning to enhance MI, improving multi-view clustering for complex datasets, while Zhang et al. \cite{zhou2024mcoco} introduced a multi-level consistency collaborative approach to enhance cross-view MI. Unlike previous methods that maximize MI between different views through contrastive learning (CL) losses, we derive a set of optimizable losses for both maximizing MI between views and minimizing MI between view-specific and view-common representations.

\begin{figure*}[htbp]
  \centering
  \includegraphics[width=1\linewidth]{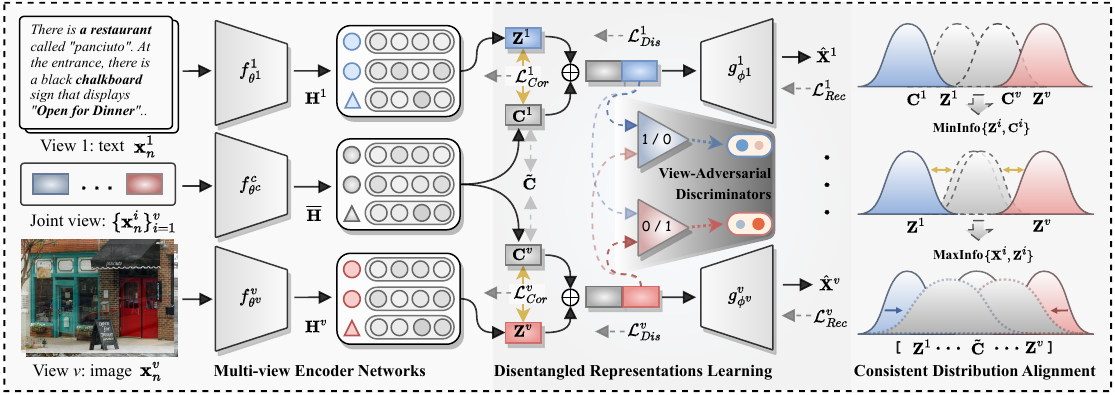}
  \caption{The flowchart of our proposed GMAE. Specifically, given the multi-view feature matrix, GMAE first employs the Multi-Layer Perceptron (MLP) as generators to encode view-specific and -common representations (\textit{i.e.}, $\mathbf{Z}^{v}$ and $\mathbf{C^{v}}$) for all views. After that, GMAE investigates the view-adversarial discrimination loss $\mathcal{L}_{Dis}^v$ and correlation loss $\mathcal{L}_{Cor}^{v}$ to obtain clean and disentangled information. Meanwhile, GMAE envisages the reconstruction loss $\mathcal{L}_{Rec}^{v}$ and self-consistent alignment loss $\mathcal{L}_{Con}$ to preserve a more discriminative and disentangled representation for the final clustering task.}
  \label{figure:framework} 
\end{figure*}
\subsection{Disentangled Representation Learning} 
\revised{Disentangled Representation Learning focuses on breaking down complex data into independent latent factors to enhance model interpretability, controllability, and generalization. Higgins et al. \cite{higgins2017beta} introduced the foundational Beta-VAE framework, which regulates KL divergence to enforce disentanglement, improving latent factor independence. Burgess et al. \cite{burgess2018understanding} refined Beta-VAE by balancing reconstruction quality and independence, while Locatello et al. \cite{locatello2019challenging} revealed the limitations of unsupervised disentanglement, leading to supervised and weakly supervised models. In adversarial learning, Chen et al. \cite{chen2016infogan} combined adversarial training with self-supervision to separate latent factors via contrastive loss. In generative models, approaches like Yang et al. \cite{yang2021causalvae} and Lee et al. \cite{lee2020high} excelled in image synthesis and causal disentanglement by balancing disentanglement with high-fidelity generation. Multi-view disentangled learning saw advancements with the VMI-VAE framework by Serdega and Kim \cite{serdega2020vmi}, which maximizes mutual information for improved latent representations, and Fan et al. \cite{fan2022debiasing}, who used Graph Neural Networks to disentangle node features from structural data. Dynamic models based on autoencoders and variational inference \cite{MFLVCxu2022multi} further adapted disentanglement levels to handle diverse multi-view datasets. Besides, Multi-VAE \cite{xu2021multi} separates view-common features from view-peculiar representations, ensuring that the clustering process is driven by consistent cross-view information while preserving complementarity.}

In this work, we design a novel self-supervised representation learning framework for MVC that enforces view distribution alignment and disentanglement, while learning decoupled view-specific and view-common knowledge. A detailed discussion of existing state-of-the-art MVC methods can be found in Section \ref{sec:sota}, where we present detailed explanations of their key designs and operational mechanisms. This comparative analysis shows how our framework differs from others and our contributions.
\section{The proposed Method}
\label{sec:method}
\textit{Problem Statement}. Given a multi-view dataset $\{\boldsymbol{x}_i^1\in\mathbb{R}^{d_1},\boldsymbol{x}_i^2\in\mathbb{R}^{d_2},\dots,\boldsymbol{x}_i^V\in\mathbb{R}^{d_V}\}_{i=1}^N$, each sample contains $V$ views with complementary, $N$ is sample size, and $d_1, d_2, \dots, d_V$ are the dimension of views. Multi-view clustering aims to separate these samples into $K$ clusters.

The overall GMAE framework is illustrated in Figure \ref{figure:framework}. The proposed GMAE first learns the disentangled view-specific representations $\mathbf{Z}^v\in\mathbb{R}^{N\times D_z}$ and view-common representations $\mathbf{C}^{v}\in\mathbb{R}^{N\times D_c}$ through common representation $\mathbf{C}_{\ast}^{v}\in\mathbb{R}^{N\times D_c}$, where $D_z$ and $D_c$ denote the dimensions of feature space. Then, we use the comprehensive representations $\mathbf{Q}\in\mathbb{R}^{N\times (D_c+V\times D_z)}$ to perform clustering.

\subsection{Multi-view Disentangled Representation Learning}
Raw data of multiple views are inherently noise-perturbed and naturally carry inter-cluster entanglement. We first learn representative feature representations for each view from the original distributions using autoencoders \cite{hinton2006reducing,yang2019deep,li2021adaptive}. As our motivation is to learn clear and disentangled multi-view representations, we further introduce a shared encoder to explore view-common information. Specifically, let $f_{\theta^v}^v$ represents the encoder of $v$-th view, $f_{\theta^c}$ denotes the cross-view shared encoder, and $\{g_{\phi^v}^v\}_{v=1}^V$ are the decoders for view-specific reconstruction. We obtain coarse view-specific and view-common feature representations by
\begin{equation}
\boldsymbol{h}_i^v=f_{\theta^v}^v(\boldsymbol{x}_i^v), \;\; \overline{\boldsymbol{h}}_i=f_{\theta^c}(\boldsymbol{x}_i^v), 
\end{equation}
where $\overline{\mathbf{h}}_i$ represents cluster information of all views. To facilitate the disentangling of view-specific and -common features from encoding, we employ MLP to map $\mathbf{h}_i^v$ and $\overline{\boldsymbol{h}}_i$ into specific and common representations (\textit{i.e.}, $\boldsymbol{z}^v$ and $\boldsymbol{c}^v$). The decoder reconstructs the sample of $i$-th view by
\begin{equation}
\hat{\boldsymbol{x}}_i^v=g_{\phi^v}^v(\boldsymbol{e}_i^v)=g_{\phi^v}^v([\boldsymbol{z}_i^v\oplus \boldsymbol{c}_i^v]),
\end{equation}
where $\oplus$ denotes concatenation operation. To enhance the quality of the reconstructed view, we expect to preserve the essential information of the original feature of $v$-th view in the compact representation $\mathbf{E}^{v}=[\boldsymbol{e}_1^v,\dots,\boldsymbol{e}_N^v]$, \textit{i.e.}, to maximize the mutual information (MI) \cite{lin2021completer} between $\mathbf{X}^v$ and $\mathbf{E}^v$ by the following formulae,
\begin{equation}
   \max I(\mathbf{X}^v,\mathbf{E}^v) = \sum_{\boldsymbol{x}_i^v}\sum_{\boldsymbol{e}_i^v} p(\boldsymbol{x}_i^v, \boldsymbol{e}_i^v) \log\frac{p(\boldsymbol{x}_i^v \mid \boldsymbol{e}_i^v)}{p(\boldsymbol{x}_i^v) },
\end{equation}
where $p(\cdot)$ denotes the probability distribution. Let $q(\boldsymbol{x}_i^v\mid \boldsymbol{e}_i^v)$ as the variational estimation of $p(\boldsymbol{x}_i^v \mid \boldsymbol{e}_i^v)$, we compute Kullback-Leibler (KL) divergence as follows, 
\begin{align}
    D_{KL}(P \| Q) = &\sum_{\boldsymbol{x}_i^v} p(\boldsymbol{x}_i^v \mid \boldsymbol{e}_i^v) \log\frac{p(\boldsymbol{x}_i^v \mid \boldsymbol{e}_i^v)}{q(\boldsymbol{x}_i^v\mid \boldsymbol{e}_i^v)} \geq 0 \Rightarrow\\
     \sum_{\boldsymbol{x}_i^v}&\sum_{\boldsymbol{e}_i^v} p(\boldsymbol{x}_i^v, \boldsymbol{e}_i^v) \log\frac{p(\boldsymbol{x}_i^v \mid \boldsymbol{e}_i^v)}{p(\boldsymbol{x}_i^v)} \geq \nonumber \\
    \sum_{\boldsymbol{x}_i^v}&\sum_{\boldsymbol{e}_i^v} p(\boldsymbol{x}_i^v, \boldsymbol{e}_i^v) \log\frac{q(\boldsymbol{x}_i^v \mid \boldsymbol{e}_i^v)}{p(\boldsymbol{x}_i^v)}.
\label{eq:6}
\end{align}
Suppose $q(\cdot)$ is Gaussian density distribution, we can derive
\begin{equation}
\begin{aligned}
    I(\mathbf{X}^v,\mathbf{E}^v) &\geq \sum_{\boldsymbol{x}_i^v}\sum_{\boldsymbol{e}_i^v} p(\boldsymbol{x}_i^v, \boldsymbol{e}_i^v) \log q(\boldsymbol{x}_i^v \mid \boldsymbol{e}_i^v) \geq \\
    \sum_{\boldsymbol{x}_i^v}\sum_{\boldsymbol{e}_i^v} p(\boldsymbol{x}_i^v&, \boldsymbol{e}_i^v) \log \frac{1}{\sqrt{2 \pi} \sigma} \exp (-\frac{\|\boldsymbol{x}_i-g_{\phi^v}^v(\boldsymbol{e}_i^v)\|_2^2}{\sigma^2}).
\end{aligned}
\end{equation}
Thus, the solution to maximizing $I(\mathbf{X}^v,\mathbf{E}^v)$ is transformed into minimizing the lower bound loss 
\begin{equation}
\mathcal{L}_{Rec}^v=\sum_{i=1}^{N}\|\boldsymbol{x}_i^v-g_{\phi^v}^v (\boldsymbol{e}_i^v)\|_{2}^2 ,
\label{eq:7}
\end{equation}
Further, in order to decouple view-specific and -common representations, we need to constrain the statistical independence between them. We obtain disentangled view-specific and view-common representations by the minimization of the MI between them. In particular, we optimize the MI between $\mathbf{Z}^v$ and $\mathbf{C}^v$ through the correlation loss, as follows,
\begin{equation}
\mathcal{L}_{Cor}^v=\sum_{i=1}^{N}\| \boldsymbol{z}_i^v (\boldsymbol{c}_i^v-\mu_i^v)^{\top}\|_{1},
\label{eq:8}
\end{equation}
where $\mu_i^v$ denotes the mean value vector of $\boldsymbol{c}_i^v$. The above formula is a simplified loss, we aim to minimize the covariance between $\boldsymbol{z}_i^v$ and $\boldsymbol{c}_i^v$, to make $\mathbf{Z}^v\mathbf{C}^{v\top}$ an all-zero matrix, \textit{i.e.}, $\min\|\mathbf{Z}^v\mathbf{C}^{v\top}\|_{0} \; s.t.\; \mathbf{C}^{v} = \mathbf{C}_{\ast}$, where $\mathbf{C}_{\ast}$ denotes optimal common representation. As constraining $l_0$-norm is an NP-hard problem, thus we actually relax it to $l_1$-norm.
\subsection{Consistent Cross-view Adversarial Alignment}
To align the distributions across different views and eliminate their inconsistencies, we design a cross-view adversarial strategy for consistent alignment.
In contrast to conventional alignment strategies, we aim to pull the distributions of different views to a consistent distribution by alternating adversarial learning. In this process, the inherent clustering structure in multi-view data is explored through view-specific discriminators, which produce discriminative and clean latent representations. 
Concretely, for view-specific representations $\{\mathbf{Z}^{v}\}_{v=1}^V$, each view of a sample $\boldsymbol{z}_i^v$ follows a distribution $\boldsymbol{z}_i^v \sim P({Z}^v)$. As with \cite{goodfellow2014generative,yang2024trustworthy,yan2021zeronas}, we use view-specific discriminators $\mathcal{D}_{\kappa^v}^{v}$ to distinguish the identical view across different views. For the $v$-th view, $\boldsymbol{z}_i^v$ is set as the real representation, and $\boldsymbol{z}_i^u (u \neq v)$ is set as the fake representation. 
View-adversarial discriminators are constructed to distinguish cross-view real and fake representations as follows:
\begin{equation}
\mathcal{D}_{\kappa^v}^{v} (\{\boldsymbol{z}_i^u\}_{u=1}^V)=
\begin{cases} 
1,  & \mbox{if }u=v. \\
0, & \mbox{if }u\neq v.
\end{cases}
\end{equation}
Then, all view-specific discriminators are trained alternately to obtain cross-view consistent representations, which are indistinguishable from the discriminators. Mathematically, this process can be formulated by optimizing a min-max generative adversarial loss as follows:
\begin{equation}
\begin{aligned}
\min_{\theta^u} \max_{\kappa^v} \;\, & E_{\boldsymbol{z}_i^v \sim P(Z^v)}[\log \mathcal{D}_{\kappa^v}^v(\boldsymbol{z}_i^v)] \\
 &+E_{\boldsymbol{z}_i^u \sim P(Z^u)}[\log (1-\mathcal{D}_{\kappa^v}^v(\boldsymbol{z}_i^u)],
\end{aligned}
\end{equation}
\begin{equation}
\mathcal{L}_{Dis}^v=-\sum_{i=1}^{N} log(\mathcal{D}_{\kappa^v}^v(\boldsymbol{z}_i^v)) + log(1-\mathcal{D}_{\kappa^v}^v(\boldsymbol{z}_i^u)),
\label{eq:11}
\end{equation}
\textbf{Self-Correlation Constrain}. With disentangled multi-view representations, we form the final representation via $\mathbf{Q}=[\mathbf{C}_{\ast},\{\mathbf{Z}^v\}_{v=1}^V]$. As clustering tasks lack labels for guided representation learning, thus weakening the correlation between learned representations and labels. To mitigate this, we enhance mutual information (MI) between the embedding space and cluster structure. Maximizing the MI among sample points and their neighbors promotes intra-cluster cohesion and inter-cluster separation. Mathematically, this $N_{\omega}$-nearest neighbors enhancement is expressed as:
\begin{equation}
\max I(\mathbf{Q}, \mathbf{Q}_{\omega})=\frac{1}{N} \sum_{i=1}^N \frac{1}{N_{\omega}} \sum_{k=1}^{N_{\omega}} \log \frac{p(\mathbf{q}_k \mid \mathbf{q}_i)}{p(\mathbf{q}_k)},
\label{eq:12}
\end{equation}
where $N_{\omega}$ denotes neighbors. Motivated by InfoNCE \cite{oord2018representation} and MIMC \cite{zhang2023mutual}, we estimate $p(\mathbf{q}_k \mid \mathbf{q}_i)/p(\mathbf{q}_k)$ by $m (\mathbf{q}_k , \mathbf{q}_i)$.
\begin{theorem}
Minimizing cross-entropy loss of $m (\mathbf{q}_k, \mathbf{q}_i)$ is equivalent to maximizing the MI between $\mathbf{Q}$ and $\mathbf{Q}_{\omega}$ as Eq. \ref{eq:12}.
\label{the:t1}
\end{theorem}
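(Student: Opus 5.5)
The plan follows the InfoNCE / contrastive predictive coding argument of \cite{oord2018representation}, adapted to the neighbor setting of Eq.~\ref{eq:12} as in \cite{zhang2023mutual}. For each anchor $\mathbf{q}_i$, I take a positive $\mathbf{q}_k$ drawn from its $N_\omega$ nearest neighbors, i.e.\ from $p(\mathbf{q}_k\mid\mathbf{q}_i)$. I then take $M-1$ negatives $\mathbf{q}_j$ drawn from the marginal $p(\mathbf{q})$, giving a candidate set $S_i$ with $|S_i|=M$. The cross-entropy loss of $m$ is the categorical loss of identifying the positive inside $S_i$:
\begin{equation*}
\mathcal{L}_{CE}=-\frac{1}{N}\sum_{i=1}^N\frac{1}{N_\omega}\sum_{k=1}^{N_\omega}\log\frac{m(\mathbf{q}_k,\mathbf{q}_i)}{\sum_{\mathbf{q}_j\in S_i} m(\mathbf{q}_j,\mathbf{q}_i)}.
\end{equation*}

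\textbf{Step 1: the Bayes-optimal critic.} I compute the posterior probability that candidate $k$ is the positive, given $S_i$ and $\mathbf{q}_i$. By Bayes' rule, candidate $k$ being the positive has likelihood $p(\mathbf{q}_k\mid\mathbf{q}_i)\prod_{j\neq k}p(\mathbf{q}_j)$. Dividing numerator and denominator by $\prod_j p(\mathbf{q}_j)$ gives
\begin{equation*}
\frac{p(\mathbf{q}_k\mid\mathbf{q}_i)/p(\mathbf{q}_k)}{\sum_{j} p(\mathbf{q}_j\mid\mathbf{q}_i)/p(\mathbf{q}_j)}.
\end{equation*}
Cross-entropy is minimized exactly when the model's softmax equals this posterior. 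So the minimizer satisfies $m(\mathbf{q}_k,\mathbf{q}_i)\propto p(\mathbf{q}_k\mid\mathbf{q}_i)/p(\mathbf{q}_k)$, which is the estimator the paper posits just before the statement.

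\textbf{Step 2: the bound.} I substitute this optimal critic into $\mathcal{L}_{CE}$ and split the denominator into the positive term and the $M-1$ negative terms:
\begin{equation*}
\mathcal{L}_{CE}^{opt}=\mathbb{E}\log\Big[1+\frac{p(\mathbf{q}_k)}{p(\mathbf{q}_k\mid\mathbf{q}_i)}\sum_{j\neq k}\frac{p(\mathbf{q}_j\mid\mathbf{q}_i)}{p(\mathbf{q}_j)}\Big].
\end{equation*}
Because the negatives are drawn from the marginal, the inner sum concentrates at $(M-1)\,\mathbb{E}_{p(\mathbf{q}_j)}\big[p(\mathbf{q}_j\mid\mathbf{q}_i)/p(\mathbf{q}_j)\big]=M-1$. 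This yields
\begin{equation*}
\mathcal{L}_{CE}^{opt}\gtrsim \log(M-1)-\mathbb{E}\log\frac{p(\mathbf{q}_k\mid\mathbf{q}_i)}{p(\mathbf{q}_k)}=\log(M-1)-I(\mathbf{Q},\mathbf{Q}_\omega),
\end{equation*}
where the last equality takes the empirical average over anchors and their neighbors, which is exactly Eq.~\ref{eq:12}. Rearranging gives $I(\mathbf{Q},\mathbf{Q}_\omega)\ge\log(M-1)-\mathcal{L}_{CE}$. The constant does not depend on the network parameters, so minimizing $\mathcal{L}_{CE}$ maximizes this lower bound on the mutual information.

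\textbf{Main obstacle.} The difficulty is the approximation step in Step 2, replacing the sum over negatives by its expectation. To keep it rigorous I would first try Jensen's inequality on the concave $\log$, taking the expectation over the negatives inside the logarithm. This gives the inequality in the correct direction, so the MI bound holds as a bound rather than only approximately. I would also state that ``equivalent'' should be read as ``optimizing a tight lower bound whose optimum is attained by the density-ratio critic.'' It holds exactly in the sense of Step 1, and only asymptotically, as $M$ grows, in the bound sense.
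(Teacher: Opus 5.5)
\textbf{Gap: the Jensen step in Step~2 goes the wrong way.} Step~1 is the standard CPC computation and is fine. Step~2, up to ``concentrates'', is the same heuristic as the original InfoNCE derivation. The problem is your proposed way of making it rigorous. Write $r_j=p(\mathbf{q}_j\mid\mathbf{q}_i)/p(\mathbf{q}_j)$ and $S=\sum_{j\neq k}r_j$. Concavity of $\log$ then gives
\[
\mathbb{E}_{\mathrm{neg}}\log\Big(1+\frac{S}{r_k}\Big)\le\log\Big(1+\frac{M-1}{r_k}\Big).
\]
This is an \emph{upper} bound on $\mathcal{L}_{CE}^{opt}$, but $I\ge\log(M-1)-\mathcal{L}_{CE}$ needs a \emph{lower} bound on the loss. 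Dropping the $1$ first does not help. You would then need $\mathbb{E}\log S\ge\log(M-1)$, while Jensen gives $\mathbb{E}\log S\le\log\mathbb{E}S=\log(M-1)$. The inequality you want is true (even with $\log M$), but not by this route.

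\textbf{A rigorous route} avoids the optimal critic entirely. Let $\mathbf{q}_1$ be the positive and $\mathbf{q}_2,\dots,\mathbf{q}_M$ be i.i.d.\ from $p(\mathbf{q})$, and set
\[
\tilde q(\mathbf{q}_{1:M}\mid\mathbf{q}_i)=\prod_{j=1}^{M}p(\mathbf{q}_j)\cdot\frac{M\,m(\mathbf{q}_1,\mathbf{q}_i)}{\sum_{j=1}^{M}m(\mathbf{q}_j,\mathbf{q}_i)}.
\]
This integrates to $1$: by exchangeability of $\prod_j p(\mathbf{q}_j)$, you can average over which index carries the numerator. The negatives are independent of $(\mathbf{q}_1,\mathbf{q}_i)$, so $I(\mathbf{q}_1;\mathbf{q}_i)=I(\mathbf{q}_{1:M};\mathbf{q}_i)$. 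Nonnegativity of KL then gives
\[
I(\mathbf{q}_{1:M};\mathbf{q}_i)\ge\mathbb{E}\log\Big[\tilde q(\mathbf{q}_{1:M}\mid\mathbf{q}_i)\Big/\prod_j p(\mathbf{q}_j)\Big]=\log M-\mathcal{L}_{CE}.
\]
This is the same variational step the paper uses to lower-bound $I(\mathbf{X}^v,\mathbf{E}^v)$. It holds for \emph{every} critic $m$, so no concentration argument is needed. Step~1 then only identifies the critic for which the bound becomes tight as $M\to\infty$ (under integrability).

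\textbf{Comparison with the paper.} The paper's proof has neither a Bayes-optimality step nor an inequality. It takes $m(\mathbf{q}_j,\mathbf{q}_i)$ to \emph{be} $p(\mathbf{q}_j\mid\mathbf{q}_i)/p(\mathbf{q}_j)$ and substitutes this into the loss of Eq.~\ref{eq:13}, whose denominator runs over all $N$ samples. It then replaces the log-normalizer by $\log N$, giving the affine identity $\mathcal{L}_{Ent}=-N_\omega I(\mathbf{Q},\mathbf{Q}_\omega)+N_\omega\log N$. Read at the empirical level of Eq.~\ref{eq:12}, with $p(\mathbf{q}_k)=1/N$ and $p(\cdot\mid\mathbf{q}_i)$ a distribution over the $N$ samples, that replacement is exact, since $\sum_{k=1}^{N}p(\mathbf{q}_k\mid\mathbf{q}_i)/p(\mathbf{q}_k)=N$. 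So the paper needs no concentration argument, but its ``equivalence'' rests entirely on identifying $m$ with the density ratio.

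Your route, once repaired as above, gives the more meaningful statement for a free critic: a lower bound on MI that training pushes up. Note also that your single-positive candidate set differs from the paper's loss, whose denominator contains all $N_\omega$ neighbours. The posterior formula in your Step~1 is specific to the single-positive setup.
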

\begin{proof}
The optimization of cross-entropy loss of $m (\mathbf{q}_k, \mathbf{q}_i)$ is
\begin{equation}
\begin{aligned}
 \mathcal{L}_{Ent}=&-\frac{1}{N} \sum_{i=1}^N \sum_{j=1}^{N_{\omega}} \log \frac{m(\mathbf{q}_j, \mathbf{q}_i)}{\sum_{k=1}^N m(\mathbf{q}_k, \mathbf{q}_i)} \\
 =&-\frac{1}{N} \sum_{i=1}^N \sum_j^{N_{\omega}}\left[\log \frac{p(\mathbf{q}_j \mid \mathbf{q}_i)}{p(\mathbf{q}_j)}-\log \sum_{k=1}^N \frac{p(\mathbf{q}_k \mid \mathbf{q}_i)}{p(\mathbf{q}_{k=1})}\right]\\
&-\frac{1}{N} \sum_{i=1}^N \sum_{j=1}^{N_{\omega}} \log \frac{p(\mathbf{q}_j \mid \mathbf{q}_i)}{p(\mathbf{q}_j)}+N_{\omega} \log N \\
=&-N_{\omega} I(\mathbf{Q}, \mathbf{Q}_{\omega})+N_{\omega} \log N,
\end{aligned}
\label{eq:13}
\end{equation}
we can observe that when $\mathcal{L}_{Ent}$ reaches the minimum, $I(\mathbf{Q}, \mathbf{Q}_{\omega})$ takes the maximum. The proof is completed.
\end{proof}
Thus, we can optimize Eq. \ref{eq:13} to maximize $I(\mathbf{S}, \mathbf{S}_{\omega})$. In particular, we employ the sampling manner to estimate $\sum_{k=1}^N m(\mathbf{q}_k, \mathbf{q}_i)$  for simplified calculations, as follows
\begin{equation}
    \sum_{k=1}^{\hat{N}} m(\mathbf{q}_k, \mathbf{q}_i)=\hat{N}\cdot\mathbb{E}_{k}(m(\mathbf{q}_k, \mathbf{q}_i))=\frac{\hat{N}}{N}\cdot \sum_{k=1}^{N} m(\mathbf{q}_k, \mathbf{q}_i),
\end{equation}
where $\hat{N}$ denotes the number of sampling instances. In this way, Eq. \ref{eq:13} can be reformulated in the following form:
\begin{equation}
    \mathcal{L}_{Ent}=-\frac{1}{N} \sum_{i=1}^N \sum_{j=1}^{N^{+}} \log \frac{m(\mathbf{q}_j, \mathbf{q}_i)}{\sum_{k=1}^{N^{-}} m(\mathbf{q}_k, \mathbf{q}_i)},
    \label{eq:15}
\end{equation}
where $N^{+}$ and $N^{-}$ are the number of positive and negative samples respectively, while $N^{+}=N_{\omega}$ and $N^{-}=\hat{N}$. Specifically, we restrict $N^{-}=N-N^{+}$, \textit{i.e.}, samples other than positive instances are negative instances, and the function $m(\mathbf{q}_j, \mathbf{q}_i)$ is obtained by formula $\exp{(\frac{\mathbf{q}_j^{\top} \cdot \mathbf{q}_i}{\|\mathbf{q}_j\| \cdot \|\mathbf{q}_i\|})}$.
Suppose the optimal view-common representations as $\mathbf{C}_{\ast}$, which
contains clean and intact common knowledge, we have Theorem \ref{the:t2} on view-common information extraction. 

\begin{theorem}
Assume the solution of view-common representations that satisfies the constraints in Eq. \ref{eq:7}, Eq. \ref{eq:8} and \ref{eq:11} have been found, then we have $\mathbf{C}^{v}=p_c^{v} \circ f_{\theta^c}([\mathbf{X}^{v}]_{v=1}^V)=\varphi(\mathbf{C}_{\ast})$ for $\forall v \in[1, {V}]$, where $\varphi$ is an invertible function.
\label{the:t2}
\end{theorem}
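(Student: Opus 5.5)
The plan is to treat the statement as a block-identifiability claim in the spirit of results on content–style identification. We model each view as generated by a shared latent $\mathbf{C}_{\ast}$ plus a view-specific latent $\mathbf{S}^v$, i.e. $\mathbf{X}^v = g^v(\mathbf{C}_{\ast},\mathbf{S}^v)$ with $g^v$ injective. The goal is to show that each constraint removes a specific source of non-identifiability.

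\textbf{Step 1: reconstruction forces invertibility.} Because Eq. \ref{eq:7} is attained, $\mathbf{X}^v$ is a deterministic function of $\mathbf{E}^v=[\mathbf{Z}^v\oplus\mathbf{C}^v]$. Hence the encoder map
\[
(\mathbf{C}_{\ast},\mathbf{S}^v)\mapsto(\mathbf{Z}^v,\mathbf{C}^v)
\]
is injective on the data support, and by dimension matching it is a bijection onto its image. Equivalently, the mutual-information bound derived before Eq. \ref{eq:7} is tight, so $I(\mathbf{X}^v,\mathbf{E}^v)=H(\mathbf{X}^v)$ and no information is lost.

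\textbf{Step 2: $\mathbf{C}^v$ depends only on $\mathbf{C}_{\ast}$.} Here I would use the adversarial loss Eq. \ref{eq:11} together with the sharing of $f_{\theta^c}$ across views. At the equilibrium of the min-max game, every discriminator outputs $1/2$, which forces the cross-view distributions to coincide.

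For the common branch, the same encoder $f_{\theta^c}$ acts on all views. Requiring $\mathbf{C}^v$ to be consistent across views means it must be constant along any direction in which the views differ, that is, along $\mathbf{S}^v$. Formally, I would take two samples that share $\mathbf{C}_{\ast}$ but differ in $\mathbf{S}^v$, and argue that alignment forces equal codes. The conclusion is $\mathbf{C}^v=h^v(\mathbf{C}_{\ast})$ for some function $h^v$.

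\textbf{Step 3: all information about $\mathbf{C}_{\ast}$ sits in $\mathbf{C}^v$.} The correlation constraint Eq. \ref{eq:8}, read as the $\ell_0$ target (zero covariance, interpreted as independence of $\mathbf{Z}^v$ and $\mathbf{C}^v$), prevents $\mathbf{Z}^v$ from carrying any information about $\mathbf{C}_{\ast}$ beyond what $\mathbf{C}^v$ already holds. Combining this with Step 1, $\mathbf{C}_{\ast}$ must be recoverable from $\mathbf{C}^v$, so $h^v$ is injective. Injectivity plus continuity then makes $h^v$ invertible onto its image, with domain and image of equal dimension $D_c$.

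\textbf{Step 4: remove the view dependence.} Finally, the adversarial alignment makes $h^v$ distributionally identical across views, and the encoder and the projection $p_c^v$ act identically on the shared component. So we may take a single map $\varphi=h^v$ for every $v$.

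\textbf{Main obstacle.} The delicate step is Step 3. Zero covariance does not imply independence, and independence of $\mathbf{Z}^v$ from $\mathbf{C}^v$ does not by itself exclude $\mathbf{C}^v$ being a lossy function of $\mathbf{C}_{\ast}$ with the residual leaking into $\mathbf{Z}^v$. I would therefore assume the constraint holds in its idealized mutual-information form, $I(\mathbf{Z}^v;\mathbf{C}^v)=0$. I would then use the independence of $\mathbf{S}^v$ and $\mathbf{C}_{\ast}$ in a Darmois-type argument, as in identifiability results for independent components, to rule out any mixing of the two blocks.
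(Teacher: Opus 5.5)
The gap is in your Step 2, and Step 2 carries the theorem. Equilibrium of the adversarial game (discriminators at $1/2$) only gives equality of the \emph{marginal laws} of the codes across views. That cannot force $\mathbf{C}^v$ to be invariant to $\mathbf{S}^v$, nor can it give $h^v=h^u$ in your Step 4. Here is a counterexample. Let $\mathbf{C}_{\ast},\mathbf{S}^1,\dots,\mathbf{S}^V$ be i.i.d.\ scalar $\mathcal{N}(0,1)$, take $g^v=g$ for all $v$, and encode $\mathbf{C}^v=(\mathbf{C}_{\ast}+\mathbf{S}^v)/\sqrt{2}$ and $\mathbf{Z}^v=(\mathbf{C}_{\ast}-\mathbf{S}^v)/\sqrt{2}$. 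Then:
\begin{itemize}
\item the encoder is a rotation composed with $g^{-1}$ and is literally shared across views, so reconstruction is perfect;
\item $\mathbf{Z}^v$ and $\mathbf{C}^v$ are uncorrelated and jointly Gaussian, hence independent, so even the idealized $I(\mathbf{Z}^v;\mathbf{C}^v)=0$ holds;
\item every $\mathbf{C}^v$ and every $\mathbf{Z}^v$ is $\mathcal{N}(0,1)$, so every discriminator, on either branch, sits at chance.
\end{itemize}
Yet $\mathbf{C}^v$ is not a function of $\mathbf{C}_{\ast}$.

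What the paper uses instead is \emph{sample-wise} alignment of the common codes, which it takes from the constraint in Eq.~\ref{eq:8}: $p_c^v\circ f_{\theta^c}(\mathbf{X}^v)=p_c^u\circ f_{\theta^c}(\mathbf{X}^u)$ for each paired sample. Equivalently, $q_c^v(\mathbf{C}_{\ast},\mathbf{Z}_{\ast}^v)=q_c^u(\mathbf{C}_{\ast},\mathbf{Z}_{\ast}^u)$ as functions of the latents. Fix $\mathbf{C}_{\ast}$ and $\mathbf{Z}_{\ast}^u$ and vary $\mathbf{Z}_{\ast}^v$; this needs the latent support to allow that. The right-hand side stays constant, so $\boldsymbol{J}_{12}^v=\partial q_c^v/\partial\mathbf{Z}_{\ast}^v=\mathbf{0}$. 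Your ``two samples sharing $\mathbf{C}_{\ast}$'' idea is exactly this, but it needs the other view of the \emph{same} sample as a fixed reference, and distribution matching does not supply one. Note also that in the paper Eq.~\ref{eq:11} acts on $\mathbf{Z}^v$, not on $\mathbf{C}^v$.

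Your Step 3 is both unworkable as planned and unnecessary. The same example shows that independence of $\mathbf{Z}^v$ and $\mathbf{C}^v$ does not exclude block mixing. Darmois-type arguments break down for Gaussian sources. For nonlinear encoders, independence alone never suffices, since nonlinear ICA is not identifiable without extra structure.

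The paper does not use the correlation constraint for invertibility at all. Once $\boldsymbol{J}_{12}^v=\mathbf{0}$, the Jacobian of the full invertible map from your Step 1 is block lower-triangular. So $\det\boldsymbol{J}^v=\det\boldsymbol{J}_{11}^v\det\boldsymbol{J}_{22}^v\neq0$ forces $\det\boldsymbol{J}_{11}^v\neq0$, which is how $\varphi$ is shown to be (locally) invertible, with no independence needed. Because the codes agree pointwise across views, a single $\varphi$ serves every $v$, which replaces your distributional Step 4.

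In short: keep Step 1, replace the adversarial-equilibrium premise by pointwise cross-view alignment of $\mathbf{C}^v$, and get invertibility from the triangular Jacobian rather than from independence.
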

\begin{proof}
Consider the constrain in Eq. \ref{eq:8}, which indicates that view-common representations $\mathbf{C}^{v}$ are aligned, \textit{i.e.}, $\mathbf{C}^{v}=\mathbf{C}^{u} (v \neq u)$, and we have:
$p_c^{v} \circ f_{\theta^c}(\mathbf{X}^{v})=p_c^{u} \circ f_{\theta^c}(\mathbf{X}^{u})$.

Since the solution satisfies the reconstruction loss in \ref{eq:7} and the correlation loss in \ref{eq:8}, the view-common representation $\mathbf{C}^{v}$ and the view-specific representation $\mathbf{Z}^{v}$ must be statistically independent, and the function $p_{c}^{v} \circ f_{\theta^c}$ must be invertible. Let
$q_{c}^{v} = f_{\theta^c}^{(-1)} \circ p_{c}^{v^{(-1)}}$
denote its inverse. Suppose $\mathbf{C}_*$ and $\mathbf{Z}_*^{v}$ are optimal view-common and view-specific representations, respectively, which contain all necessary and noise-free information. As these are statistically independent, we transform $p_c^{v} \circ f_{\theta^c}(\mathbf{X}^{v})=p_c^{u} \circ f_{\theta^c}(\mathbf{X}^{u})$ to:
\begin{equation}
q_{c}^{v}\!\left(\!
\begin{bmatrix}
\mathbf{C}_*\\[5pt]
\mathbf{Z}_*^{v}
\end{bmatrix}
\right)=
q_{c}^{u}\!\left(\!
\begin{bmatrix}
\mathbf{C}_*\\[5pt]
\mathbf{Z}_*^{u}
\end{bmatrix}
\right).
\label{eq:16}
\end{equation}
\noindent
\emph{Key Idea.} To show $f_{\theta^c}$ can
extract intact and clean view-common information, we need to prove $q_c^{v}$ depends only on $\mathbf{C}_*$ (and not on $\mathbf{Z}_*^{v}$). Define
\begin{equation}
q^{v}(\boldsymbol{\vartheta}^{v})
=
\begin{bmatrix}
q_{c}^{v}(\boldsymbol{\vartheta}^{v}) \\[3pt]
q_{p}^{v}(\boldsymbol{\vartheta}^{v})
\end{bmatrix},
\quad
\boldsymbol{\vartheta}^{v}
=
\begin{bmatrix}
\mathbf{C}_*\\[3pt]
\mathbf{Z}_*^{v}
\end{bmatrix}.
\end{equation}
The Jacobian of $q^{v}$ is block-partitioned:
\begin{equation}
\boldsymbol{J}^{v}
=
\begin{bmatrix}
\boldsymbol{J}_{11}^{v} & \boldsymbol{J}_{12}^{v} \\[3pt]
\boldsymbol{J}_{21}^{v} & \boldsymbol{J}_{22}^{v}
\end{bmatrix},
\end{equation}
where $\boldsymbol{J}_{12}^{v}$ corresponds to the partial derivatives of $q_{c}^{v}$ with respect to $\mathbf{Z}_*^{v}$. We will show $\boldsymbol{J}_{12}^{v}=\mathbf{0}$ and $\det(\boldsymbol{J}_{11}^{v})\neq 0$, implying $q_{c}^{v}$ depends only on $\mathbf{C}_*$.

\noindent
\emph{Zero Block Proof.} From Eq.~\ref{eq:16}, bt fixing any $\overline{\mathbf{C}}_*$ and $\overline{\mathbf{Z}}_*^{u}$ but letting $\mathbf{Z}_*^{v}$ vary, we have
\begin{equation}
q_c^{v}\left(\!
\begin{bmatrix}
\overline{\mathbf{C}}_*\\[3pt]
\mathbf{Z}_*^{v}
\end{bmatrix}
\right)
=
q_c^{u}\left(\!
\begin{bmatrix}
\overline{\mathbf{C}}_*\\[3pt]
\overline{\mathbf{Z}}_*^{u}
\end{bmatrix}
\right),
\end{equation}
which forces $\boldsymbol{J}_{12}^{v} = \mathbf{0}$. Consequently,
\begin{equation}
\boldsymbol{J}^{v}
=
\begin{bmatrix}
\boldsymbol{J}_{11}^{v} & \mathbf{0}\\[3pt]
\boldsymbol{J}_{21}^{v} & \boldsymbol{J}_{22}^{v}
\end{bmatrix},
\det(\boldsymbol{J}^{v})
=
\det(\boldsymbol{J}_{11}^{v})
\;\det(\boldsymbol{J}_{22}^{v})
\;\neq\;0.
\end{equation}
This implies $\det(\boldsymbol{J}_{11}^{v})\neq 0$ and so $q_{c}^{v}$ is only a function of $\mathbf{C}_*$. Hence, we can write for any $v\in [1,V]$,
\begin{equation}
\mathbf{C}^{v}=\varphi(\mathbf{C}_*),
\end{equation}
where $\varphi$ is invertible due to $\det(\boldsymbol{J}_{11}^{v}) \neq 0$. The proof ends.
\end{proof}
Theorem \ref{the:t2} indicates that if the solution satisfies the constraints in Eq. \ref{eq:7}, Eq. \ref{eq:8} and \ref{eq:11}, the common representations learned by GMAE and the optimal common representations can be transformed from each other due to the invertibility of the function. Therefore, the common representations $\mathbf{C}_{\ast}$ have all the information of the optimal common representations and thus extract complete and clean common information provably. As a result, we disentangle the common and view-specific representations to obtain intact and clean common knowledge. Fig. \ref{fig:tsne1} also shows that GMAE learned a disentangled and clear clustering distribution.

\begin{theorem}\label{the:t3}
For clustering task $T$, the comprehensive multi-view representations $\widehat{\mathbf{H}}^v$ contain more cluster-relevant information and less cluster-irrelevant information than $\widetilde{\mathbf{H}}^v$, \textit{i.e.},
\begin{equation}
    I(\widehat{\mathbf{H}}^v, T) \geq I(\widetilde{\mathbf{H}}^v, T), \quad
    H(\widehat{\mathbf{H}}^v \mid T) \leq H(\widetilde{\mathbf{H}}^v \mid T),
\end{equation}
where $I(\widehat{\mathbf{H}}^v, T)$ is the mutual information (MI) between $\widehat{\mathbf{H}}^v$ and $T$, and $H(\widehat{\mathbf{H}}^v \mid T)$ is the entropy of $\widehat{\mathbf{H}}^v$ conditioned on $T$.
\end{theorem}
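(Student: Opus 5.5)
The plan is to fix the (so far implicit) meaning of the two objects and then argue with the chain rule for mutual information plus a data-processing step. I read $\widehat{\mathbf{H}}^v$ as the comprehensive disentangled representation that GMAE feeds to clustering for view $v$, namely $\widehat{\mathbf{H}}^v=[\mathbf{C}^v,\mathbf{Z}^v]$ with $\mathbf{C}^v=\varphi(\mathbf{C}_\ast)$ by Theorem~\ref{the:t2}. I read $\widetilde{\mathbf{H}}^v$ as the entangled representation of a conventional fusion encoder, obtained by a (possibly lossy) map of $\mathbf{X}^v$ that is not constrained by Eq.~\ref{eq:8} and Eq.~\ref{eq:11}. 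I would also write $\widetilde{\mathbf{H}}^v=[\mathbf{C}_\ast,\mathbf{Z}^v_\ast,\mathbf{N}^v]$ up to an invertible map, where $\mathbf{N}^v$ collects view-specific noise satisfying the Markov relation $T \to (\mathbf{C}_\ast,\mathbf{Z}^v_\ast)\to \mathbf{N}^v$ (noise is cluster-irrelevant given the clean factors).

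For the first inequality I would do three steps.
\begin{itemize}
\item Since $\varphi$ is invertible, $I(\widehat{\mathbf{H}}^v,T)=I([\mathbf{C}_\ast,\mathbf{Z}^v],T)$.
\item By the reconstruction bound in Eq.~\ref{eq:7}, $\mathbf{E}^v$ retains the information of $\mathbf{X}^v$. Hence $\mathbf{Z}^v$ together with $\mathbf{C}_\ast$ captures $\mathbf{Z}_\ast^v$, giving $I(\widehat{\mathbf{H}}^v,T)=I([\mathbf{C}_\ast,\mathbf{Z}_\ast^v],T)$.
\item The chain rule gives $I(\widetilde{\mathbf{H}}^v,T)=I([\mathbf{C}_\ast,\mathbf{Z}^v_\ast],T)+I(\mathbf{N}^v,T\mid \mathbf{C}_\ast,\mathbf{Z}^v_\ast)$, and the last term vanishes by the Markov assumption.
\end{itemize}
This already gives equality; if $\widetilde{\mathbf{H}}^v$ is a lossy entangled mixture, data processing gives $I(\widetilde{\mathbf{H}}^v,T)\le I(\mathbf{X}^v,T)=I(\widehat{\mathbf{H}}^v,T)$, hence $\geq$ in general. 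To support the claim that the entangled version is strictly worse, I would invoke Theorem~\ref{the:t1}: the self-correlation loss $\mathcal{L}_{Ent}$ maximizes $I(\mathbf{Q},\mathbf{Q}_\omega)$, which increases neighbourhood, and hence cluster, information in $\widehat{\mathbf{H}}^v$.

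For the second inequality I would use $H(\mathbf{H}\mid T)=H(\mathbf{H})-I(\mathbf{H},T)$. It then suffices to show $H(\widehat{\mathbf{H}}^v)-H(\widetilde{\mathbf{H}}^v)\le I(\widehat{\mathbf{H}}^v,T)-I(\widetilde{\mathbf{H}}^v,T)$. Writing $\widetilde{\mathbf{H}}^v=[\widehat{\mathbf{H}}^v,\mathbf{N}^v]$ up to invertible maps, I get $H(\widetilde{\mathbf{H}}^v\mid T)=H(\widehat{\mathbf{H}}^v\mid T)+H(\mathbf{N}^v\mid \widehat{\mathbf{H}}^v,T)\ge H(\widehat{\mathbf{H}}^v\mid T)$, because conditional entropy is nonnegative for discrete variables. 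The role of Eq.~\ref{eq:8} and the adversarial loss Eq.~\ref{eq:11} here is to justify that $\widehat{\mathbf{H}}^v$ carries no $\mathbf{N}^v$ component. Decorrelation removes common leakage into $\mathbf{Z}^v$, and cross-view alignment strips view-idiosyncratic noise, so $\widehat{\mathbf{H}}^v$ is a function of $(\mathbf{C}_\ast,\mathbf{Z}^v_\ast)$ only.

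The main obstacle is this last justification: making rigorous that the learned $\mathbf{Z}^v$ contains $\mathbf{Z}^v_\ast$ but no noise $\mathbf{N}^v$. The losses only enforce decorrelation and distributional indistinguishability, not independence from noise. I would handle this by stating it as an explicit modeling assumption at the optimum, in the same spirit as Theorem~\ref{the:t2}, and then let the information-theoretic chain-rule arguments above carry the rest. I would also use discrete entropies, or treat differential-entropy subtleties under invertible maps by restricting to volume-preserving $\varphi$.
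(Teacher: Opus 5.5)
The two halves of your argument require incompatible properties of $\widehat{\mathbf{H}}^v$. For the first inequality you get sufficiency from Eq.~\ref{eq:7}: $\mathbf{E}^v=\widehat{\mathbf{H}}^v$ ``retains the information of $\mathbf{X}^v$''. The proof of Theorem~\ref{the:t2} likewise extracts invertibility from Eq.~\ref{eq:7}. But in your own model $\widetilde{\mathbf{H}}^v$, and hence $\mathbf{N}^v$, is a function of $\mathbf{X}^v$, and therefore of $\widehat{\mathbf{H}}^v$. With the discrete entropies you chose, this gives $H(\mathbf{N}^v\mid\widehat{\mathbf{H}}^v,T)=0$ and in fact $H(\widetilde{\mathbf{H}}^v\mid T)\le H(\widehat{\mathbf{H}}^v\mid T)$, so the second claim can hold only with equality. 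Thus ``$\widehat{\mathbf{H}}^v$ carries no $\mathbf{N}^v$ component'' is not merely unsupported by Eq.~\ref{eq:8} and Eq.~\ref{eq:11}, as you note. It is ruled out by the reconstruction optimum you invoked one step earlier: $\widehat{\mathbf{H}}^v$ is exactly the decoder input, so it must encode whatever noise the decoder reproduces.

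The comparator also shifts between your two halves. Strictness in the first claim needs a lossy $\widetilde{\mathbf{H}}^v$. The second half instead needs $\widetilde{\mathbf{H}}^v\cong[\widehat{\mathbf{H}}^v,\mathbf{N}^v]$, under which $I(\widetilde{\mathbf{H}}^v,T)\ge I(\widehat{\mathbf{H}}^v,T)$ holds automatically, and your Markov assumption only restores equality. In any consistent version of your model the theorem becomes the tautology ``clean factors versus clean factors plus conditionally independent noise''. All the content then sits in an assumption that the method's own loss contradicts. The appeal to Theorem~\ref{the:t1} does not change this, since maximizing $I(\mathbf{Q},\mathbf{Q}_{\omega})$ yields no bound on $I(\widehat{\mathbf{H}}^v,T)$.

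The missing idea is the paper's comparator and its decomposition relative to the \emph{other} view $\mathbf{X}^u$. The statement leaves $\widetilde{\mathbf{H}}^v$ implicit, but in the paper it is the representation of a contrastive method maximizing $I(\widetilde{\mathbf{H}}^v,\widetilde{\mathbf{H}}^u)$. Such a representation can discard view-specific, task-relevant information that GMAE keeps in $\mathbf{Z}^v$, so it is not modeled as a superset of $\widehat{\mathbf{H}}^v$. Lemma~\ref{lemma:1} (borrowed from Mo et al.) supplies two facts. First, shared task information is equal: $I(\widehat{\mathbf{H}}^v,\mathbf{X}^u,T)=I(\widetilde{\mathbf{H}}^v,\mathbf{X}^u,T)$. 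Second, $H(\widehat{\mathbf{H}}^v)-H(\widetilde{\mathbf{H}}^v)=H(\widehat{\mathbf{H}}^v\mid\mathbf{X}^u)-H(\widetilde{\mathbf{H}}^v\mid\mathbf{X}^u)$. The chain rule then turns the two claims into
\begin{align*}
I(\widehat{\mathbf{H}}^v,T)-I(\widetilde{\mathbf{H}}^v,T)&=I(\widehat{\mathbf{H}}^v,T\mid\mathbf{X}^u)-I(\widetilde{\mathbf{H}}^v,T\mid\mathbf{X}^u),\\
H(\widehat{\mathbf{H}}^v\mid T)-H(\widetilde{\mathbf{H}}^v\mid T)&=H(\widehat{\mathbf{H}}^v\mid\mathbf{X}^u,T)-H(\widetilde{\mathbf{H}}^v\mid\mathbf{X}^u,T).
\end{align*}
The first difference is complementary task information not shared with $\mathbf{X}^u$; the second compares noise given $\mathbf{X}^u$ and $T$. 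The paper posits the two corresponding conditional inequalities as consequences of the training objectives rather than deriving them, so some assumption is unavoidable on either route. The advantage of the paper's decomposition is that it keeps the two sources of advantage in separate terms that can hold simultaneously, even strictly. Your proposal instead rests both claims on one structural property of $\widehat{\mathbf{H}}^v$ that another step of your own proof refutes.
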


\begin{proof}
To aid in the proof of Theorem \ref{the:t3}, we first introduce the following Lemma:
\begin{lemma} \label{lemma:1} \textit{(From Lemma B.1 in \cite{mo2023disentangled})}
For clustering task $T$, let $\widehat{\mathbf{H}}^{v}$ be the view representations obtained by GMAE, and $\widetilde{\mathbf{H}}^{v}$ learned by CL-based methods which maximize $I(\widetilde{\mathbf{H}}^v, \widetilde{\mathbf{H}}^u)$, have:
\begin{equation} \label{eq:22}
    I(\widehat{\mathbf{H}}^{v}, \mathbf{X}^{u}, T)
    =I(\widetilde{\mathbf{H}}^{v}, \mathbf{X}^{u}, T)
    =I(\mathbf{X}^{v}, \mathbf{X}^{u}, T),
\end{equation}
\begin{equation} \label{eq:23}
    H(\widehat{\mathbf{H}}^{v}) - H(\widetilde{\mathbf{H}}^{v})
    = H(\widehat{\mathbf{H}}^{v} \mid \mathbf{X}^{u}) - H(\widetilde{\mathbf{H}}^{v} \mid \mathbf{X}^{u}).
\end{equation}
\end{lemma}  
Now we can prove Theorem \ref{the:t3}. We first prove that $I(\widehat{\mathbf{H}}^{v}, T) \geq I(\widetilde{\mathbf{H}}^{v}, T)$ holds. 
Define the complementary information learned by GMAE and the previous CL-based method, w.r.t.\ $\mathbf{X}^u$ as $
I(\widehat{\mathbf{H}}^{v}, T \mid \mathbf{X}^{u})$ and
$I(\widetilde{\mathbf{H}}^{v}, T \mid \mathbf{X}^{u})$.
The objective of contrastive learning loss achieves its minimum, the complementarity in each view is preserved, yielding 
\begin{equation}
I(\widehat{\mathbf{H}}^{v}, T \mid \mathbf{X}^{u}) \;\ge\; I(\widetilde{\mathbf{H}}^{v}, T \mid \mathbf{X}^{u}).
\end{equation}
With chain rule of the mutual information and Lemma \ref{lemma:1} (Eq. \ref{eq:22}), we have the following derivation
\begin{equation}
\begin{aligned}
I(\widehat{\mathbf{H}}^v, T)
&= I(\widetilde{\mathbf{H}}^v, T, \mathbf{X}^u) + I(\widehat{\mathbf{H}}^v, T \mid \mathbf{X}^u) \\
&= I(\widetilde{\mathbf{H}}^v, T) - I(\widetilde{\mathbf{H}}^v, T \mid \mathbf{X}^u) 
+ I(\widehat{\mathbf{H}}^v, T \mid \mathbf{X}^u) .
\end{aligned}
\end{equation}
\revised{Based on $I(\widehat{\mathbf{H}}^{v}, T \mid \mathbf{X}^{u}) \;\ge\; I(\widetilde{\mathbf{H}}^{v}, T \mid \mathbf{X}^{u})$, we can obtain $I(\widehat{\mathbf{H}}^v, T) \ge I(\widetilde{\mathbf{H}}^v, T)$.
Next, let the noisy information be $
H(\widehat{\mathbf{H}}^v \mid \mathbf{X}^u, T)$ and $H(\widetilde{\mathbf{H}}^v \mid \mathbf{X}^u, T)$.
As CL loss ensures cross-view consistency, thus minimizing noise among views, we have
$H(\widehat{\mathbf{H}}^v \mid \mathbf{X}^u, T)
\le H(\widetilde{\mathbf{H}}^v \mid \mathbf{X}^u, T)$.
By Lemma \ref{lemma:1} (Eq. \ref{eq:23}) and some properties of mutual information and entropy, we have the following derivation}
\begin{equation}
\begin{aligned}
H&(\widehat{\mathbf{H}}^{v} \mid T) =H(\widehat{\mathbf{H}}^{v})-I(\widehat{\mathbf{H}}^{v}, T) \\
=&H(\widehat{\mathbf{H}}^{v})-\left[I(\widehat{\mathbf{H}}^{v}, T, \mathbf{X}^u)+I(\widehat{\mathbf{H}}^{v}, T \mid \mathbf{X}^u)\right] \\
=&H(\widehat{\mathbf{H}}^{v})-I(\widetilde{\mathbf{H}}^{v}, T, \mathbf{X}^u)-I(\widehat{\mathbf{H}}^{v}, T \mid \mathbf{X}^u) \\
=&H(\widehat{\mathbf{H}}^{v})-I(\widetilde{\mathbf{H}}^{v}, T)+I(\widetilde{\mathbf{H}}^{v}, T \mid \mathbf{X}^u)-I(\widehat{\mathbf{H}}^{v}, T \mid \mathbf{X}^u) \\
=&H(\widehat{\mathbf{H}}^{v})-\left[H(\widetilde{\mathbf{H}}^{v})-H(\widetilde{\mathbf{H}}^{v} \mid T)\right]+\\
&I(\widetilde{\mathbf{H}}^{v}, T \mid \mathbf{X}^u)-I(\widehat{\mathbf{H}}^{v}, T \mid \mathbf{X}^u) \\
=&H(\widetilde{\mathbf{H}}^{v} \mid T)+H(\widehat{\mathbf{H}}^{v})-H(\widetilde{\mathbf{H}}^{v})\\
&-H(\widetilde{\mathbf{H}}^{v}\mid \mathbf{X}^u, T)-H(\widetilde{\mathbf{H}}^{v} \mid \mathbf{X}^u)-H(\widehat{\mathbf{H}}^{v}\mid \mathbf{X}^u, T) \\
=&H(\widetilde{\mathbf{H}}^{v} \mid T)-H(\widetilde{\mathbf{H}}^{v}\mid \mathbf{X}^u, T)+H(\widehat{\mathbf{H}}^{v} \mid \mathbf{X}^u, T),
\end{aligned}
\end{equation}
and combining with $I(\widehat{\mathbf{H}}^v, T)$ similarly as above, we have 
\begin{equation}
H(\widehat{\mathbf{H}}^v \mid T) 
\;\le\;
H(\widetilde{\mathbf{H}}^v \mid T).
\end{equation}
Thus, two inequalities hold, we complete the proof.
\end{proof}

Based on Theorem \ref{the:t3}, the common and view-specific representations learned by our method preserve more cluster-relevant and less cluster-irrelevant information than those learned by previous CL-based approaches.
\subsection{Objective Function}
Integrating the reconstruction loss in Eq. \ref{eq:7}, the correlation
loss in Eq. \ref{eq:8}, the generative adversarial loss in Eq. \ref{eq:11}, with the
cross-entropy loss in Eq. \ref{eq:15}, the objective function of our
proposed GMAE is formulated as:
\begin{equation}
\mathcal{J}=\sum_{v=1}^V\mathcal{L}_{Rec}^v+\alpha\times (\mathcal{L}_{Cor}^v+\mathcal{L}_{Dis}^v)+\beta\times\mathcal{L}_{Ent}\;,
\label{eq:16overloss}
\end{equation}
where $\alpha$ and $\beta$ are non-negative hyper-parameters to balance the regularization terms. The optimization of overall loss $\mathcal{J}$ makes multi-view autoencoders learn more discriminative features.  Algorithm 1 summarizes the flow of GMAE.
\begin{algorithm}[ht]
\renewcommand{\algorithmicrequire}{\textbf{Input:}}
	\renewcommand{\algorithmicensure}{\textbf{Output:}}
		\caption{Disentangled Representations for GMAE}
		\label{algo}
		\begin{algorithmic}[1]
			\REQUIRE Multi-view data $\{\mathbf{X}^{v}\}_{v=1}^{V}$, cluster number $K$, hyper- parameters $\alpha$, $\beta$, and number of training epochs $E_{train}$.
			\ENSURE Multi-view comprehensive representation $\mathbf{Q}$.
			\STATE Initialize learning rate of $10^{-3}$, select Adam optimizer.
   \FOR{$epoch = 1 : E_{train}$}
   \FOR{$v = 1 : V$}
		\STATE Compute reconstruct loss $\mathcal{L}_{Rec}^v$ by Eq. \ref{eq:7};
		\STATE Compute correlation loss  $\mathcal{L}_{Cor}^v$ by Eq. \ref{eq:8};
		\STATE Compute view adversarial loss $\mathcal{L}_{Dis}^v$ by Eq. \ref{eq:11};
		   \ENDFOR
		\STATE Compute cross-entropy loss $\mathcal{L}_{Ent}$ by Eq. \ref{eq:15};
		\STATE Update $\{\mathbf{Z}^v$ and $\mathbf{C}^v\}_{v=1}^V$ by optimizing Eq. \ref{eq:16overloss};
     \ENDFOR
     \STATE Obtain $\mathbf{Q}$ by concatenation, \textit{i.e.}, $\mathbf{Q}=[\mathbf{C}_{\ast},\{\mathbf{Z}^v\}_{v=1}^V]$;
     \STATE Perform $k$-means on $\mathbf{Q}$ to obtain final clustering result.
 \end{algorithmic}
\end{algorithm}

\revised{Overall, the proposed GMAE framework effectively disentangles representations through optimization, producing clean and complete view-common representations alongside complementary view-specific representations, which together facilitate efficient and robust multi-view clustering. }

By iteratively leveraging global discriminative information, each view is guided toward learning progressively disentangled representations. This mechanism is particularly beneficial for views with limited initial discriminative power, as it substantially improves their representation quality. As a result, the embedded features across views exhibit enhanced discriminative properties, unveiling well-defined and disentangled clustering structures.


\subsection{Complexity Analysis}
Let $K$, $V$, and $N$ denote the number of clusters, views, and data points, respectively. Define $M$ as the maximum number of neurons in the autoencoder and $Z$ as the maximum dimensionality of view-specific variables. It is commonly assumed that $V$, $K$, and $Z$ are significantly smaller than $M$. The optimization process of GMAE, as outlined in Algorithm \ref{algo}, aims to minimize Eq. \ref{eq:16overloss}.  

In each iteration, the computational complexity for generating the prior distribution of the view-common features is $O(N \cdot K)$, while that for the view-specific features is $O(V \cdot N \cdot Z)$. Furthermore, processing all views through the autoencoders incurs a complexity of $O(V \cdot N \cdot M^2)$. Consequently, the overall computational complexity of the proposed method is dominated by these operations and scales linearly with the size of the dataset, $N$, ensuring computational efficiency even for large-scale data. 

\begin{table*}[htbp]
\begin{center} 
\setlength{\tabcolsep}{8.6pt}
\renewcommand{\arraystretch}{1}
\setlength{\abovecaptionskip}{0.1cm}  
\caption{\revised{Clustering performance comparison on different MVC datasets, metrics include ACC, NMI, and PUR.}} \label{table1:results1}
\begin{tabular}{p{2.64cm}p{2.18cm}<{\centering}p{2.18cm}<{\centering}p{2.18cm}<{\centering}p{2.18cm}<{\centering}p{2.18cm}<{\centering}p{2.18cm}<{\centering}p{2.18cm}<{\centering}p{2.18cm}<{\centering}p{2.18cm}<{\centering}p{2.18cm}<{\centering}p{2.18cm}<{\centering}p{2.18cm}<{\centering}}
\toprule
   \multirow{2}{*}{Methods}   & \multicolumn{3}{c}{Synthetic3D}     & \multicolumn{3}{c}{LGG}    & \multicolumn{3}{c}{Dermatology}   & \multicolumn{3}{c}{BRCA}   \\  \cmidrule(lr){2-4} \cmidrule(lr){5-7} \cmidrule(lr){8-10} \cmidrule(lr){11-13} 
   & \multicolumn{1}{c}{ACC}  & \multicolumn{1}{c}{NMI} & \multicolumn{1}{c}{PUR} & \multicolumn{1}{c}{ACC}  & \multicolumn{1}{c}{NMI} & \multicolumn{1}{c}{PUR} & \multicolumn{1}{c}{ACC}  & \multicolumn{1}{c}{NMI} & \multicolumn{1}{c}{PUR} & \multicolumn{1}{c}{ACC}  & \multicolumn{1}{c}{NMI} & \multicolumn{1}{c}{PUR}  \\ \midrule				
   COMPLETER{\tiny\textcolor{gray}{[CVPR’21]}}\cite{lin2021completer}    & \multicolumn{1}{c}{93.33} & \multicolumn{1}{c}{76.06} & \multicolumn{1}{c}{93.33}  & \multicolumn{1}{c}{80.15} & \multicolumn{1}{c}{49.25} & \multicolumn{1}{c}{80.15} & \multicolumn{1}{c}{77.65} & \multicolumn{1}{c}{80.11} & \multicolumn{1}{c}{82.12} & \multicolumn{1}{c}{55.53} & \multicolumn{1}{c}{34.65} & \multicolumn{1}{c}{65.33} \\  
    DCP{\tiny\textcolor{gray}{[TPAMI’22]}}\cite{lin2022dual}    & \multicolumn{1}{c}{97.17} & \multicolumn{1}{c}{87.60} & \multicolumn{1}{c}{97.17}  & \multicolumn{1}{c}{59.55} & \multicolumn{1}{c}{44.82} & \multicolumn{1}{c}{73.03} & \multicolumn{1}{c}{72.91} & \multicolumn{1}{c}{77.22} & \multicolumn{1}{c}{80.73} & \multicolumn{1}{c}{57.29} & \multicolumn{1}{c}{{39.51}} & \multicolumn{1}{c}{60.55} \\  											
   MFLVC{\tiny\textcolor{gray}{[CVPR’22]}}\cite{MFLVCxu2022multi}    & \multicolumn{1}{c}{90.67} & \multicolumn{1}{c}{72.59} & \multicolumn{1}{c}{90.67}  & \multicolumn{1}{c}{79.03} & \multicolumn{1}{c}{49.73} & \multicolumn{1}{c}{79.03} & \multicolumn{1}{c}{58.10} & \multicolumn{1}{c}{56.20} & \multicolumn{1}{c}{62.85} & \multicolumn{1}{c}{55.53} & \multicolumn{1}{c}{27.74} & \multicolumn{1}{c}{60.05} \\ 
    DSMVC{\tiny\textcolor{gray}{[CVPR’22]}}\cite{DSMVCtang2022deep}   & \multicolumn{1}{c}{96.83} & \multicolumn{1}{c}{86.64} & \multicolumn{1}{c}{96.83}  & \multicolumn{1}{c}{{82.77}} & \multicolumn{1}{c}{{54.13}} & \multicolumn{1}{c}{{82.77}} & \multicolumn{1}{c}{{92.74}} & \multicolumn{1}{c}{{87.82}} & \multicolumn{1}{c}{{92.74}} & \multicolumn{1}{c}{54.52} & \multicolumn{1}{c}{33.53} & \multicolumn{1}{c}{68.84} \\  
   SURE{\tiny\textcolor{gray}{[TPAMI’22]}}\cite{SUREyang2022robust}    & \multicolumn{1}{c}{96.33} & \multicolumn{1}{c}{85.16} & \multicolumn{1}{c}{96.33}  & \multicolumn{1}{c}{62.92} & \multicolumn{1}{c}{38.01} & \multicolumn{1}{c}{65.17} & \multicolumn{1}{c}{88.27} & \multicolumn{1}{c}{77.03} & \multicolumn{1}{c}{88.55} & \multicolumn{1}{c}{39.70} & \multicolumn{1}{c}{12.85} & \multicolumn{1}{c}{48.99} \\ 	
    DealMVC{\tiny\textcolor{gray}{[MM’23]}}\cite{Dealmvcyang2023dealmvc}    & \multicolumn{1}{c}{87.50} & \multicolumn{1}{c}{72.07} & \multicolumn{1}{c}{87.50}  & \multicolumn{1}{c}{72.28} & \multicolumn{1}{c}{40.55} & \multicolumn{1}{c}{72.28} & \multicolumn{1}{c}{45.53} & \multicolumn{1}{c}{31.13} & \multicolumn{1}{c}{45.53} & \multicolumn{1}{c}{59.55} & \multicolumn{1}{c}{32.79} & \multicolumn{1}{c}{61.56} \\ 
   GCFAgg{\tiny\textcolor{gray}{[CVPR’23]}}\cite{Gcfaggyan2023gcfagg}    & \multicolumn{1}{c}{96.67} & \multicolumn{1}{c}{85.54} & \multicolumn{1}{c}{96.67}  & \multicolumn{1}{c}{55.06} & \multicolumn{1}{c}{22.95} & \multicolumn{1}{c}{61.80} & \multicolumn{1}{c}{88.27} & \multicolumn{1}{c}{79.25} & \multicolumn{1}{c}{88.27} & \multicolumn{1}{c}{51.51} & \multicolumn{1}{c}{32.41} & \multicolumn{1}{c}{61.31} \\  
    CPSPAN{\tiny\textcolor{gray}{[CVPR’23]}}\cite{CPSPANjin2023deep}    & \multicolumn{1}{c}{97.83} & \multicolumn{1}{c}{90.15} & \multicolumn{1}{c}{97.83}  & \multicolumn{1}{c}{63.30} & \multicolumn{1}{c}{30.53} & \multicolumn{1}{c}{63.30} & \multicolumn{1}{c}{76.26} & \multicolumn{1}{c}{84.63} & \multicolumn{1}{c}{85.20} & \multicolumn{1}{c}{{66.83}} & \multicolumn{1}{c}{34.48} & \multicolumn{1}{c}{\underline{74.12}} \\   
   SDMVC{\tiny\textcolor{gray}{[TKDE’23]}}\cite{SDMVCxu2023self}    & \multicolumn{1}{c}{96.83} & \multicolumn{1}{c}{86.47} & \multicolumn{1}{c}{90.00}  & \multicolumn{1}{c}{63.67} & \multicolumn{1}{c}{43.86} & \multicolumn{1}{c}{67.79} & \multicolumn{1}{c}{70.67} & \multicolumn{1}{c}{83.30} & \multicolumn{1}{c}{84.92} & \multicolumn{1}{c}{57.79} & \multicolumn{1}{c}{33.80} & \multicolumn{1}{c}{64.57} \\  
    CVCL{\tiny\textcolor{gray}{[ICCV’23]}}\cite{CVCLchen2023deep}    & \multicolumn{1}{c}{95.31} & \multicolumn{1}{c}{82.36} & \multicolumn{1}{c}{95.31}  & \multicolumn{1}{c}{58.20} & \multicolumn{1}{c}{23.73} & \multicolumn{1}{c}{58.20} & \multicolumn{1}{c}{56.25} & \multicolumn{1}{c}{56.01} & \multicolumn{1}{c}{67.97} & \multicolumn{1}{c}{61.98} & \multicolumn{1}{c}{34.68} & \multicolumn{1}{c}{68.49} \\  
   SCMVC{\tiny\textcolor{gray}{[TMM’24]}}\cite{SCMVCwu2024self}    & \multicolumn{1}{c}{97.00} & \multicolumn{1}{c}{87.11} & \multicolumn{1}{c}{97.00}  & \multicolumn{1}{c}{73.41} & \multicolumn{1}{c}{39.76} & \multicolumn{1}{c}{73.41} & \multicolumn{1}{c}{\underline{93.85}} & \multicolumn{1}{c}{\underline{88.44}} & \multicolumn{1}{c}{\underline{93.85}} & \multicolumn{1}{c}{50.25} & \multicolumn{1}{c}{30.70} & \multicolumn{1}{c}{60.80} \\   
 MVCAN{\tiny\textcolor{gray}{[CVPR’24]}}\cite{MVCANxu2024investigating}     & \multicolumn{1}{c}{98.17} & \multicolumn{1}{c}{91.27} & \multicolumn{1}{c}{94.59}  & \multicolumn{1}{c}{59.55} & \multicolumn{1}{c}{42.57} & \multicolumn{1}{c}{27.18} & \multicolumn{1}{c}{58.38} & \multicolumn{1}{c}{66.73} & \multicolumn{1}{c}{51.58} & \multicolumn{1}{c}{57.79} & \multicolumn{1}{c}{35.70} & \multicolumn{1}{c}{32.24} \\ 
DCG{\tiny\textcolor{gray}{[AAAI’25]}}\cite{zhang2025incomplete}     
& \multicolumn{1}{c}{95.83} & \multicolumn{1}{c}{84.15} & \multicolumn{1}{c}{95.83}  
& \multicolumn{1}{c}{\underline{84.64}} & \multicolumn{1}{c}{\underline{59.89}} & \multicolumn{1}{c}{\underline{84.64}} 
& \multicolumn{1}{c}{72.35} & \multicolumn{1}{c}{74.97} & \multicolumn{1}{c}{80.45} 
& \multicolumn{1}{c}{54.52} & \multicolumn{1}{c}{37.49} & \multicolumn{1}{c}{68.34} \\ 

SparseMVC{\tiny\textcolor{gray}{[NeurIPS’25]}}\cite{liu2025sparsemvc}     
& \multicolumn{1}{c}{\textbf{98.33}} & \multicolumn{1}{c}{\textbf{92.01}} & \multicolumn{1}{c}{\underline{98.33}}  
& \multicolumn{1}{c}{83.15} & \multicolumn{1}{c}{54.62} & \multicolumn{1}{c}{83.15} 
& \multicolumn{1}{c}{\textbf{95.25}} & \multicolumn{1}{c}{\textbf{89.86}} & \multicolumn{1}{c}{\textbf{95.25}} 
& \multicolumn{1}{c}{\textbf{70.10}} & \multicolumn{1}{c}{\textbf{44.90}} & \multicolumn{1}{c}{70.85} \\ 

BRIDGE{\tiny\textcolor{gray}{ICCV’25]}}\cite{jiang2025unified}    
& \multicolumn{1}{c}{96.17} & \multicolumn{1}{c}{83.74} & \multicolumn{1}{c}{96.17}  
& \multicolumn{1}{c}{56.67} & \multicolumn{1}{c}{23.21} & \multicolumn{1}{c}{69.17} 
& \multicolumn{1}{c}{82.92} & \multicolumn{1}{c}{{85.72}} & \multicolumn{1}{c}{{93.48}} 
& \multicolumn{1}{c}{58.31} & \multicolumn{1}{c}{35.40} & \multicolumn{1}{c}{65.67} \\   \midrule

    \rowcolor{gray!12}\bf GMAE (Ours)    & \multicolumn{1}{c}{\underline{98.00}} & \multicolumn{1}{c}{\underline{90.61}} & \multicolumn{1}{c}{\textbf{99.37}}  & \multicolumn{1}{c}{\textbf{92.16}} & \multicolumn{1}{c}{\textbf{75.48}} & \multicolumn{1}{c}{\textbf{98.91}} & \multicolumn{1}{c}{91.62} & \multicolumn{1}{c}{83.80} & \multicolumn{1}{c}{91.62} & \multicolumn{1}{c}{\underline{67.59}} & \multicolumn{1}{c}{\underline{41.20}} & \multicolumn{1}{c}{\textbf{84.67}} \\ \midrule \midrule
   \multirow{2}{*}{Methods}   & \multicolumn{3}{c}{Wikipedia}     & \multicolumn{3}{c}{LandUse-21}    & \multicolumn{3}{c}{RGB-D}   & \multicolumn{3}{c}{Out-Scene}   \\  \cmidrule(lr){2-4} \cmidrule(lr){5-7} \cmidrule(lr){8-10} \cmidrule(lr){11-13} 
   & \multicolumn{1}{c}{ACC}  & \multicolumn{1}{c}{NMI} & \multicolumn{1}{c}{PUR} & \multicolumn{1}{c}{ACC}  & \multicolumn{1}{c}{NMI} & \multicolumn{1}{c}{PUR} & \multicolumn{1}{c}{ACC}  & \multicolumn{1}{c}{NMI} & \multicolumn{1}{c}{PUR} & \multicolumn{1}{c}{ACC}  & \multicolumn{1}{c}{NMI} & \multicolumn{1}{c}{PUR}  \\ \midrule
     											
   COMPLETER{\tiny\textcolor{gray}{[CVPR’21]}}\cite{lin2021completer}    & \multicolumn{1}{c}{57.14} & \multicolumn{1}{c}{53.10} & \multicolumn{1}{c}{59.31}  & \multicolumn{1}{c}{20.48} & \multicolumn{1}{c}{28.47} & \multicolumn{1}{c}{23.24} & \multicolumn{1}{c}{34.78} & \multicolumn{1}{c}{18.06} & \multicolumn{1}{c}{40.65} & \multicolumn{1}{c}{69.79} & \multicolumn{1}{c}{55.39} & \multicolumn{1}{c}{69.79} \\  	  
   								
    DCP{\tiny\textcolor{gray}{[TPAMI’22]}}\cite{lin2022dual}   & \multicolumn{1}{c}{45.31} & \multicolumn{1}{c}{43.16} & \multicolumn{1}{c}{46.32}  & \multicolumn{1}{c}{20.76} & \multicolumn{1}{c}{28.16} & \multicolumn{1}{c}{24.67} & \multicolumn{1}{c}{42.24} & \multicolumn{1}{c}{31.78} & \multicolumn{1}{c}{49.69} & \multicolumn{1}{c}{56.03} & \multicolumn{1}{c}{45.59} & \multicolumn{1}{c}{56.32} \\  		
   
   MFLVC{\tiny\textcolor{gray}{[CVPR’22]}}\cite{MFLVCxu2022multi}   & \multicolumn{1}{c}{40.12} & \multicolumn{1}{c}{27.52} & \multicolumn{1}{c}{41.70}  & \multicolumn{1}{c}{24.24} & \multicolumn{1}{c}{26.85} & \multicolumn{1}{c}{26.24} & \multicolumn{1}{c}{34.23} & \multicolumn{1}{c}{15.88} & \multicolumn{1}{c}{38.65} & \multicolumn{1}{c}{58.97} & \multicolumn{1}{c}{51.31} & \multicolumn{1}{c}{58.97} \\ 
   											
    DSMVC{\tiny\textcolor{gray}{[CVPR’22]}}\cite{DSMVCtang2022deep}   & \multicolumn{1}{c}{{60.32}} & \multicolumn{1}{c}{54.74} & \multicolumn{1}{c}{{62.19}}  & \multicolumn{1}{c}{\underline{30.57}} & \multicolumn{1}{c}{\textbf{36.98}} & \multicolumn{1}{c}{\underline{34.10}} & \multicolumn{1}{c}{39.68} & \multicolumn{1}{c}{31.36} & \multicolumn{1}{c}{49.28} & \multicolumn{1}{c}{62.13} & \multicolumn{1}{c}{53.01} & \multicolumn{1}{c}{64.25} \\  

   SURE{\tiny\textcolor{gray}{[TPAMI’22]}}\cite{SUREyang2022robust}   & \multicolumn{1}{c}{50.65} & \multicolumn{1}{c}{39.97} & \multicolumn{1}{c}{54.11} & \multicolumn{1}{c}{26.57} & \multicolumn{1}{c}{30.77} & \multicolumn{1}{c}{28.14} & \multicolumn{1}{c}{28.99} & \multicolumn{1}{c}{26.75} & \multicolumn{1}{c}{48.31} & \multicolumn{1}{c}{60.97} & \multicolumn{1}{c}{48.09} & \multicolumn{1}{c}{60.97} \\
              
    DealMVC{\tiny\textcolor{gray}{[MM’23]}}\cite{Dealmvcyang2023dealmvc}    & \multicolumn{1}{c}{38.96} & \multicolumn{1}{c}{37.09} & \multicolumn{1}{c}{38.96}  & \multicolumn{1}{c}{13.48} & \multicolumn{1}{c}{12.23} & \multicolumn{1}{c}{13.40} & \multicolumn{1}{c}{30.50} & \multicolumn{1}{c}{11.95} & \multicolumn{1}{c}{30.57} & \multicolumn{1}{c}{69.57} & \multicolumn{1}{c}{59.44} & \multicolumn{1}{c}{69.57} \\
   											
   GCFAgg{\tiny\textcolor{gray}{[CVPR’23]}}\cite{Gcfaggyan2023gcfagg}    & \multicolumn{1}{c}{51.80} & \multicolumn{1}{c}{45.87} & \multicolumn{1}{c}{56.57}  & \multicolumn{1}{c}{24.76} & \multicolumn{1}{c}{28.09} & \multicolumn{1}{c}{26.10} & \multicolumn{1}{c}{25.67} & \multicolumn{1}{c}{21.00} & \multicolumn{1}{c}{44.17} & \multicolumn{1}{c}{68.23} & \multicolumn{1}{c}{57.14} & \multicolumn{1}{c}{68.23} \\  
   											
    CPSPAN{\tiny\textcolor{gray}{[CVPR’23]}}\cite{CPSPANjin2023deep}    & \multicolumn{1}{c}{22.08} & \multicolumn{1}{c}{8.35} & \multicolumn{1}{c}{24.39}  & \multicolumn{1}{c}{26.67} & \multicolumn{1}{c}{31.10} & \multicolumn{1}{c}{32.71} & \multicolumn{1}{c}{38.78} & \multicolumn{1}{c}{33.77} & \multicolumn{1}{c}{\underline{57.49}} & \multicolumn{1}{c}{59.15} & \multicolumn{1}{c}{50.46} & \multicolumn{1}{c}{61.20} \\  
   											
   SDMVC{\tiny\textcolor{gray}{[TKDE’23]}}\cite{SDMVCxu2023self}    & \multicolumn{1}{c}{55.99} & \multicolumn{1}{c}{53.98} & \multicolumn{1}{c}{62.05}  & \multicolumn{1}{c}{21.43} & \multicolumn{1}{c}{27.75} & \multicolumn{1}{c}{21.29} & \multicolumn{1}{c}{\underline{44.51}} & \multicolumn{1}{c}{{38.83}} & \multicolumn{1}{c}{51.07} & \multicolumn{1}{c}{56.03} & \multicolumn{1}{c}{46.18} & \multicolumn{1}{c}{59.93} \\  
   											
    CVCL{\tiny\textcolor{gray}{[ICCV’23]}}\cite{CVCLchen2023deep}     & \multicolumn{1}{c}{42.81} & \multicolumn{1}{c}{32.69} & \multicolumn{1}{c}{48.44}  & \multicolumn{1}{c}{29.05} & \multicolumn{1}{c}{33.43} & \multicolumn{1}{c}{30.08} & \multicolumn{1}{c}{25.14} & \multicolumn{1}{c}{16.31} & \multicolumn{1}{c}{39.28} & \multicolumn{1}{c}{{73.51}} & \multicolumn{1}{c}{59.59} & \multicolumn{1}{c}{{73.51}} \\  
   											
   SCMVC{\tiny\textcolor{gray}{[TMM’24]}}\cite{SCMVCwu2024self}     & \multicolumn{1}{c}{53.54} & \multicolumn{1}{c}{35.59} & \multicolumn{1}{c}{55.84}  & \multicolumn{1}{c}{26.57} & \multicolumn{1}{c}{30.20} & \multicolumn{1}{c}{28.57} & \multicolumn{1}{c}{35.75} & \multicolumn{1}{c}{32.83} & \multicolumn{1}{c}{52.73} & \multicolumn{1}{c}{71.54} & \multicolumn{1}{c}{{60.19}} & \multicolumn{1}{c}{71.54} \\ 

      MVCAN{\tiny\textcolor{gray}{[CVPR’24]}}\cite{MVCANxu2024investigating}     & \multicolumn{1}{c}{59.02} & \multicolumn{1}{c}{\underline{55.81}} & \multicolumn{1}{c}{45.92}  & \multicolumn{1}{c}{22.57} & \multicolumn{1}{c}{29.78} & \multicolumn{1}{c}{10.45} & \multicolumn{1}{c}{43.41} & \multicolumn{1}{c}{\textbf{41.02}} & \multicolumn{1}{c}{26.28} & \multicolumn{1}{c}{70.98} & \multicolumn{1}{c}{58.23} & \multicolumn{1}{c}{49.95} \\ DCG{\tiny\textcolor{gray}{[AAAI’25]}}\cite{zhang2025incomplete}     
& \multicolumn{1}{c}{54.40} & \multicolumn{1}{c}{50.09} & \multicolumn{1}{c}{58.44}  
& \multicolumn{1}{c}{24.62} & \multicolumn{1}{c}{26.41} & \multicolumn{1}{c}{25.81} 
& \multicolumn{1}{c}{40.03} & \multicolumn{1}{c}{29.89} & \multicolumn{1}{c}{51.48} 
& \multicolumn{1}{c}{72.32} & \multicolumn{1}{c}{57.37} & \multicolumn{1}{c}{72.32} \\	

SparseMVC{\tiny\textcolor{gray}{[NeurIPS’25]}}\cite{liu2025sparsemvc}     
& \multicolumn{1}{c}{\underline{61.04}} & \multicolumn{1}{c}{54.79} & \multicolumn{1}{c}{\underline{62.91}}  
& \multicolumn{1}{c}{25.38} & \multicolumn{1}{c}{29.16} & \multicolumn{1}{c}{26.81} 
& \multicolumn{1}{c}{35.40} & \multicolumn{1}{c}{26.35} & \multicolumn{1}{c}{49.07} 
& \multicolumn{1}{c}{\textbf{77.49}} & \multicolumn{1}{c}{\textbf{63.34}} & \multicolumn{1}{c}{\textbf{77.49}} \\ 
 
BRIDGE{\tiny\textcolor{gray}{[ICCV’25]}}\cite{jiang2025unified}     
& \multicolumn{1}{c}{45.35} & \multicolumn{1}{c}{39.23} & \multicolumn{1}{c}{52.88}  
& \multicolumn{1}{c}{22.49} & \multicolumn{1}{c}{25.37} & \multicolumn{1}{c}{25.19} 
& \multicolumn{1}{c}{29.68} & \multicolumn{1}{c}{27.60} & \multicolumn{1}{c}{31.67} 
& \multicolumn{1}{c}{59.94} & \multicolumn{1}{c}{51.19} & \multicolumn{1}{c}{60.77} \\ \midrule
    \rowcolor{gray!12}\bf GMAE (Ours)    & \multicolumn{1}{c}{\textbf{62.18}} & \multicolumn{1}{c}{\textbf{56.67}} & \multicolumn{1}{c}{\textbf{70.35}}  & \multicolumn{1}{c}{\textbf{32.39}} & \multicolumn{1}{c}{\underline{34.57}} & \multicolumn{1}{c}{\textbf{35.02}} & \multicolumn{1}{c}{\textbf{45.07}} & \multicolumn{1}{c}{\underline{40.06}} & \multicolumn{1}{c}{\textbf{58.94}} & \multicolumn{1}{c}{\underline{75.16}} & \multicolumn{1}{c}{\underline{62.14}} & \multicolumn{1}{c}{\underline{76.73}} \\  \bottomrule
\end{tabular}
\end{center}\vspace{-1em}
\end{table*}
\section{Experiments}
\label{sec:exp}
In this section, we conduct experiments on 13 public
datasets to evaluate the proposed method in terms of both complete and incomplete multi-view clustering tasks.
\subsection{Experimental Settings}
\subsubsection{Benchmark Datasets}
The experiments are carried out on 13 benchmark datasets with different scales for comparison and analysis, as shown in Table \ref{table2:stat}. The datasets are briefly described as follows:

\begin{table}[ht]\vspace{0.5em}
\begin{center}
\setlength{\tabcolsep}{3pt} 
\renewcommand{\arraystretch}{1}
\setlength{\abovecaptionskip}{0.15cm}  
\caption{\revised{Details of MVC datasets used in the experiment.}} 
\label{table2:stat}
\resizebox{\linewidth}{!}{
\begin{tabular}{@{}p{2.6cm}<{\centering}p{1.1cm}<{\centering}p{0.95cm}<{\centering}p{0.8cm}<{\centering}p{3.45cm}<{\centering}} 
\toprule

 Datasets  & Samples  & Clusters & Views & \multicolumn{1}{c}{View Dimensions} \\ \midrule
 \multicolumn{5}{c}{\textbf{\textit{Image-Text Domain}}} \\ \midrule
 Wikipedia \tablefootnote{\url{https://dumps.wikimedia.org/}} & 693 & 10  & 2 & [128, 10] \\ 
 RGB-D \cite{kong2014you} & 1,449 & 13  & 2 & [2048, 300] \\ \midrule
 \multicolumn{5}{c}{\textbf{\textit{Image Domain}}} \\ \midrule 
 STL-10 \tablefootnote{\url{https://cs.stanford.edu/~acoates/stl10/}} & 13,000 & 10  & 3 & [1024, 512, 2048] \\ 
 LandUse-21 \cite{yang2010bag} & 2,100 & 21  & 3 & [20, 59, 59] \\ 
 Out-Scene \cite{oliva2001modeling-Out-Scene} & 2,688 & 8  & 4 & [512, 432, 256, 48] \\  
 ALOI-100 \tablefootnote{\url{https://aloi.science.uva.nl/}} & 10,800 & 100  & 4 & [77, 13, 64, 125] \\ 
 NUSWIDEOBJ \cite{chua2009nus} & 30,000 & 31  & 5 & [65, 226, 145, 74, 129] \\ 
 Digits \cite{frank2010uci} & 2,000 & 10  & 6 & [240, 76, 216, 47, 64, 6] \\  
 MSRCV1 \cite{winn2005locus} & 210 & 7  & 6 & [1302, 48, 512, 100, 256, 210] \\ 
 Dermatology \tablefootnote{\url{http://archive.ics.uci.edu/dataset/33/dermatology}} & 358 & 6  & 2 & [12, 22] \\ \midrule
 
 \multicolumn{5}{c}{\textbf{\textit{Multi-Omics Domain}}} \\ \midrule
 LGG \cite{cancer2015comprehensive} & 267 & 3  & 4 & [2000, 2000, 333, 209] \\ 
 BRCA \cite{koboldt2012comprehensive} & 398 & 4  & 4 & [2000, 2000, 278, 212] \\ \midrule

 \multicolumn{5}{c}{\textbf{\textit{Synthetics Domain}}} \\ \midrule
 Synthetic3D \cite{kumar2011co} & 600 & 3  & 3 & [3, 3, 3] \\  \bottomrule

\end{tabular}
}\vspace{-2.5em}
\end{center}
\end{table}

\revised{The \textit{Image} datasets present a rich variety of visual information captured from diverse domains. \textbf{ALOI-100} includes images representing 100 small object classes, meticulously cataloged to explore the nuances of visual recognition. \textbf{Digits} dataset offers an in-depth study of handwritten numeral classification, derived from Dutch utility maps, with each digit carefully captured to reveal intricate details across 10 classes. \textbf{LandUse-21} reflects the diverse usage of land in different environments. \textbf{Dermatology} dataset provides 358 satellite images, each representing one of six distinct skin disease classes, serving as a critical resource for medical image analysis. Meanwhile, \textbf{NUSWIDEOBJ} is a robust dataset supporting object recognition across a large number of categories. \textbf{Out-Scene} dataset offers outdoor scene images, allowing for a deep exploration of environmental imagery. \textbf{STL-10} covers common vehicles and animal types, showcasing a broad range of features, making it a foundational resource for object categorization research. About \textbf{MSRCV1} dataset, which encompasses 7 distinct classes, including trees, buildings, airplanes, cars, and cows, provides a comprehensive tool for multi-view image analysis. }

\begin{table*}[htbp]
\begin{center} 
\setlength{\tabcolsep}{8.6pt}
\renewcommand{\arraystretch}{1}
\setlength{\abovecaptionskip}{0.1cm}  
\caption{\revised{Clustering comparisons on Digits with increased views.  “-\#V” represents the number of views.}} \label{table4:results4}
\begin{tabular}{p{2.64cm}p{2.18cm}<{\centering}p{2.18cm}<{\centering}p{2.18cm}<{\centering}p{2.18cm}<{\centering}p{2.18cm}<{\centering}p{2.18cm}<{\centering}p{2.18cm}<{\centering}p{2.18cm}<{\centering}p{2.18cm}<{\centering}p{2.18cm}<{\centering}p{2.18cm}<{\centering}p{2.18cm}<{\centering}}
\toprule
   \multirow{2}{*}{Methods}   & \multicolumn{3}{c}{Digits-2V }     & \multicolumn{3}{c}{Digits-3V}    & \multicolumn{3}{c}{Digits-4V}   & \multicolumn{3}{c}{Digits-6V}   \\  \cmidrule(lr){2-4} \cmidrule(lr){5-7} \cmidrule(lr){8-10} \cmidrule(lr){11-13} 
   & \multicolumn{1}{c}{ACC}  & \multicolumn{1}{c}{NMI} & \multicolumn{1}{c}{PUR} & \multicolumn{1}{c}{ACC}  & \multicolumn{1}{c}{NMI} & \multicolumn{1}{c}{PUR} & \multicolumn{1}{c}{ACC}  & \multicolumn{1}{c}{NMI} & \multicolumn{1}{c}{PUR} & \multicolumn{1}{c}{ACC}  & \multicolumn{1}{c}{NMI} & \multicolumn{1}{c}{PUR}  \\ \midrule 	
   		
   COMPLETER{\tiny\textcolor{gray}{[CVPR’21]}}\cite{lin2021completer}    & \multicolumn{1}{c}{84.10} & \multicolumn{1}{c}{\underline{85.03}} & \multicolumn{1}{c}{84.45}  & \multicolumn{1}{c}{88.85} & \multicolumn{1}{c}{81.94} & \multicolumn{1}{c}{88.85} & \multicolumn{1}{c}{73.55} & \multicolumn{1}{c}{75.17} & \multicolumn{1}{c}{77.60} & \multicolumn{1}{c}{73.60} & \multicolumn{1}{c}{73.58} & \multicolumn{1}{c}{75.30} \\  	
   											
    DCP{\tiny\textcolor{gray}{[TPAMI’22]}}\cite{lin2022dual}    & \multicolumn{1}{c}{79.45} & \multicolumn{1}{c}{79.61} & \multicolumn{1}{c}{80.65}  & \multicolumn{1}{c}{80.15} & \multicolumn{1}{c}{83.02} & \multicolumn{1}{c}{82.65} & \multicolumn{1}{c}{75.35} & \multicolumn{1}{c}{78.69} & \multicolumn{1}{c}{79.20} & \multicolumn{1}{c}{76.80} & \multicolumn{1}{c}{79.57} & \multicolumn{1}{c}{81.00} \\  	
   	 	  	 	 	  	 	 	  			
   MFLVC{\tiny\textcolor{gray}{[CVPR’22]}}\cite{MFLVCxu2022multi}    & \multicolumn{1}{c}{70.40} & \multicolumn{1}{c}{69.55} & \multicolumn{1}{c}{72.45}  & \multicolumn{1}{c}{91.85} & \multicolumn{1}{c}{85.92} & \multicolumn{1}{c}{91.85} & \multicolumn{1}{c}{82.50} & \multicolumn{1}{c}{79.88} & \multicolumn{1}{c}{82.50} & \multicolumn{1}{c}{85.00} & \multicolumn{1}{c}{84.06} & \multicolumn{1}{c}{85.00} \\ 
   	 	  	 	 	  	 	 	  			
     DSMVC{\tiny\textcolor{gray}{[CVPR’22]}}\cite{DSMVCtang2022deep}    & \multicolumn{1}{c}{79.70} & \multicolumn{1}{c}{75.23} & \multicolumn{1}{c}{79.70}  & \multicolumn{1}{c}{\underline{95.40}} & \multicolumn{1}{c}{\underline{90.54}} & \multicolumn{1}{c}{\underline{95.40}} & \multicolumn{1}{c}{88.15} & \multicolumn{1}{c}{\underline{85.10}} & \multicolumn{1}{c}{88.15} & \multicolumn{1}{c}{{95.45}} & \multicolumn{1}{c}{\underline{92.52}} & \multicolumn{1}{c}{{95.45}} \\  
   										   
    SURE{\tiny\textcolor{gray}{[TPAMI’22]}}\cite{SUREyang2022robust}    & \multicolumn{1}{c}{81.20} & \multicolumn{1}{c}{69.51} & \multicolumn{1}{c}{81.20} & \multicolumn{1}{c}{74.10} & \multicolumn{1}{c}{65.58} & \multicolumn{1}{c}{74.10} & \multicolumn{1}{c}{61.80} & \multicolumn{1}{c}{55.34} & \multicolumn{1}{c}{61.80} & \multicolumn{1}{c}{66.75} & \multicolumn{1}{c}{62.72} & \multicolumn{1}{c}{69.65} \\
   
     DealMVC{\tiny\textcolor{gray}{[MM’23]}}\cite{Dealmvcyang2023dealmvc}    & \multicolumn{1}{c}{50.30} & \multicolumn{1}{c}{50.13} & \multicolumn{1}{c}{50.30}  & \multicolumn{1}{c}{45.50} & \multicolumn{1}{c}{49.07} & \multicolumn{1}{c}{45.50} & \multicolumn{1}{c}{62.60} & \multicolumn{1}{c}{65.57} & \multicolumn{1}{c}{62.60} & \multicolumn{1}{c}{57.10} & \multicolumn{1}{c}{68.04} & \multicolumn{1}{c}{57.10} \\
   											
   GCFAgg{\tiny\textcolor{gray}{[CVPR’23]}}\cite{Gcfaggyan2023gcfagg}    & \multicolumn{1}{c}{61.55} & \multicolumn{1}{c}{66.43} & \multicolumn{1}{c}{62.90}  & \multicolumn{1}{c}{90.80} & \multicolumn{1}{c}{85.75} & \multicolumn{1}{c}{90.80} & \multicolumn{1}{c}{79.55} & \multicolumn{1}{c}{77.84} & \multicolumn{1}{c}{79.55} & \multicolumn{1}{c}{83.95} & \multicolumn{1}{c}{82.76} & \multicolumn{1}{c}{83.95} \\  
   											
     CPSPAN{\tiny\textcolor{gray}{[CVPR’23]}}\cite{CPSPANjin2023deep}    & \multicolumn{1}{c}{89.65} & \multicolumn{1}{c}{80.67} & \multicolumn{1}{c}{\underline{89.65}}  & \multicolumn{1}{c}{87.25} & \multicolumn{1}{c}{79.39} & \multicolumn{1}{c}{87.25} & \multicolumn{1}{c}{88.95} & \multicolumn{1}{c}{81.27} & \multicolumn{1}{c}{{88.95}} & \multicolumn{1}{c}{89.55} & \multicolumn{1}{c}{82.30} & \multicolumn{1}{c}{9.95} \\  
   											
   SDMVC{\tiny\textcolor{gray}{[TKDE’23]}}\cite{SDMVCxu2023self}    & \multicolumn{1}{c}{80.60} & \multicolumn{1}{c}{74.52} & \multicolumn{1}{c}{80.60}  & \multicolumn{1}{c}{77.15} & \multicolumn{1}{c}{73.83} & \multicolumn{1}{c}{77.85} & \multicolumn{1}{c}{82.40} & \multicolumn{1}{c}{77.79} & \multicolumn{1}{c}{82.40} & \multicolumn{1}{c}{84.85} & \multicolumn{1}{c}{75.05} & \multicolumn{1}{c}{84.85} \\  
   											
     CVCL{\tiny\textcolor{gray}{[ICCV’23]}}\cite{CVCLchen2023deep}     & \multicolumn{1}{c}{85.42} & \multicolumn{1}{c}{77.17} & \multicolumn{1}{c}{85.42}  & \multicolumn{1}{c}{83.96} & \multicolumn{1}{c}{76.48} & \multicolumn{1}{c}{83.96} & \multicolumn{1}{c}{79.22} & \multicolumn{1}{c}{75.45} & \multicolumn{1}{c}{79.22} & \multicolumn{1}{c}{89.15} & \multicolumn{1}{c}{80.60} & \multicolumn{1}{c}{89.15} \\  		
   
   SCMVC{\tiny\textcolor{gray}{[TMM’24]}}\cite{SCMVCwu2024self}    & \multicolumn{1}{c}{87.50} & \multicolumn{1}{c}{78.46} & \multicolumn{1}{c}{87.50}  & \multicolumn{1}{c}{89.35} & \multicolumn{1}{c}{80.87} & \multicolumn{1}{c}{89.35} & \multicolumn{1}{c}{86.15} & \multicolumn{1}{c}{78.00} & \multicolumn{1}{c}{86.15} & \multicolumn{1}{c}{94.15} & \multicolumn{1}{c}{87.91} & \multicolumn{1}{c}{94.15} \\ 	
   
    MVCAN{\tiny\textcolor{gray}{[CVPR’24]}}\cite{MVCANxu2024investigating}     & \multicolumn{1}{c}{\underline{90.00}} & \multicolumn{1}{c}{82.82} & \multicolumn{1}{c}{79.82}  & \multicolumn{1}{c}{91.90} & \multicolumn{1}{c}{85.26} & \multicolumn{1}{c}{83.20} & \multicolumn{1}{c}{91.90} & \multicolumn{1}{c}{85.03} & \multicolumn{1}{c}{83.14} & \multicolumn{1}{c}{94.90} & \multicolumn{1}{c}{89.63} & \multicolumn{1}{c}{89.09} \\ 
    
DCG{\tiny\textcolor{gray}{[AAAI’25]}}\cite{zhang2025incomplete}     
& \multicolumn{1}{c}{74.15} & \multicolumn{1}{c}{68.36} & \multicolumn{1}{c}{74.25}  
& \multicolumn{1}{c}{78.75} & \multicolumn{1}{c}{80.87} & \multicolumn{1}{c}{82.40} 
& \multicolumn{1}{c}{79.05} & \multicolumn{1}{c}{80.91} & \multicolumn{1}{c}{82.65} 
& \multicolumn{1}{c}{81.70} & \multicolumn{1}{c}{82.59} & \multicolumn{1}{c}{82.70} \\ 

SparseMVC{\tiny\textcolor{gray}{[NeurIPS’25]}}\cite{liu2025sparsemvc}     
& \multicolumn{1}{c}{84.20} & \multicolumn{1}{c}{81.45} & \multicolumn{1}{c}{84.20}  
& \multicolumn{1}{c}{93.40} & \multicolumn{1}{c}{87.23} & \multicolumn{1}{c}{93.40} 
& \multicolumn{1}{c}{84.25} & \multicolumn{1}{c}{81.35} & \multicolumn{1}{c}{84.25} 
& \multicolumn{1}{c}{91.00} & \multicolumn{1}{c}{86.28} & \multicolumn{1}{c}{91.00} \\ 
         
BRIDGE{\tiny\textcolor{gray}{ICCV’25]}}\cite{jiang2025unified}    
& \multicolumn{1}{c}{86.28} & \multicolumn{1}{c}{77.88} & \multicolumn{1}{c}{86.28}  
& \multicolumn{1}{c}{88.94} & \multicolumn{1}{c}{81.51} & \multicolumn{1}{c}{88.94} 
& \multicolumn{1}{c}{\underline{92.06}} & \multicolumn{1}{c}{84.92} & \multicolumn{1}{c}{\underline{92.06}} 
& \multicolumn{1}{c}{\underline{96.01}} & \multicolumn{1}{c}{91.52} & \multicolumn{1}{c}{\underline{96.01}} \\ \midrule
      \rowcolor{gray!12}\bf GMAE (Ours)   & \multicolumn{1}{c}{\textbf{95.55}} & \multicolumn{1}{c}{\textbf{90.88}} & \multicolumn{1}{c}{\textbf{95.90}} & \multicolumn{1}{c}{\textbf{95.90}} & \multicolumn{1}{c}{\textbf{91.51}} & \multicolumn{1}{c}{\textbf{96.00}} & \multicolumn{1}{c}{\textbf{96.50}} & \multicolumn{1}{c}{\textbf{92.07}} & \multicolumn{1}{c}{\textbf{96.65}} & \multicolumn{1}{c}{\textbf{97.45}} & \multicolumn{1}{c}{\textbf{94.16}} & \multicolumn{1}{c}{\textbf{97.45}} \\ \midrule \midrule
   \multirow{2}{*}{Methods}   & \multicolumn{3}{c}{STL-10}     & \multicolumn{3}{c}{ALOI-100}    & \multicolumn{3}{c}{NUSWIDEOBJ}   & \multicolumn{3}{c}{MSRCV1}   \\  \cmidrule(lr){2-4} \cmidrule(lr){5-7} \cmidrule(lr){8-10} \cmidrule(lr){11-13} 
   & \multicolumn{1}{c}{ACC}  & \multicolumn{1}{c}{NMI} & \multicolumn{1}{c}{PUR} & \multicolumn{1}{c}{ACC}  & \multicolumn{1}{c}{NMI} & \multicolumn{1}{c}{PUR} & \multicolumn{1}{c}{ACC}  & \multicolumn{1}{c}{NMI} & \multicolumn{1}{c}{PUR} & \multicolumn{1}{c}{ACC}  & \multicolumn{1}{c}{NMI} & \multicolumn{1}{c}{PUR}  \\ \midrule	
   COMPLETER{\tiny\textcolor{gray}{[CVPR’21]}}\cite{lin2021completer}    & \multicolumn{1}{c}{11.11} & \multicolumn{1}{c}{0.20} & \multicolumn{1}{c}{11.22}  & \multicolumn{1}{c}{30.70} & \multicolumn{1}{c}{62.12} & \multicolumn{1}{c}{33.63} & \multicolumn{1}{c}{12.73} & \multicolumn{1}{c}{8.51} & \multicolumn{1}{c}{19.53} & \multicolumn{1}{c}{90.00} & \multicolumn{1}{c}{{87.90}} & \multicolumn{1}{c}{90.00} \\  	
    DCP{\tiny\textcolor{gray}{[TPAMI’22]}}\cite{lin2022dual}    & \multicolumn{1}{c}{11.08} & \multicolumn{1}{c}{0.18} & \multicolumn{1}{c}{11.20}  & \multicolumn{1}{c}{34.01} & \multicolumn{1}{c}{60.28} & \multicolumn{1}{c}{37.32} & \multicolumn{1}{c}{10.38} & \multicolumn{1}{c}{8.91} & \multicolumn{1}{c}{20.06} & \multicolumn{1}{c}{25.71} & \multicolumn{1}{c}{23.25} & \multicolumn{1}{c}{27.14} \\  
   MFLVC{\tiny\textcolor{gray}{[CVPR’22]}}\cite{MFLVCxu2022multi}    & \multicolumn{1}{c}{20.18} & \multicolumn{1}{c}{11.46} & \multicolumn{1}{c}{21.72}  & \multicolumn{1}{c}{33.17} & \multicolumn{1}{c}{73.28} & \multicolumn{1}{c}{33.17} & \multicolumn{1}{c}{16.59} & \multicolumn{1}{c}{10.06} & \multicolumn{1}{c}{23.15} & \multicolumn{1}{c}{63.33} & \multicolumn{1}{c}{66.11} & \multicolumn{1}{c}{64.29} \\ 	
    DSMVC{\tiny\textcolor{gray}{[CVPR’22]}}\cite{DSMVCtang2022deep}    & \multicolumn{1}{c}{\textbf{96.85}} & \multicolumn{1}{c}{\textbf{92.40}} & \multicolumn{1}{c}{\textbf{96.85}}  & \multicolumn{1}{c}{71.52} & \multicolumn{1}{c}{\underline{90.87}} & \multicolumn{1}{c}{72.72} & \multicolumn{1}{c}{12.79} & \multicolumn{1}{c}{10.01} & \multicolumn{1}{c}{22.66} & \multicolumn{1}{c}{64.29} & \multicolumn{1}{c}{54.29} & \multicolumn{1}{c}{64.29} \\ 
   SURE{\tiny\textcolor{gray}{[TPAMI’22]}}\cite{SUREyang2022robust}    & \multicolumn{1}{c}{11.13} & \multicolumn{1}{c}{0.16} & \multicolumn{1}{c}{11.23} & \multicolumn{1}{c}{10.13} & \multicolumn{1}{c}{34.19} & \multicolumn{1}{c}{11.90} & \multicolumn{1}{c}{11.26} & \multicolumn{1}{c}{10.11} & \multicolumn{1}{c}{20.51} & \multicolumn{1}{c}{{91.43}} & \multicolumn{1}{c}{85.84} & \multicolumn{1}{c}{{91.43}} \\						
    DealMVC{\tiny\textcolor{gray}{[MM’23]}}\cite{Dealmvcyang2023dealmvc}    & \multicolumn{1}{c}{60.40} & \multicolumn{1}{c}{48.96} & \multicolumn{1}{c}{60.40}  & \multicolumn{1}{c}{13.11} & \multicolumn{1}{c}{48.54} & \multicolumn{1}{c}{13.10} & \multicolumn{1}{c}{13.64} & \multicolumn{1}{c}{2.31} & \multicolumn{1}{c}{13.64} & \multicolumn{1}{c}{82.00} & \multicolumn{1}{c}{75.54} & \multicolumn{1}{c}{82.00} \\	
   GCFAgg{\tiny\textcolor{gray}{[CVPR’23]}}\cite{Gcfaggyan2023gcfagg}    & \multicolumn{1}{c}{17.58} & \multicolumn{1}{c}{4.46} & \multicolumn{1}{c}{18.24}  & \multicolumn{1}{c}{74.11} & \multicolumn{1}{c}{88.30} & \multicolumn{1}{c}{76.63} & \multicolumn{1}{c}{\underline{18.30}} & \multicolumn{1}{c}{\underline{19.41}} & \multicolumn{1}{c}{{31.12}} & \multicolumn{1}{c}{39.52} & \multicolumn{1}{c}{31.91} & \multicolumn{1}{c}{42.86} \\  
    CPSPAN{\tiny\textcolor{gray}{[CVPR’23]}}\cite{CPSPANjin2023deep}    & \multicolumn{1}{c}{20.76} & \multicolumn{1}{c}{9.54} & \multicolumn{1}{c}{21.72}  & \multicolumn{1}{c}{56.96} & \multicolumn{1}{c}{78.78} & \multicolumn{1}{c}{67.99} & \multicolumn{1}{c}{10.32} & \multicolumn{1}{c}{9.21} & \multicolumn{1}{c}{22.65} & \multicolumn{1}{c}{67.62} & \multicolumn{1}{c}{69.83} & \multicolumn{1}{c}{89.52} \\  	
   SDMVC{\tiny\textcolor{gray}{[TKDE’23]}}\cite{SDMVCxu2023self}    & \multicolumn{1}{c}{63.32} & \multicolumn{1}{c}{55.36} & \multicolumn{1}{c}{63.32}  & \multicolumn{1}{c}{52.02} & \multicolumn{1}{c}{74.70} & \multicolumn{1}{c}{56.56} & \multicolumn{1}{c}{13.20} & \multicolumn{1}{c}{14.99} & \multicolumn{1}{c}{24.97} & \multicolumn{1}{c}{59.52} & \multicolumn{1}{c}{52.51} & \multicolumn{1}{c}{45.24} \\
    CVCL{\tiny\textcolor{gray}{[ICCV’23]}}\cite{CVCLchen2023deep}     & \multicolumn{1}{c}{15.56} & \multicolumn{1}{c}{1.13} & \multicolumn{1}{c}{15.63}  & \multicolumn{1}{c}{21.86} & \multicolumn{1}{c}{43.13} & \multicolumn{1}{c}{23.29} & \multicolumn{1}{c}{14.17} & \multicolumn{1}{c}{14.83} & \multicolumn{1}{c}{27.44} & \multicolumn{1}{c}{90.62} & \multicolumn{1}{c}{84.57} & \multicolumn{1}{c}{90.62} \\  
   SCMVC{\tiny\textcolor{gray}{[TMM’24]}}\cite{SCMVCwu2024self}    & \multicolumn{1}{c}{79.68} & \multicolumn{1}{c}{73.49} & \multicolumn{1}{c}{79.68}  & \multicolumn{1}{c}{\underline{77.72}} & \multicolumn{1}{c}{{89.42}} & \multicolumn{1}{c}{\underline{81.05}} & \multicolumn{1}{c}{{17.74}} & \multicolumn{1}{c}{{19.01}} & \multicolumn{1}{c}{30.35} & \multicolumn{1}{c}{90.95} & \multicolumn{1}{c}{83.92} & \multicolumn{1}{c}{90.95} \\         MVCAN{\tiny\textcolor{gray}{[CVPR’24]}}\cite{MVCANxu2024investigating}     & \multicolumn{1}{c}{66.62} & \multicolumn{1}{c}{72.25} & \multicolumn{1}{c}{58.49}  & \multicolumn{1}{c}{67.48} & \multicolumn{1}{c}{83.78} & \multicolumn{1}{c}{56.71} & \multicolumn{1}{c}{15.49} & \multicolumn{1}{c}{13.09} & \multicolumn{1}{c}{4.06} & \multicolumn{1}{c}{71.54} & \multicolumn{1}{c}{{60.19}} & \multicolumn{1}{c}{71.54} \\
   DCG{\tiny\textcolor{gray}{[AAAI’25]}}\cite{zhang2025incomplete}     
& \multicolumn{1}{c}{77.44} & \multicolumn{1}{c}{84.87} & \multicolumn{1}{c}{77.45}  
& \multicolumn{1}{c}{46.06} & \multicolumn{1}{c}{75.74} & \multicolumn{1}{c}{47.30} 
& \multicolumn{1}{c}{15.19} & \multicolumn{1}{c}{12.25} & \multicolumn{1}{c}{24.52} 
& \multicolumn{1}{c}{52.38} & \multicolumn{1}{c}{41.66} & \multicolumn{1}{c}{53.81} \\ 

SparseMVC{\tiny\textcolor{gray}{[NeurIPS’25]}}\cite{liu2025sparsemvc}     
& \multicolumn{1}{c}{43.68} & \multicolumn{1}{c}{44.69} & \multicolumn{1}{c}{44.64}  
& \multicolumn{1}{c}{\textbf{82.21}} & \multicolumn{1}{c}{\textbf{92.65}} & \multicolumn{1}{c}{84.19} 
& \multicolumn{1}{c}{\textbf{19.05}} & \multicolumn{1}{c}{\textbf{20.29}} & \multicolumn{1}{c}{\underline{31.52}} 
& \multicolumn{1}{c}{\textbf{97.14}} & \multicolumn{1}{c}{\textbf{94.22}} & \multicolumn{1}{c}{\textbf{97.14}} \\ 

BRIDGE{\tiny\textcolor{gray}{ICCV’25]}}\cite{jiang2025unified}    
& \multicolumn{1}{c}{28.29} & \multicolumn{1}{c}{20.61} & \multicolumn{1}{c}{30.23}  
& \multicolumn{1}{c}{73.84} & \multicolumn{1}{c}{{88.87}} & \multicolumn{1}{c}{\textbf{86.06}} 
& \multicolumn{1}{c}{15.53} & \multicolumn{1}{c}{16.38} & \multicolumn{1}{c}{18.67} 
& \multicolumn{1}{c}{55.03} & \multicolumn{1}{c}{51.51} & \multicolumn{1}{c}{57.14} \\ \midrule
  \rowcolor{gray!12}\bf GMAE (Ours) & \multicolumn{1}{c}{\underline{96.25}} & \multicolumn{1}{c}{\underline{90.26}} & \multicolumn{1}{c}{\underline{96.37}} & \multicolumn{1}{c}{{75.49}} & \multicolumn{1}{c}{85.26} & \multicolumn{1}{c}{{78.42}} & \multicolumn{1}{c}{17.28} & \multicolumn{1}{c}{17.05} & \multicolumn{1}{c}{\textbf{31.59}} & \multicolumn{1}{c}{\underline{94.29}} & \multicolumn{1}{c}{\underline{88.16}} & \multicolumn{1}{c}{\underline{94.29}} \\ \bottomrule
\end{tabular}
\end{center}\vspace{-1em}
\end{table*}

In the domain of \textit{Image-Text} datasets, the integration of visual and textual modalities opens new frontiers in data analysis. \textbf{RGB-D} provides indoor RGB and depth images, each paired with detailed textual descriptions, enabling sophisticated cross-modal retrieval and fusion tasks. \textbf{Wikipedia} dataset, derived from selected articles, includes documents paired with word frequency histograms for text and SIFT histograms for images, bridging the gap between language and vision. This dataset is a valuable resource for tasks such as image captioning and document classification.

The \textit{Omics} datasets explore the molecular intricacies of biological data, providing critical insights into genomic variation. In particular, \textbf{BRCA} is meticulously classified into four clinically significant subtypes: LuminalA, LuminalB, HER2-enriched and Basal-like, each representing distinct genomic profiles that advance understanding of breast cancer heterogeneity. Similarly, \textbf{LGG} dataset encompasses RNA expression, DNA copy number, and DNA methylation characteristics, stratified into three prognostic subtypes of lower-grade glioma. These provide essential resources for studying tumorigenesis and therapeutic response.

The \textit{Synthetics} category, represented by \textbf{Synthetic3D}, offers a controlled and well-defined environment for testing algorithms in synthetic scenarios. Each sample in this dataset is represented by three distinct synthetic feature sets, providing an idealized setting for algorithm evaluation.

\subsubsection{Evaluation Metrics} We use three common quantitative metrics, 
including unsupervised clustering accuracy (ACC), normalized mutual information (NMI), and purity (PUR).
The reported results are the average values (\%) of 10 final runs. Larger values of ACC/NMI/PUR indicate better Figures.
\subsubsection{Compared Methods}\label{sec:sota}
We compare our proposed method, GMAE, with 12 recent state-of-the-art deep learning-based multi-view clustering approaches, each with distinct methodological focuses.

\textbf{COMPLETER} \cite{lin2021completer} and \textbf{DCP} \cite{lin2022dual} employ contrastive learning to maximize mutual information between views and minimize conditional entropy, addressing both cross-view consistency and missing data recovery. \textbf{MFLVC} \cite{MFLVCxu2022multi} and \textbf{DealMVC} \cite{Dealmvcyang2023dealmvc} focus on reducing conflicts between common semantics and view-specific information, with MFLVC learning features at multiple levels and DealMVC using dual contrastive calibration for local and global consistency. \textbf{DSMVC} \cite{DSMVCtang2022deep}, \textbf{GCFAgg} \cite{Gcfaggyan2023gcfagg}, and \textbf{SCMVC} \cite{SCMVCwu2024self} use optimization frameworks that enhance feature selection and fusion, with DSMVC improving clustering stability across varying view counts, GCFAgg aggregating features across views to improve consistency, and SCMVC applying self-weighted contrastive fusion for robust feature learning. \textbf{CPSPAN} \cite{CPSPANjin2023deep} and \textbf{SDMVC} \cite{SDMVCxu2023self} both use self-training strategies to handle incomplete data, with CPSPAN aligning partial samples and prototypes across views, and SDMVC leveraging pseudo-labels for discriminative feature learning. \textbf{SURE} \cite{SUREyang2022robust} and \textbf{CVCL} \cite{CVCLchen2023deep} integrate autoencoders with contrastive learning, with SURE focusing on noise-robust learning under incomplete information, and CVCL learning view-invariant representations by contrasting cluster assignments across views. \textbf{MVCAN} \cite{MVCANxu2024investigating} tackles the challenge of noisy views in multi-view clustering by utilizing unshared parameters and inconsistent clustering predictions along with a two-level iterative optimization framework.

To ensure fairness, we did not fine-tune the hyperparameters of our method for any specific dataset. Similarly, all comparative methods were also restricted from performing dataset-specific hyperparameter fine-tuning, relying instead on a single pre-defined rules of hyperparameters. In cases where no such pre-set existed, the default hyperparameters from the first dataset were used. By comparing GMAE with these diverse methodologies, we aim to highlight its strengths in generating cohesive clustering results across multiple views, demonstrating its superior performance in various challenging multi-view clustering scenarios.

\begin{table*}[htbp]
\begin{center} 
\setlength{\tabcolsep}{8.6pt}
\renewcommand{\arraystretch}{1}
\setlength{\abovecaptionskip}{0.1cm}  
\caption{Comparison of clustering results of different methods on incomplete multi-view datasets.} \label{table5:results5}
\begin{tabular}{p{2.64cm}p{2.18cm}<{\centering}p{2.18cm}<{\centering}p{2.18cm}<{\centering}p{2.18cm}<{\centering}p{2.18cm}<{\centering}p{2.18cm}<{\centering}p{2.18cm}<{\centering}p{2.18cm}<{\centering}p{2.18cm}<{\centering}p{2.18cm}<{\centering}p{2.18cm}<{\centering}p{2.18cm}<{\centering}}
\toprule
   \multirow{2}{1in}{{Methods \\(missing rate 0.5)}}   & \multicolumn{3}{c}{BRCA}     & \multicolumn{3}{c}{LGG}    & \multicolumn{3}{c}{Dermatology}   & \multicolumn{3}{c}{MSRCV1}   \\  \cmidrule(lr){2-4} \cmidrule(lr){5-7} \cmidrule(lr){8-10} \cmidrule(lr){11-13} 
   & \multicolumn{1}{c}{ACC}  & \multicolumn{1}{c}{NMI} & \multicolumn{1}{c}{PUR} & \multicolumn{1}{c}{ACC}  & \multicolumn{1}{c}{NMI} & \multicolumn{1}{c}{PUR} & \multicolumn{1}{c}{ACC}  & \multicolumn{1}{c}{NMI} & \multicolumn{1}{c}{PUR} & \multicolumn{1}{c}{ACC}  & \multicolumn{1}{c}{NMI} & \multicolumn{1}{c}{PUR}  \\ \midrule	 		
     COMPLETER{\tiny\textcolor{gray}{[CVPR’21]}}\cite{lin2021completer}     & \multicolumn{1}{c}{37.69} & \multicolumn{1}{c}{7.52} & \multicolumn{1}{c}{48.49}  & \multicolumn{1}{c}{51.31} & \multicolumn{1}{c}{18.46} & \multicolumn{1}{c}{56.18} & \multicolumn{1}{c}{46.37} & \multicolumn{1}{c}{39.53} & \multicolumn{1}{c}{{55.31}} & \multicolumn{1}{c}{41.43} & \multicolumn{1}{c}{38.57} & \multicolumn{1}{c}{44.76} \\
   DCP{\tiny\textcolor{gray}{[TPAMI’22]}}\cite{lin2022dual}    & \multicolumn{1}{c}{40.20} & \multicolumn{1}{c}{11.85} & \multicolumn{1}{c}{47.74}  & \multicolumn{1}{c}{56.93} & \multicolumn{1}{c}{17.68} & \multicolumn{1}{c}{56.93} & \multicolumn{1}{c}{43.58} & \multicolumn{1}{c}{37.23} & \multicolumn{1}{c}{51.40} & \multicolumn{1}{c}{36.67} & \multicolumn{1}{c}{29.55} & \multicolumn{1}{c}{38.57} \\  			
   DSMVC{\tiny\textcolor{gray}{[CVPR’22]}}\cite{DSMVCtang2022deep}    & \multicolumn{1}{c}{40.70} & \multicolumn{1}{c}{4.95} & \multicolumn{1}{c}{42.96}  & \multicolumn{1}{c}{52.81} & \multicolumn{1}{c}{18.69} & \multicolumn{1}{c}{57.68} & \multicolumn{1}{c}{47.49} & \multicolumn{1}{c}{33.67} & \multicolumn{1}{c}{53.91} & \multicolumn{1}{c}{23.33} & \multicolumn{1}{c}{6.43} & \multicolumn{1}{c}{23.33} \\  
   
     SURE{\tiny\textcolor{gray}{[TPAMI’22]}}\cite{SUREyang2022robust}     & \multicolumn{1}{c}{38.44} & \multicolumn{1}{c}{10.09} & \multicolumn{1}{c}{49.50}  & \multicolumn{1}{c}{53.56} & \multicolumn{1}{c}{16.31} & \multicolumn{1}{c}{54.31} & \multicolumn{1}{c}{38.27} & \multicolumn{1}{c}{18.68} & \multicolumn{1}{c}{42.18} & \multicolumn{1}{c}{42.38} & \multicolumn{1}{c}{26.61} & \multicolumn{1}{c}{45.24} \\  	 																	
    DealMVC{\tiny\textcolor{gray}{[MM’23]}}\cite{Dealmvcyang2023dealmvc}     & \multicolumn{1}{c}{40.70} & \multicolumn{1}{c}{4.95} & \multicolumn{1}{c}{42.96}  & \multicolumn{1}{c}{\underline{67.42}} & \multicolumn{1}{c}{34.34} & \multicolumn{1}{c}{\underline{67.42}} & \multicolumn{1}{c}{\textbf{71.51}} & \multicolumn{1}{c}{55.46} & \multicolumn{1}{c}{\underline{72.35}} & \multicolumn{1}{c}{\underline{48.10}} & \multicolumn{1}{c}{\underline{41.04}} & \multicolumn{1}{c}{\underline{48.10}} \\
   								
    GCFAgg{\tiny\textcolor{gray}{[CVPR’23]}}\cite{Gcfaggyan2023gcfagg}     & \multicolumn{1}{c}{46.23} & \multicolumn{1}{c}{21.19} & \multicolumn{1}{c}{59.05}  & \multicolumn{1}{c}{12.20} & \multicolumn{1}{c}{\textbf{49.44}} & \multicolumn{1}{c}{56.55} & \multicolumn{1}{c}{48.50} & \multicolumn{1}{c}{\underline{59.22}} & \multicolumn{1}{c}{62.85} & \multicolumn{1}{c}{15.62} & \multicolumn{1}{c}{28.10} & \multicolumn{1}{c}{29.05} \\ 
    
   SCMVC{\tiny\textcolor{gray}{[TMM’24]}}\cite{SCMVCwu2024self}     & \multicolumn{1}{c}{\underline{48.49}} & \multicolumn{1}{c}{\underline{24.71}} & \multicolumn{1}{c}{\underline{59.80}}  & \multicolumn{1}{c}{54.31} & \multicolumn{1}{c}{23.98} & \multicolumn{1}{c}{63.30} & \multicolumn{1}{c}{48.60} & \multicolumn{1}{c}{35.85} & \multicolumn{1}{c}{54.75} & \multicolumn{1}{c}{39.52} & \multicolumn{1}{c}{36.72} & \multicolumn{1}{c}{44.29} \\ 	
   \midrule	
    \rowcolor{gray!12}\bf GMAE (Ours)   & \multicolumn{1}{c}{\textbf{51.01}} & \multicolumn{1}{c}{\textbf{32.42}} & \multicolumn{1}{c}{\textbf{64.82}}  & \multicolumn{1}{c}{\textbf{74.16}} & \multicolumn{1}{c}{\underline{34.91}} & \multicolumn{1}{c}{\textbf{74.16}} & \multicolumn{1}{c}{\underline{55.59}} & \multicolumn{1}{c}{\textbf{62.15}} & \multicolumn{1}{c}{\textbf{73.46}} & \multicolumn{1}{c}{\textbf{54.62}} & \multicolumn{1}{c}{\textbf{49.00}} & \multicolumn{1}{c}{\textbf{57.62}} \\  \bottomrule

\end{tabular}
\end{center}\vspace{-1em}
\end{table*}
\begin{figure*}[htbp]
  \centering
        \subfigure[MSRCV1 (Complete)] { 
        \includegraphics[width=0.48\columnwidth]{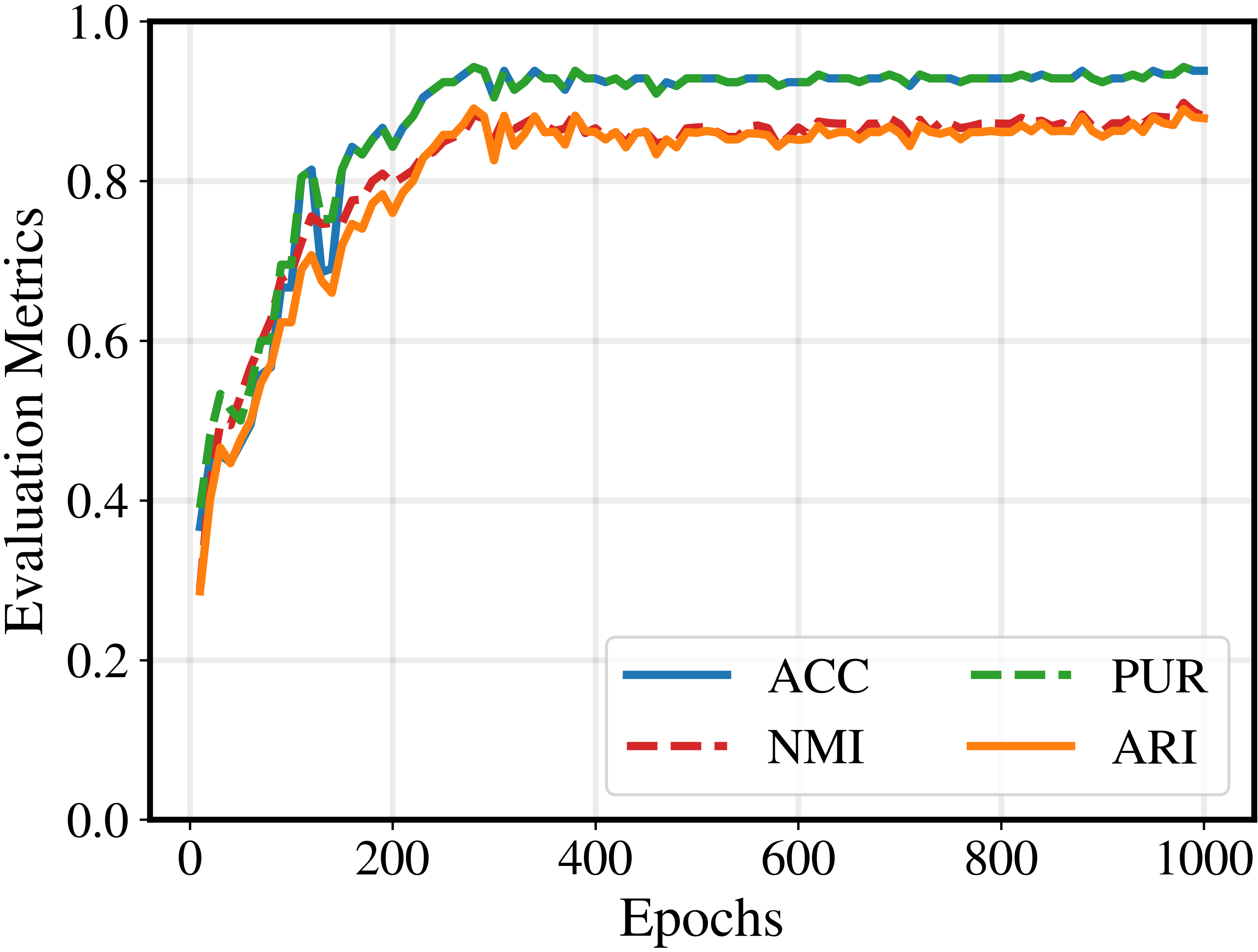}
        }\hspace{-0.1cm}
        \subfigure[MSRCV1 (Incomplete)] { 
        \includegraphics[width=0.48\columnwidth]{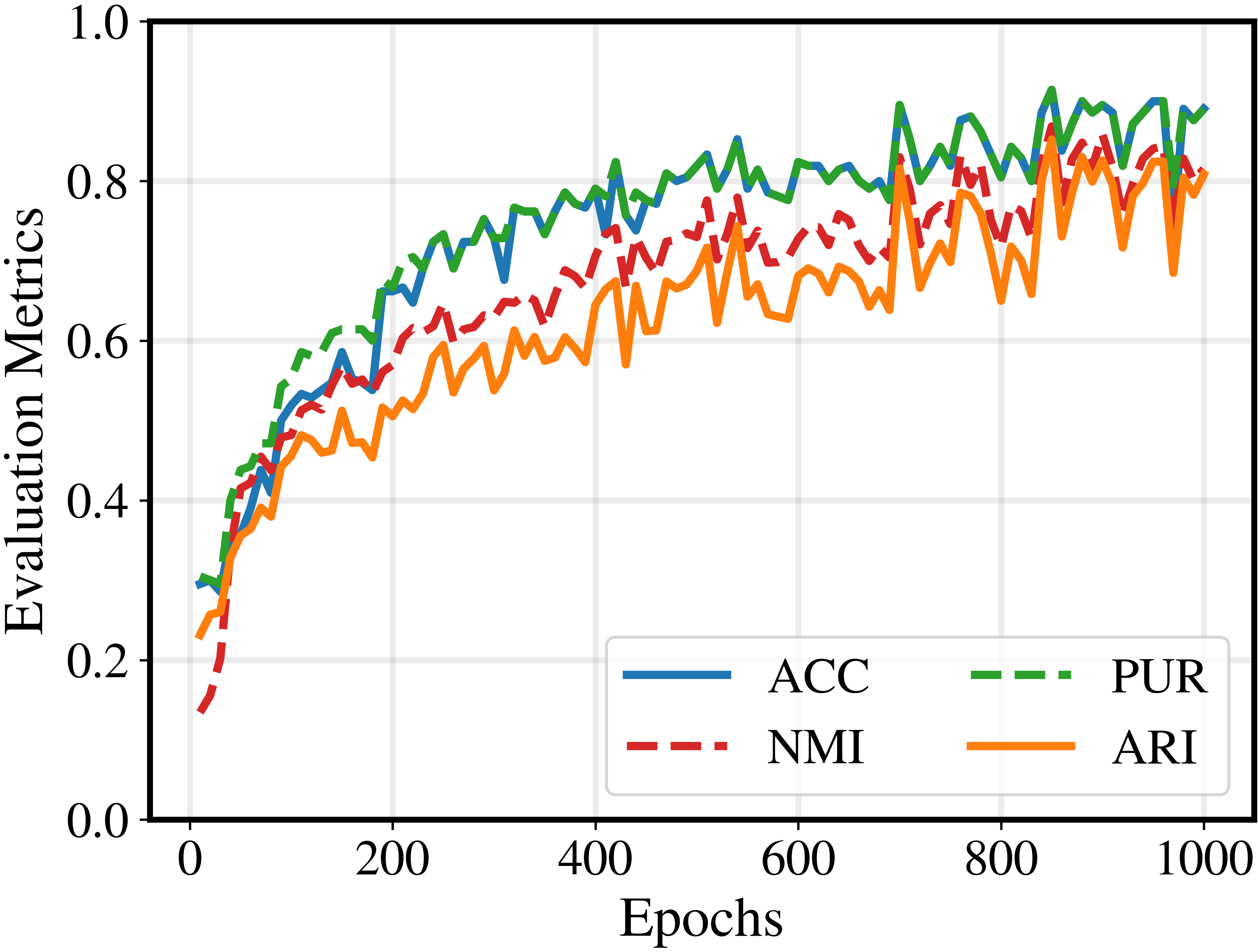}
        }\hspace{-0.1cm}
        \subfigure[Digits-2V (Complete)] { 
        \includegraphics[width=0.48\columnwidth]{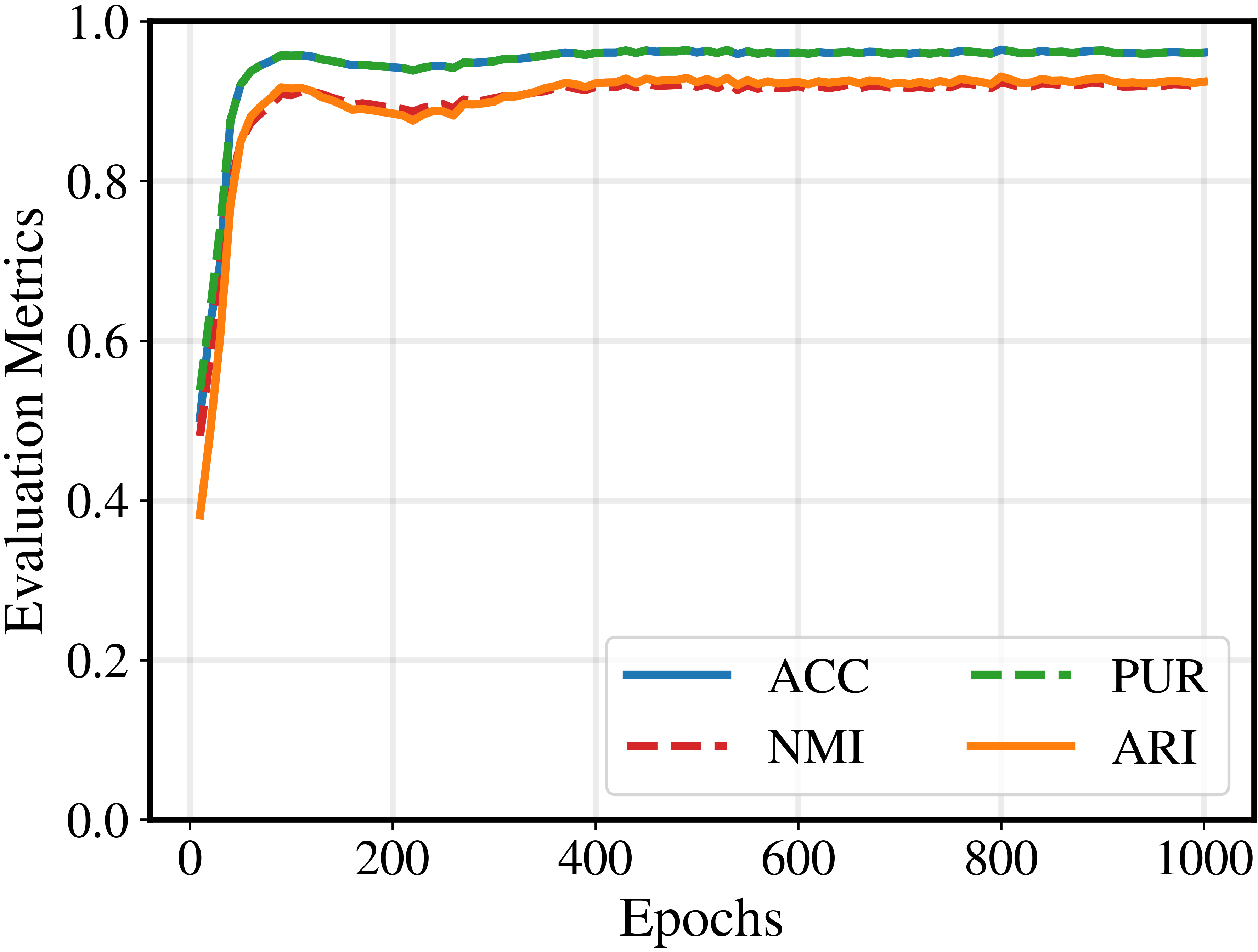}
        }\hspace{-0.1cm}
        \subfigure[Digits-2V (Classification)] { 
        \includegraphics[width=0.48\columnwidth]{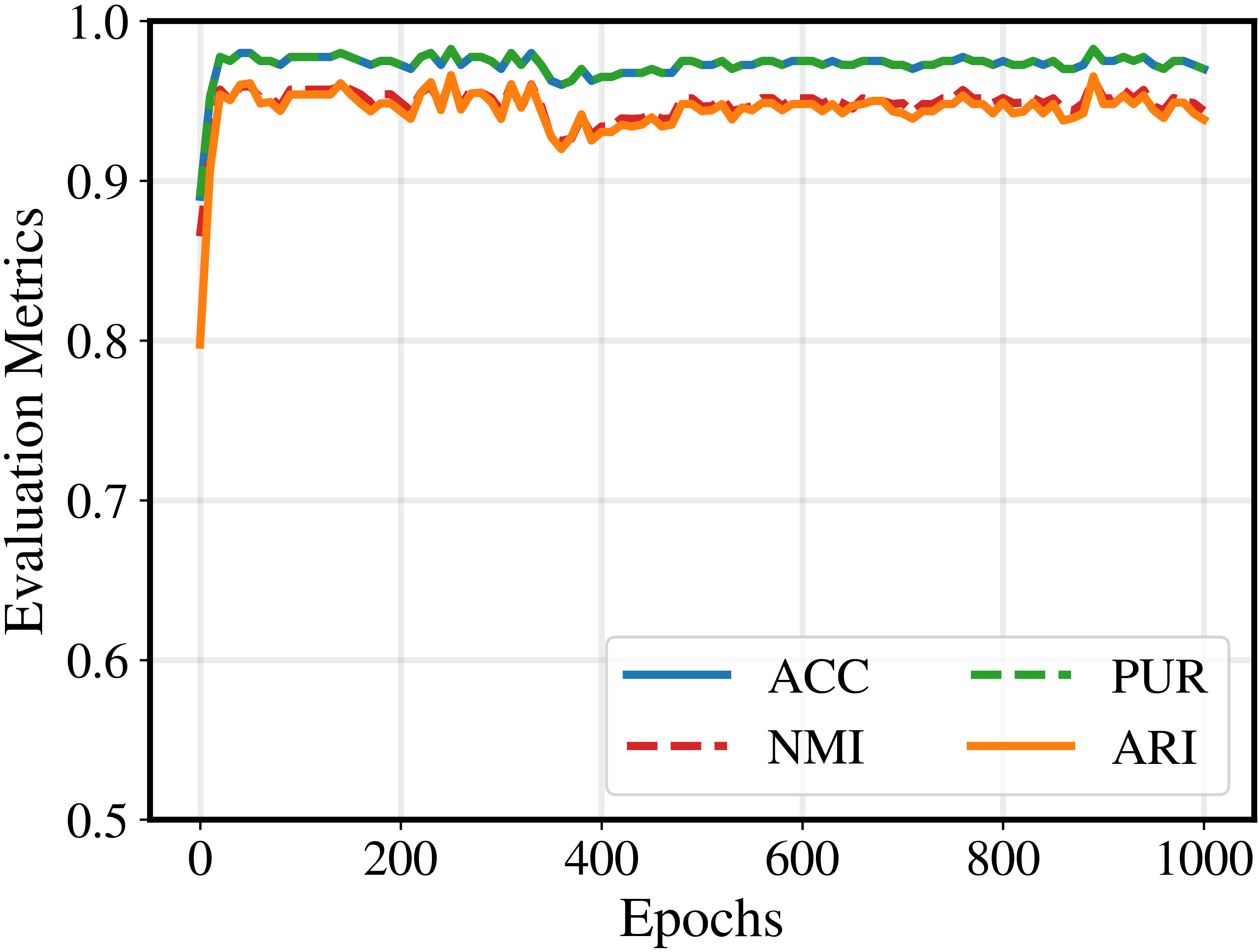}
        }\vspace{-1em}
\caption{\revised{(a-d) An analysis of the training process, including complete and incomplete multi-view datasets, respectively.}}
\label{fig:resline} \vspace{-1em}
\end{figure*}
\subsubsection{Implementation Details}
\revised{Our method is implemented using PyTorch 1.13.1+cu116 with Python 3.8.15 and tested on an AMD Ryzen 9 5900HX CPU with 32GB of RAM and an Nvidia RTX 3080 GPU (16GB memory). The models are trained using the Adam optimizer with a learning rate of 0.001. To ensure the reproducibility of the models, we set the random seed to 42 in PyTorch. In this work, we reshape all datasets into vectors and implement autoencoders using a fully connected network. We train the model for 500 epochs on all datasets, using a batch size equal to the total number of samples in each dataset. As the sampling process is random, the final experimental results may vary slightly. \textsc{Open Source} \footnote{Code is available at \url{https://github.com/obananas/GMAE}}.}

\revised{We also evaluate the performance of various Deep MVC methods on incomplete MVC datasets, which are generated from the original datasets by introducing varying levels of missing data. The incomplete samples are created based on the COMPLETER method in BRCA, LGG, Dermatology, and MSRCV1, by randomly removing views while ensuring that at least one view remains for each sample. }

\revised{The specific process is as follows: given a missing ratio ($m_r$) ranging from 0 to 1, we randomly select a proportion of samples equal to $m_r$ of the total dataset. For each selected sample, we randomly choose between one and $n-1$ views (where $n$ is the total number of views) and set all data in the selected views to zero to simulate missing data.}

\subsection{Experimental Comparative Results}
To ensure fairness, all comparative experiment results are based on the final round rather than the best outcomes. Tables \ref{table1:results1}, and \ref{table4:results4} present the performance of all comparative methods across a variety of dataset types. Table \ref{table4:results4} further explores the impact of varying the number of views within the same dataset on Figures, while Table \ref{table5:results5} and Figure \ref{fig:miss} validate the effectiveness of the proposed method on datasets with random missing views. We can draw the following noteworthy observations:

\ding{182} Our method achieved 15 first-place, 3 second-place, and 2 third-place finishes across 20 tests, highlighting its strong generalizability and potential in handling various data distributions and various downstream tasks. By aligning the distribution between view-specific and view-common features, GMAE adapts flexibly to the unique characteristics of different views, effectively capturing both distinct information from each view and the correlations between views. This adaptability enables GMAE to excel even on highly heterogeneous datasets.

\ding{183} GMAE presents exceptional stability during training and test. In all 20 tests, GMAE consistently produced stable and reliable results, unlike methods such as CPSPAN on Wikipedia or DCP on STL-10, which experienced significant performance fluctuations or dramatic declines with increased training epochs. This underscores GMAE's practical utility, ensuring near-optimal performance as long as sufficient training is provided, without needing to pause at an exact moment to capture a fleeting optimal result.

\ding{184} \revised{GMAE maintains high performance even in the presence of missing data, showcasing the robustness of the model. This robustness primarily stems from its internal feature alignment and disentanglement mechanisms, which mitigate the negative impact of noise or anomalous data, allowing the model to make accurate decisions in imperfect data environments, which is important in real applications.}

\ding{185} On the Digits dataset, GMAE's performance steadily improved as the number of views increased, in contrast to methods like DSMVC and SURE, where performance deteriorated with additional views. This suggests that the multi-view disentangled representation learning architecture offers strong multi-view safeness, enabling GMAE to accommodate data scale growth and the introduction of new views without suffering from noise or irrelevant information that could otherwise degrade overall performance. This feature is crucial in dynamic environments, especially where data is continually expanding, ensuring that GMAE can positively leverage the added view information, avoiding view conflicts and preventing negative transfer.

\begin{figure*}[htbp]
  \centering
\subfigure[Dermatology (ACC)] { 
\includegraphics[width=0.233\textwidth]{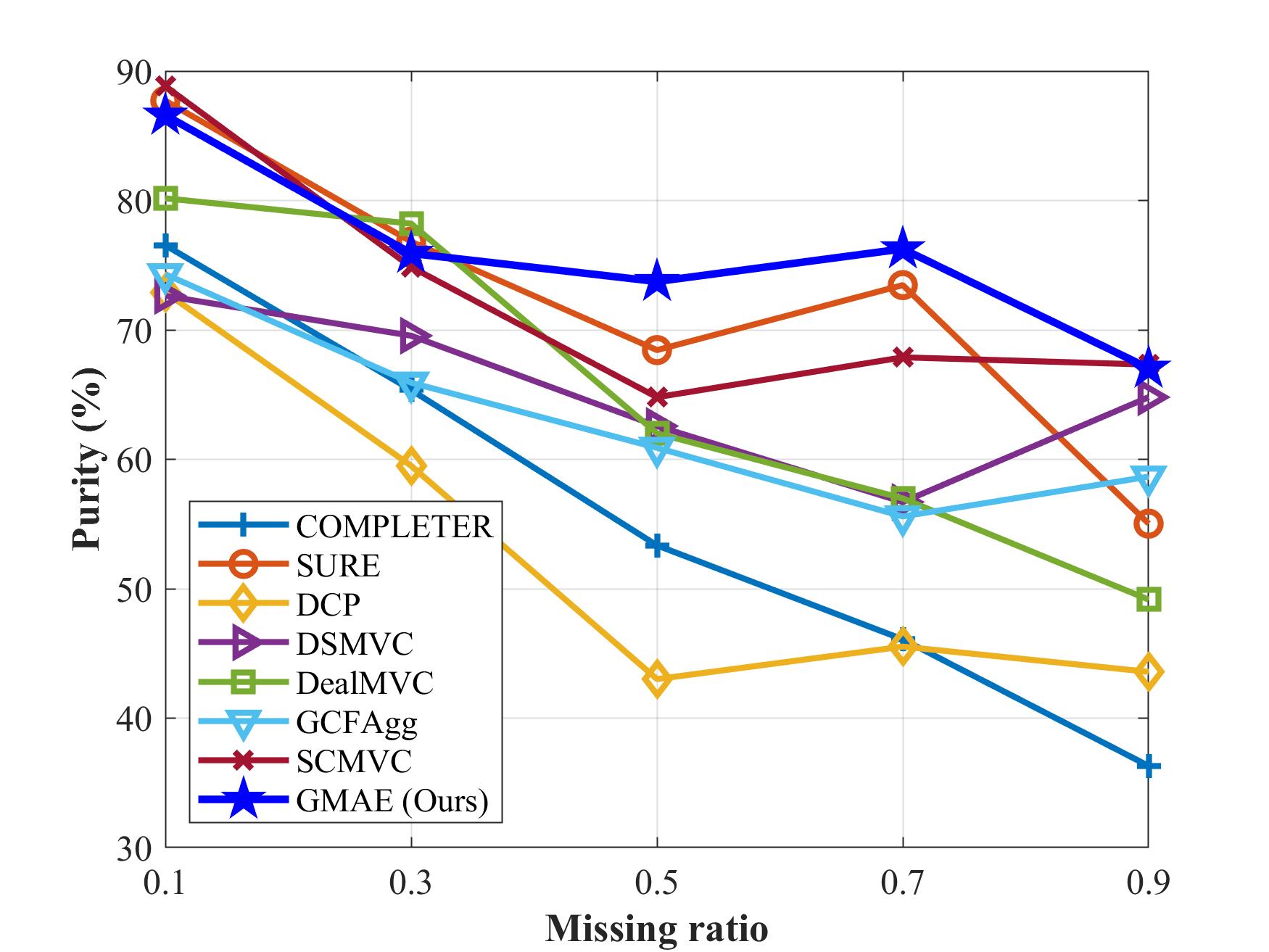}
}\hspace{0.05cm}
\subfigure[Dermatology (NMI)] { 
\includegraphics[width=0.233\textwidth]{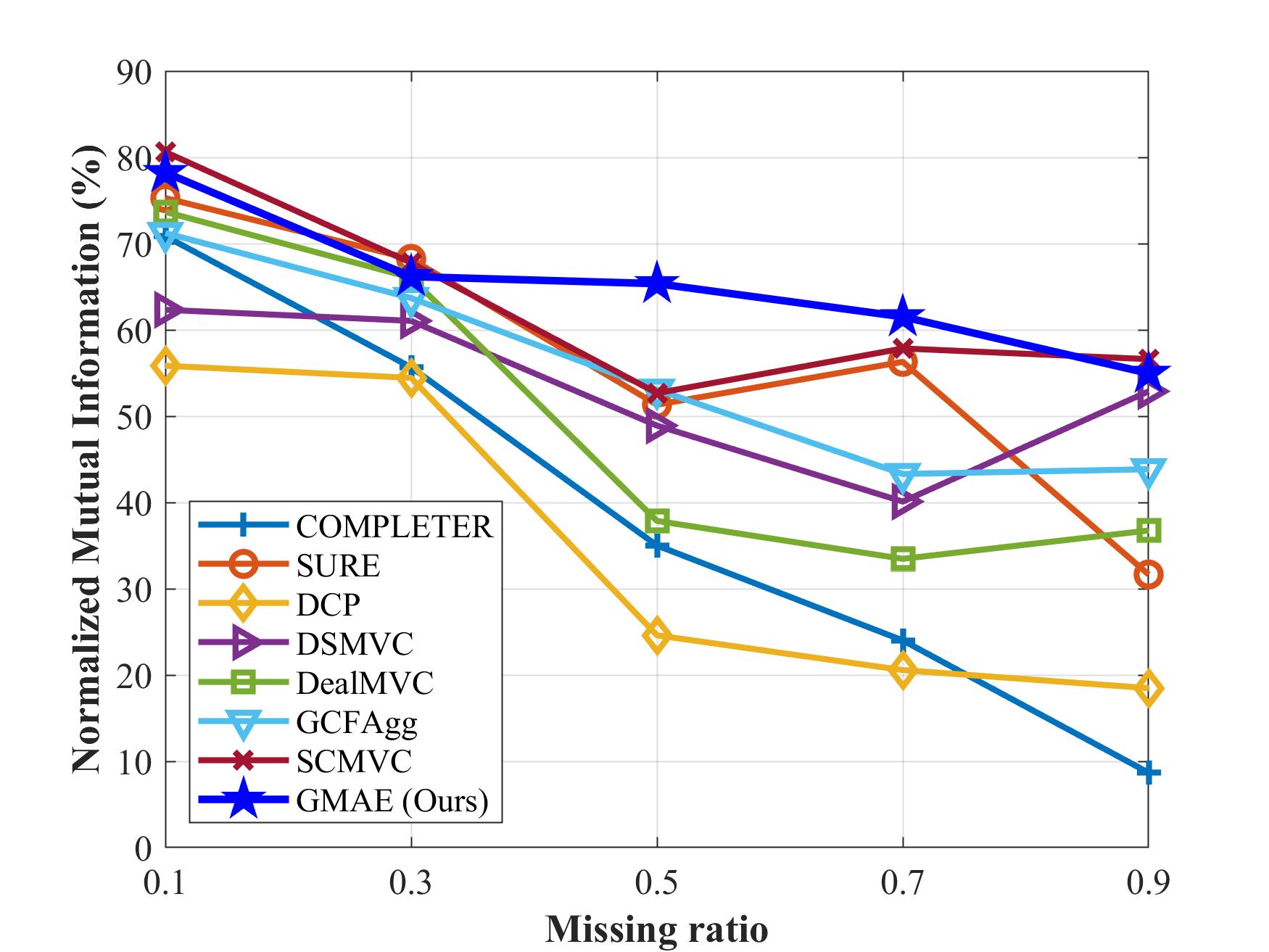}
}\hspace{0.05cm}
\subfigure[MSRCV1 (ACC)] { 
\includegraphics[width=0.233\textwidth]{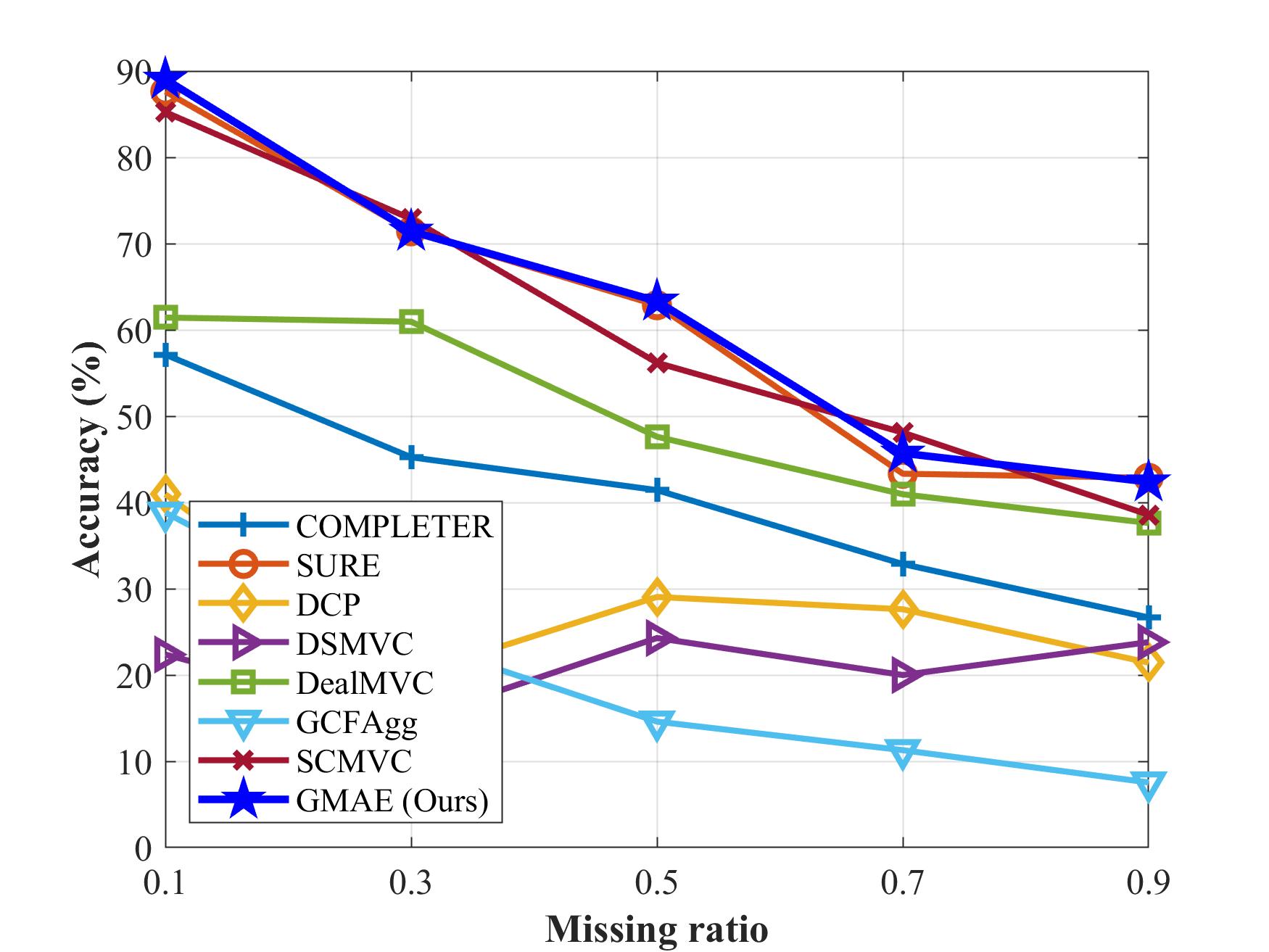}
}\hspace{0.05cm}
\subfigure[MSRCV1 (NMI)] { 
\includegraphics[width=0.233\textwidth]{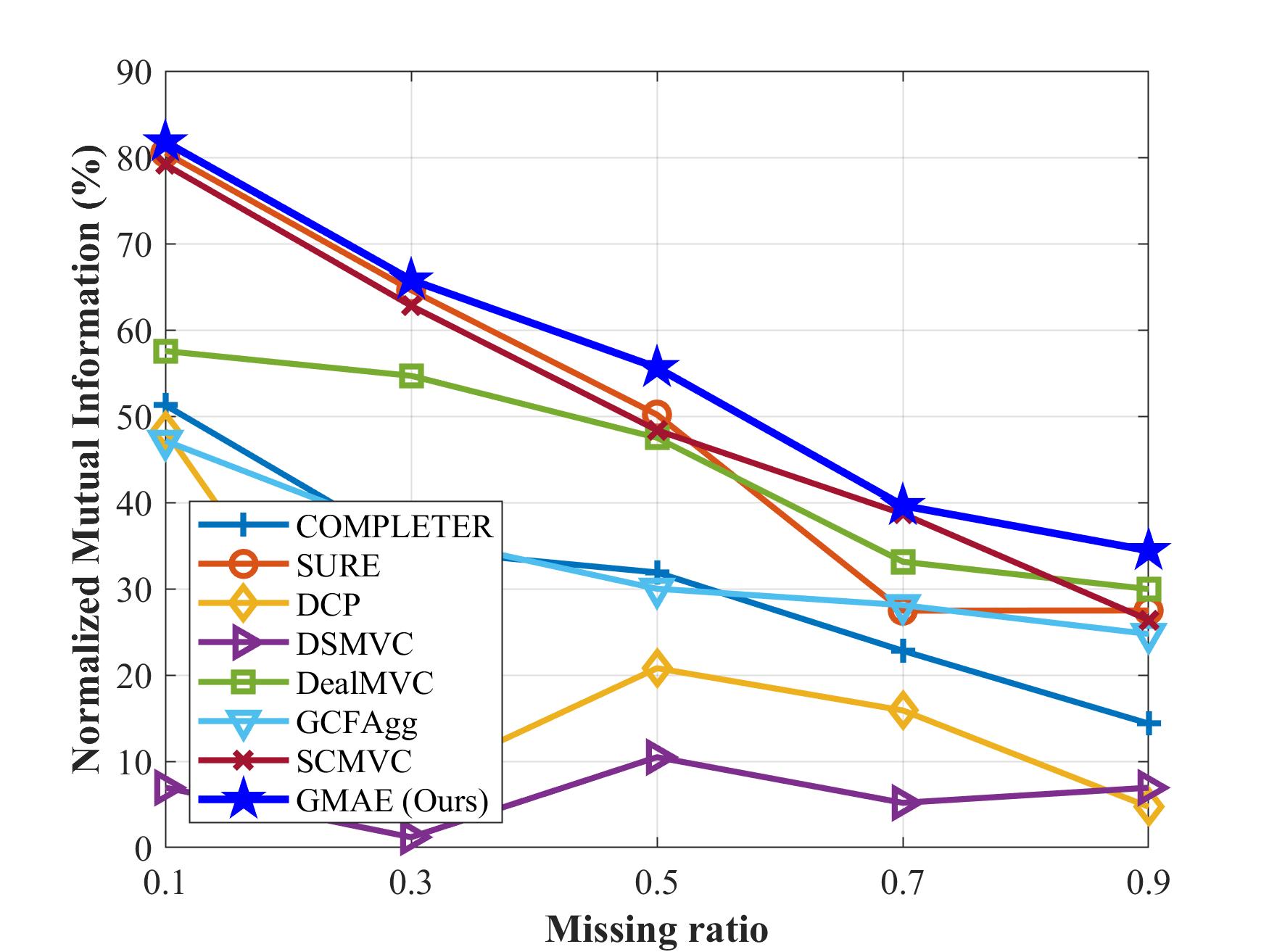}
}
\caption{Effect of different missing radio on (ACC/NMI) evaluation metrics on Dermatology and MSRCV1 datasets.}
\label{fig:miss} 
\end{figure*}

\ding{186} As shown in Figure \ref{fig:miss}, to systematically investigate the relationship between missing rates and the performance of multi-view clustering methods, the proportion of incomplete samples relative to the overall dataset was varied from 0.1 to 0.9, with an interval of 0.2. The results, presented as a line chart, indicate that while the performance of all methods generally declines as the missing rate increases, our method exhibits a significantly slower rate of decline. In contrast, competing methods experience a more pronounced degradation as the missing rate increases. These findings emphasize the superior resilience and effectiveness of our approach, particularly in scenarios characterized by high levels of data incompleteness.
\begin{figure}[htbp]
\centering
\subfigure[$\overline{\mathbf{Z}}$ features] { 
\includegraphics[width=0.32\columnwidth]{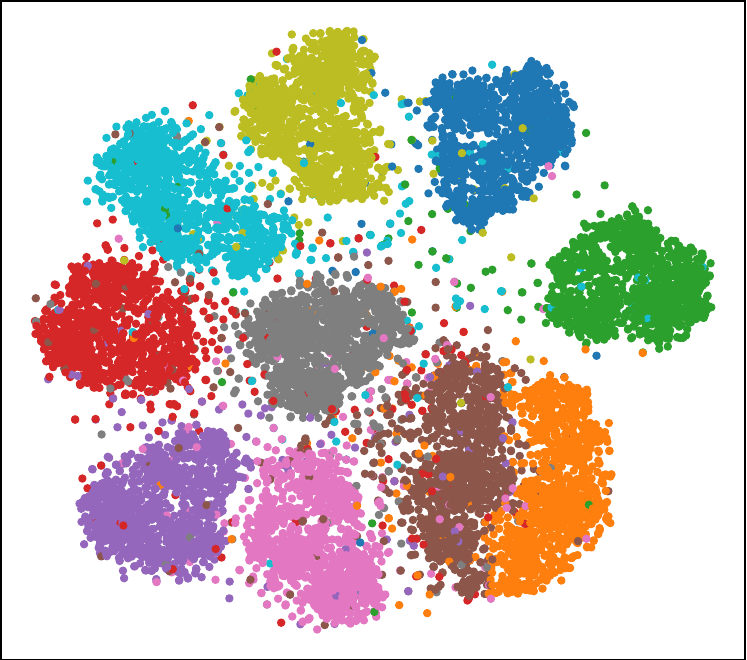}
}\hspace{-0.2cm}
\subfigure[$\overline{\mathbf{H}}$ features] { 
\includegraphics[width=0.32\columnwidth]{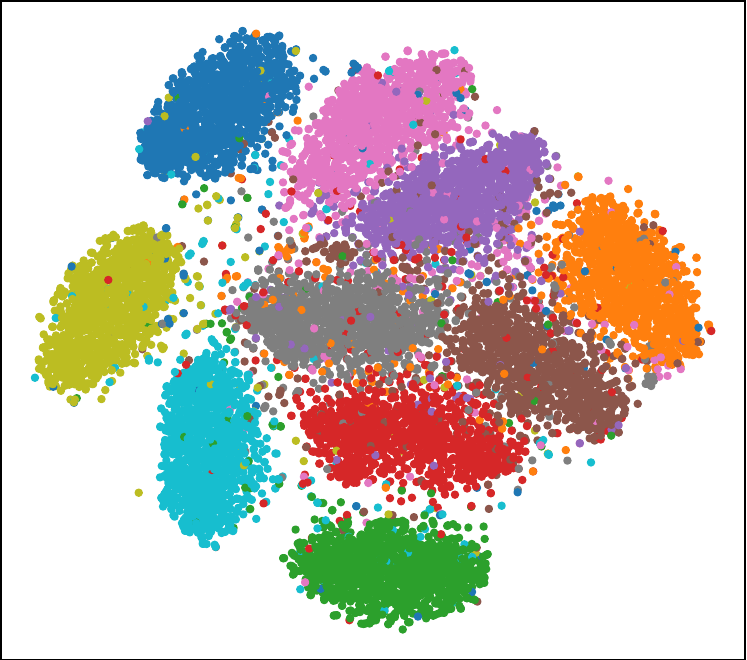}
}\hspace{-0.2cm}
\subfigure[${\mathbf{Q}}$ features] { 
\includegraphics[width=0.32\columnwidth]{Figures/0stl10.pdf}
}\vspace{-1em}
\caption{The visualization results of feature representations on different linear layers of the GMAE network on the STL-10 dataset after convergence.}
\label{fig:tsne2} \vspace{-1em}
\end{figure}

\begin{figure}[htbp]
\centering
\subfigure[Digits (97.45)] { 
\includegraphics[width=0.32\columnwidth]{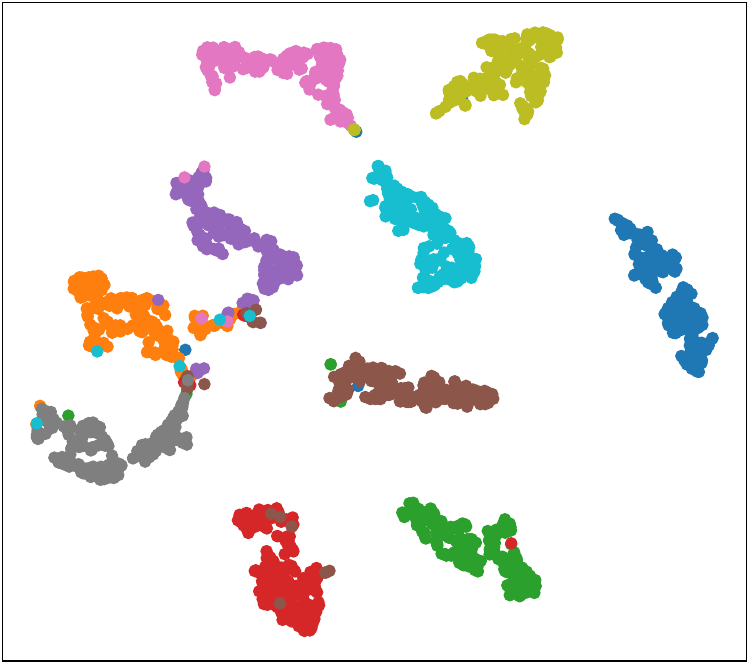}
}\hspace{-0.2cm}
\subfigure[RGB-D  (45.07)] { 
\includegraphics[width=0.32\columnwidth]{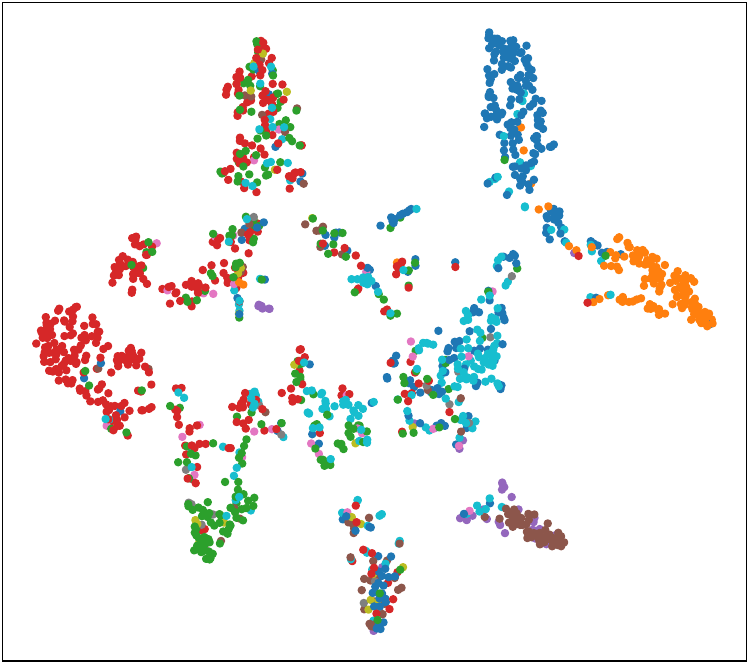}
}\hspace{-0.2cm}
\subfigure[ALOI-100 (75.49)] { 
\includegraphics[width=0.32\columnwidth]{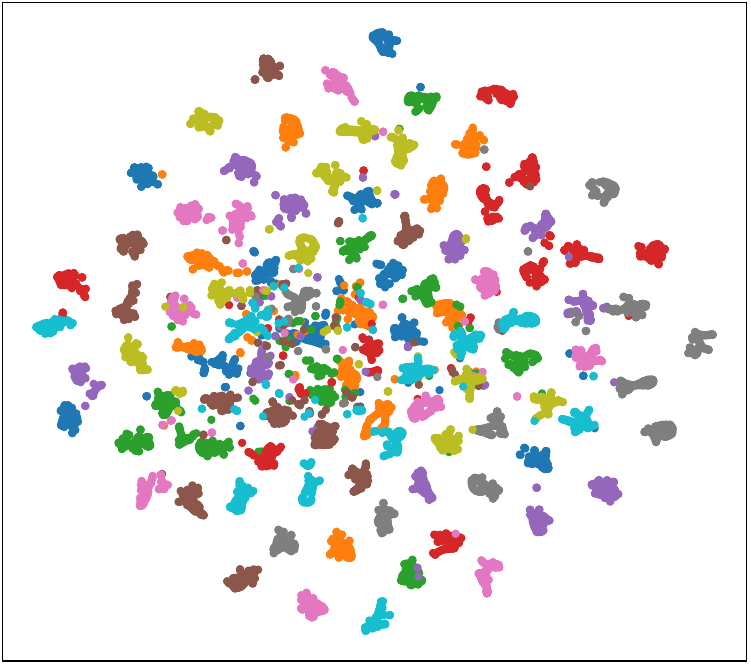}
}\vspace{-1em}
\caption{Visualization of embedded features ${\mathbf{Q}}$.}
\label{fig:tsne1} \vspace{-1em}
\end{figure}

\begin{figure*}[ht]
\centering
\subfigure[Digits-3V (ACC)] { 
\includegraphics[width=0.47\columnwidth]{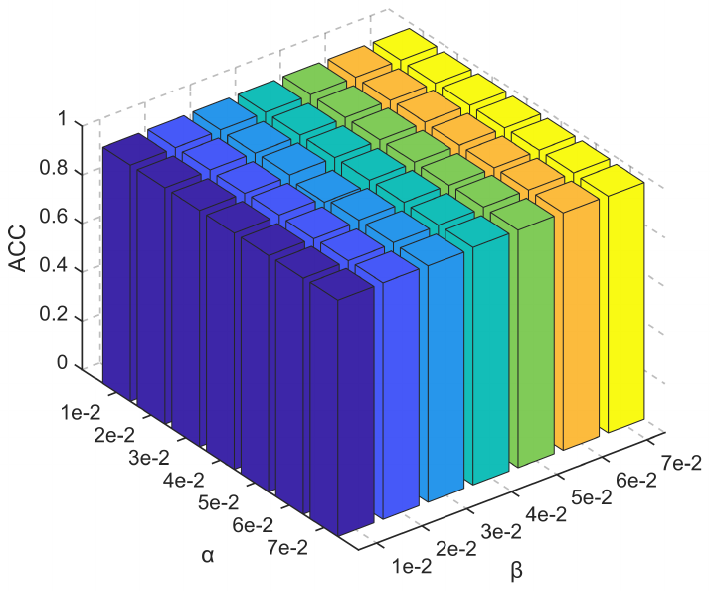}
}\hspace{-0.1cm}
\subfigure[Digits-3V (NMI)] { 
\includegraphics[width=0.47\columnwidth]{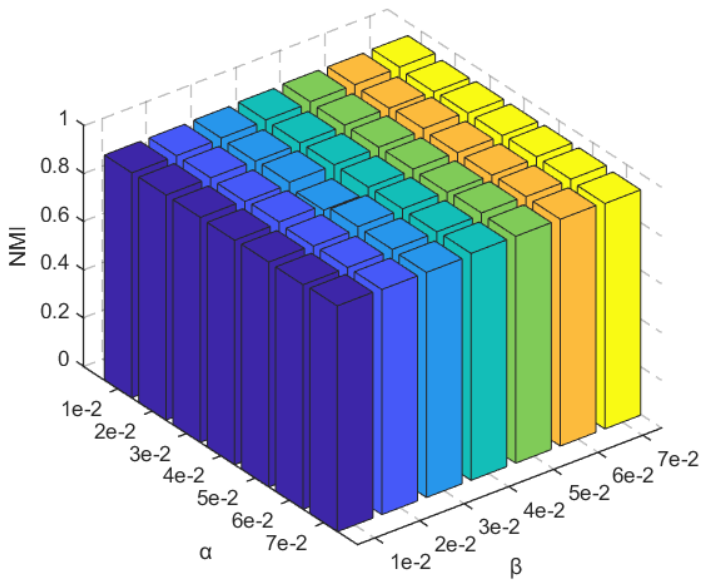}
}\hspace{-0.1cm}
\subfigure[LandUse-21 (ACC)] { 
\includegraphics[width=0.47\columnwidth]{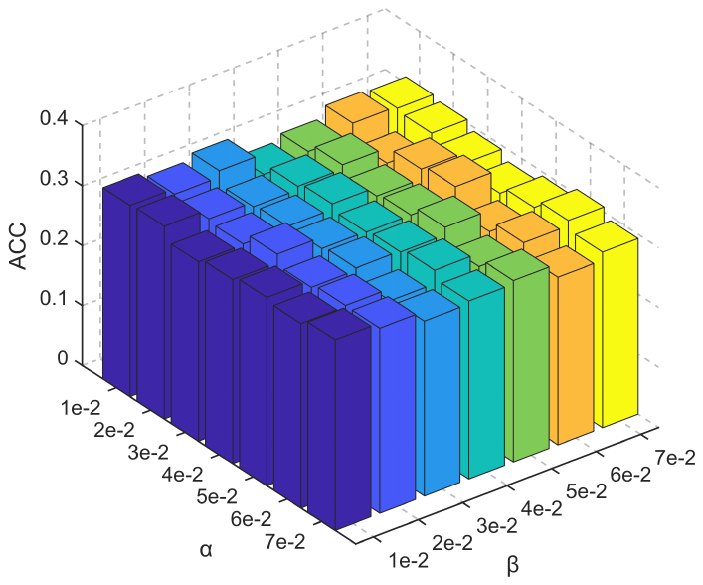}
}\hspace{-0.1cm}
\subfigure[LandUse-21 (NMI)] { 
\includegraphics[width=0.47\columnwidth]{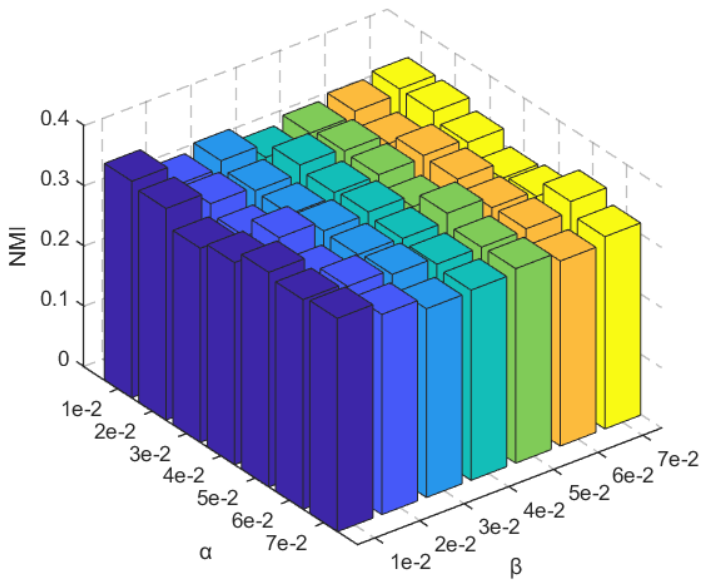}
}\vspace{-1em}
\caption{\revised{Hyperparameter sensitivity analysis. The clustering results (ACC/NMI) vary with different values of $\alpha$ and $\beta$.}}
\label{fig:bar} 
\end{figure*}
\begin{figure*}[ht]
\centering
\vspace{-1em}
\subfigure[BRCA (ACC)] { 
\includegraphics[width=0.47\columnwidth]{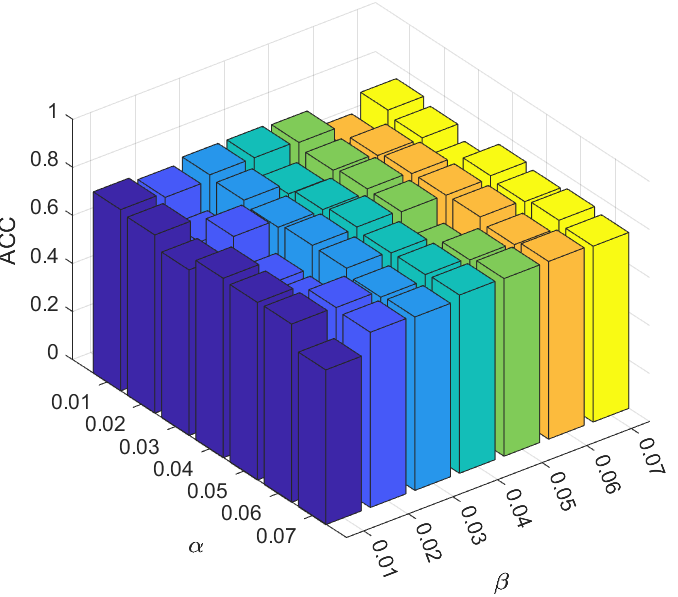}
}\hspace{-0.1cm}
\subfigure[BRCA (NMI)] { 
\includegraphics[width=0.47\columnwidth]{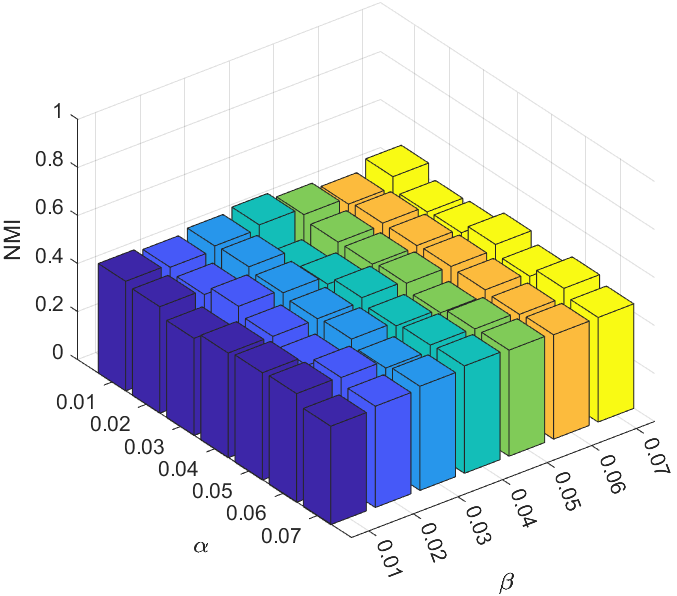}
}\hspace{-0.1cm}
\subfigure[LGG (PUR)] { 
\includegraphics[width=0.47\columnwidth]{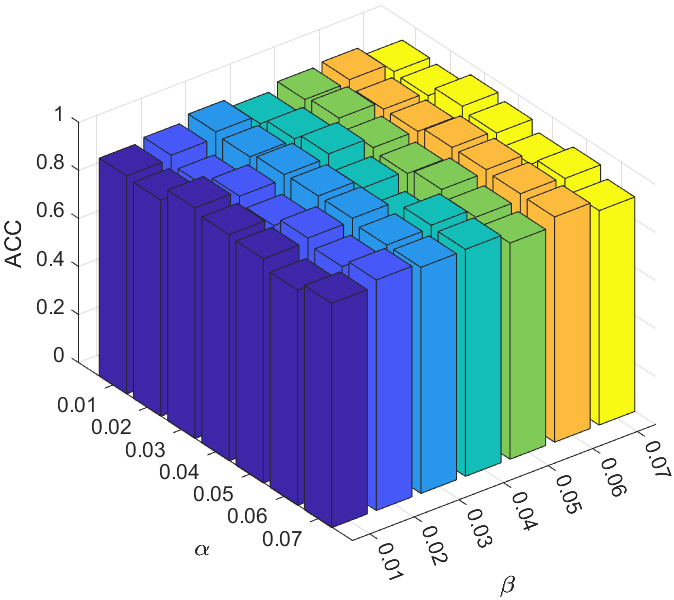}
}\hspace{-0.1cm}
\subfigure[LGG (NMI)] { 
\includegraphics[width=0.47\columnwidth]{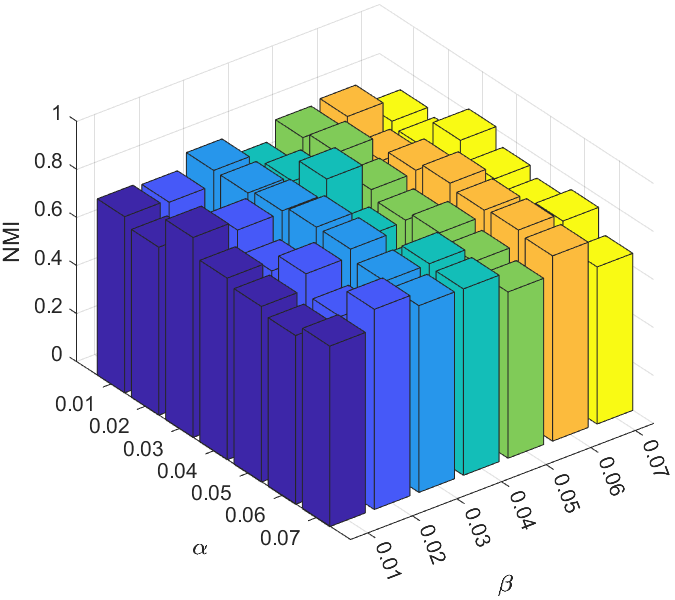}
}\vspace{-1em}
\caption{\revised{The hyperparameter sensitivity analysis on the omics-based MVC datasets varies across different values of $\alpha$ and $\beta$.}}
\label{fig:bar_2}
\end{figure*}

\subsection{Ablation Study}
\revised{The \(Z\), \(H\), and \(C\) embeddings play distinct roles at different stages in GMAE, reflecting the progressive refinement of feature learning. The t-SNE visualizations in Figure \ref{fig:tsne2} offer deeper insights into the evolution and function of these features. The view-specific representations \(Z\) are independently generated by the encoders for each view, capturing the unique characteristics of individual view inputs. While these features effectively capture view-specific content, the lack of cross-view fusion introduces redundancy and overlap between various categories. As shown in Figure \ref{fig:tsne2}(a), the clusters exhibit some structure but remain entangled, indicating that the learned representations at this stage have not yet achieved cross-view distribution alignment.  }

Meanwhile, the intermediate representations \(H\) integrate both view-specific information and initial shared patterns. At this stage, the model begins to capture cross-view commonalities, but incomplete optimization leaves some category boundaries blurred. As illustrated in Figure \ref{fig:tsne2}(b), the clusters are more distinct than those formed by \(Z\) features, with clearer boundaries and reduced overlap, reflecting improved discriminative ability. However, further refinement is needed to achieve full separation.  
\begin{table}[ht]
\centering
\renewcommand{\arraystretch}{1.} 
\setlength{\tabcolsep}{4pt} 
\caption{\revised{Ablative comparison of loss combinations.}}\vspace{-0.5em} 
\begin{tabular}{@{}c|ccc|ccc@{}}
\toprule
\multirow{2}{*}{{Datasets}} & \multicolumn{3}{c|}{{Components}} & \multicolumn{3}{c}{{Evaluation Metrics}} \\
\cmidrule(lr){2-4} \cmidrule(l){5-7}
 & \textbf{$\mathcal{L}_{\text{Rec}}$}& \textbf{$\mathcal{L}_{\text{Cor}}$}+\textbf{$\mathcal{L}_{\text{Dis}}$}& \textbf{$\mathcal{L}_{\text{Ent}}$}& {\quad ACC \quad} & {\quad NMI\quad} & {\quad PUR\;\quad} \\ 
\midrule
\multirow{4}{*}{{Digits-6V}}& \ding{51} & \textcolor{gray}{--} & \textcolor{gray}{--} & 64.10& 61.88& 65.40\\
& \ding{51} & \ding{51} & \textcolor{gray}{--} & 82.35& 85.46& 85.50\\
& \ding{51} & \textcolor{gray}{--} & \ding{51} & 83.05& 84.57& 83.05\\
& \ding{51} & \ding{51} & \ding{51} & \textbf{97.45}& \textbf{94.16}& \textbf{97.45}\\
\midrule
\multirow{4}{*}{{RGB-D}}& \ding{51} & \textcolor{gray}{--} & \textcolor{gray}{--} & 17.46& 5.20& 29.74\\
& \ding{51} & \ding{51} & \textcolor{gray}{--} & 16.77& 5.78& 28.30\\
& \ding{51} & \textcolor{gray}{--} & \ding{51} & 37.06& 34.84& 52.17\\
& \ding{51} & \ding{51} & \ding{51} & \textbf{45.07}& \textbf{40.06}& \textbf{58.94}\\
\midrule
\multirow{4}{*}{{Out-Scene}}& \ding{51} & \textcolor{gray}{--} & \textcolor{gray}{--} & 48.36& 44.02& 52.64\\
& \ding{51} & \ding{51} & \textcolor{gray}{--} & 20.54& 19.73& 24.22\\
& \ding{51} & \textcolor{gray}{--} & \ding{51} & 55.62& 59.09& 59.23\\
& \ding{51} & \ding{51} & \ding{51} & \textbf{75.16}& \textbf{62.14}& \textbf{76.73}\\
\bottomrule
\end{tabular}
\label{tab:loss_comparison}
\end{table}

Finally, the common representations \(C\) serve as the final optimized features, encapsulating aligned and complementary information from multiple views. By optimizing with correlation loss \(L_{\text{Cor}}\) and adversarial discrimination loss \(L_{\text{Dis}}\), GMAE effectively integrates multi-view information while ensuring that the features remain disentangled and independent. In Figure \ref{fig:tsne2}(c), the clustering structure of \(C\) features is compact and well-separated, with minimal overlap between categories, demonstrating that GMAE has successfully extracted high-quality fused features and achieved effective category separation.

As shown in Table \ref{tab:loss_comparison}, we conducted a series of experiments to systematically evaluate the impact of each loss function and its corresponding network module. Specifically, we assessed the model's performance under different combinations of loss functions. The results reveal that the evaluation metric reaches its lowest value when neither \textbf{$\mathcal{L}_{\text{Cor}}$}+\textbf{$\mathcal{L}_{\text{Dis}}$} nor \textbf{$\mathcal{L}_{\text{Ent}}$} is included. Adding either loss function individually leads to a noticeable improvement in performance, while incorporating both simultaneously achieves the most significant enhancement, outperforming the results obtained by adding any single loss function alone. This clearly demonstrates the complementary relationship between these loss functions and highlights their combined effectiveness in improving the model’s performance.

\subsection{Disentangled Representation Analysis}

Figure \ref{fig:tsne1} illustrates the feature distributions of various datasets after model convergence through t-SNE dimensionality reduction. It can be observed that, across datasets of different scales with an increasing number of clusters, the features learned by our model consistently form compact and well-separated clusters, indicating the model’s capability to effectively capture discriminative features among different categories. As shown in Figure \ref{fig:tsne1} (b) for the RGB-D dataset, slight overlaps appear between certain categories, which can be inferred to result from the similarity of cluster features across multiple views within the original dataset. 

In summary, the integrated features extracted by our proposed GMAE successfully produce dense clusters with well-defined boundaries while maintaining effective control over the emergence of outliers.

\subsection{Parameter Sensitivity Analysis}

\subsubsection{Feature embedding vector dimensions}
As the feature embedding vector dimensions increase, our proposed GMAE model exhibits a consistent trend in Figures across different datasets. Initially, performance improves with increasing dimensions, stabilizes at a certain point, and may even decline in some cases.

As shown in Fig. \ref{fig:fdims}, for low dimensions (\textit{e.g.}, $2^3$, $2^4$), the model struggles to capture data complexity, leading to poor clustering. As dimensions increase (\textit{e.g.}, $2^4$ to $2^6$), performance improves significantly due to better feature representation. However, at higher dimensions (\textit{e.g.}, $2^8$ and above), performance may plateau or decline, likely due to the curse of dimensionality, where increased noise and model complexity lead to overfitting.

The GMAE model's architecture, which uses an autoencoder to compress data and a multi-view design to integrate features, performs well at moderate dimensions. Yet, at very high dimensions, the model risks learning redundant features, reducing generalization.

The information bottleneck principle emphasizes retaining relevant information while discarding noise. Choosing the right dimension is crucial: too few dimensions cause information loss, while too many introduce noise. High dimensionality also increases model complexity, potentially leading to overfitting if the data is insufficient.

In conclusion, the feature embedding dimensions critically impact the GMAE model's clustering effectiveness, with 64 being the most generally effective dimension.

\begin{figure*}[htbp]
  \centering
\subfigure[Digits-6V] { 
\includegraphics[width=0.233\textwidth]{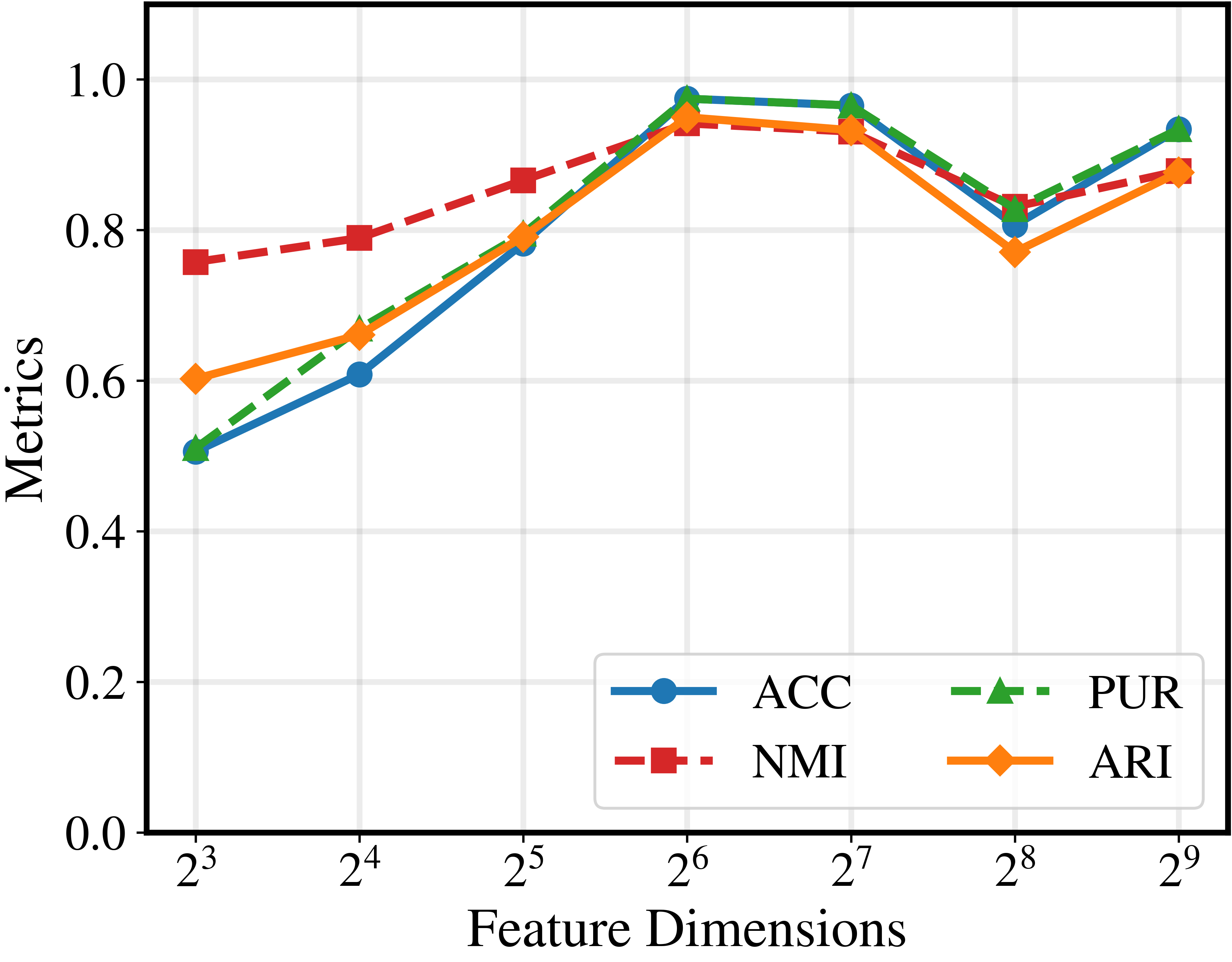}
}\hspace{0.05cm}
\subfigure[MSRCV1] { 
\includegraphics[width=0.233\textwidth]{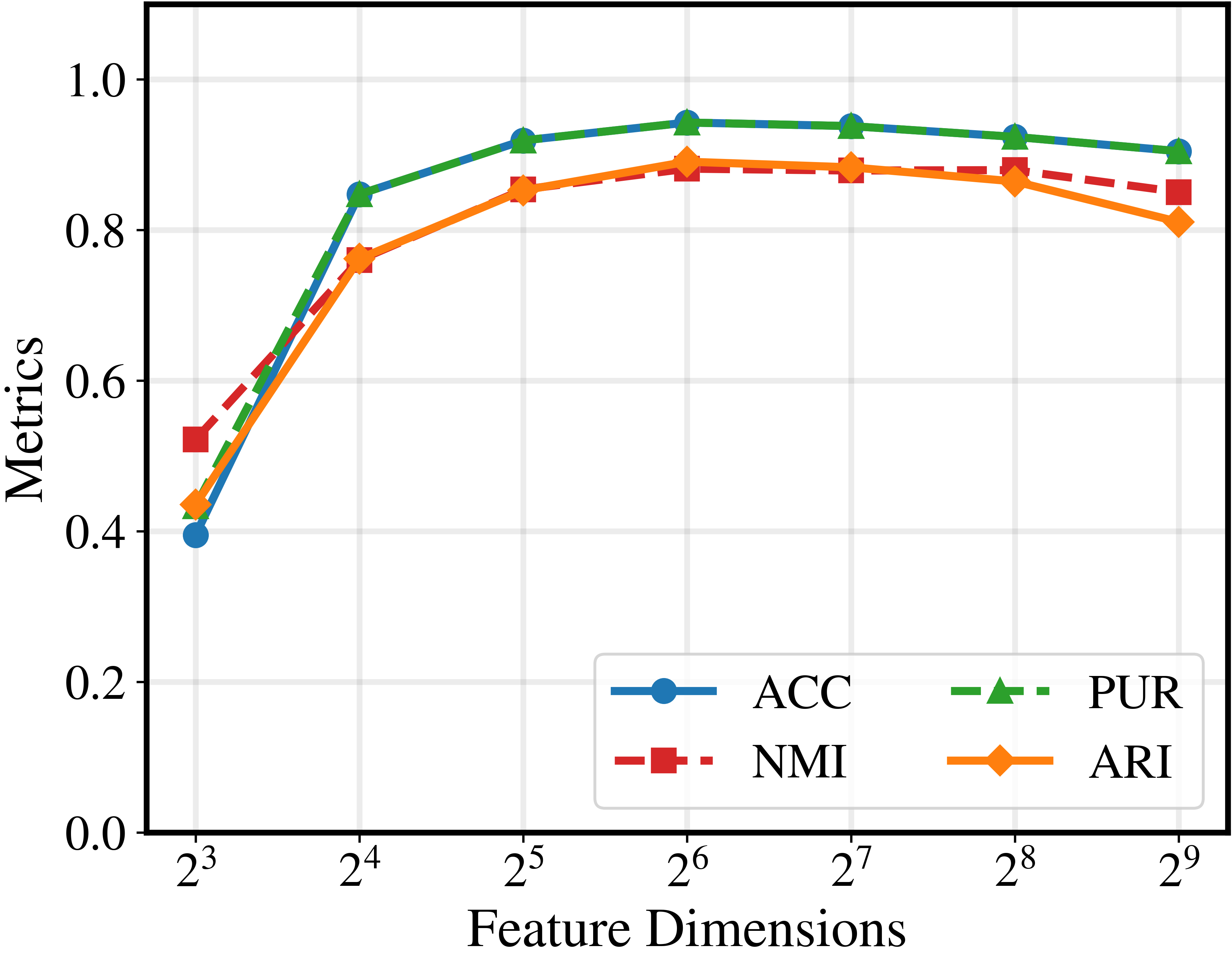}
}\hspace{0.05cm}
\subfigure[RGB-D] { 
\includegraphics[width=0.233\textwidth]{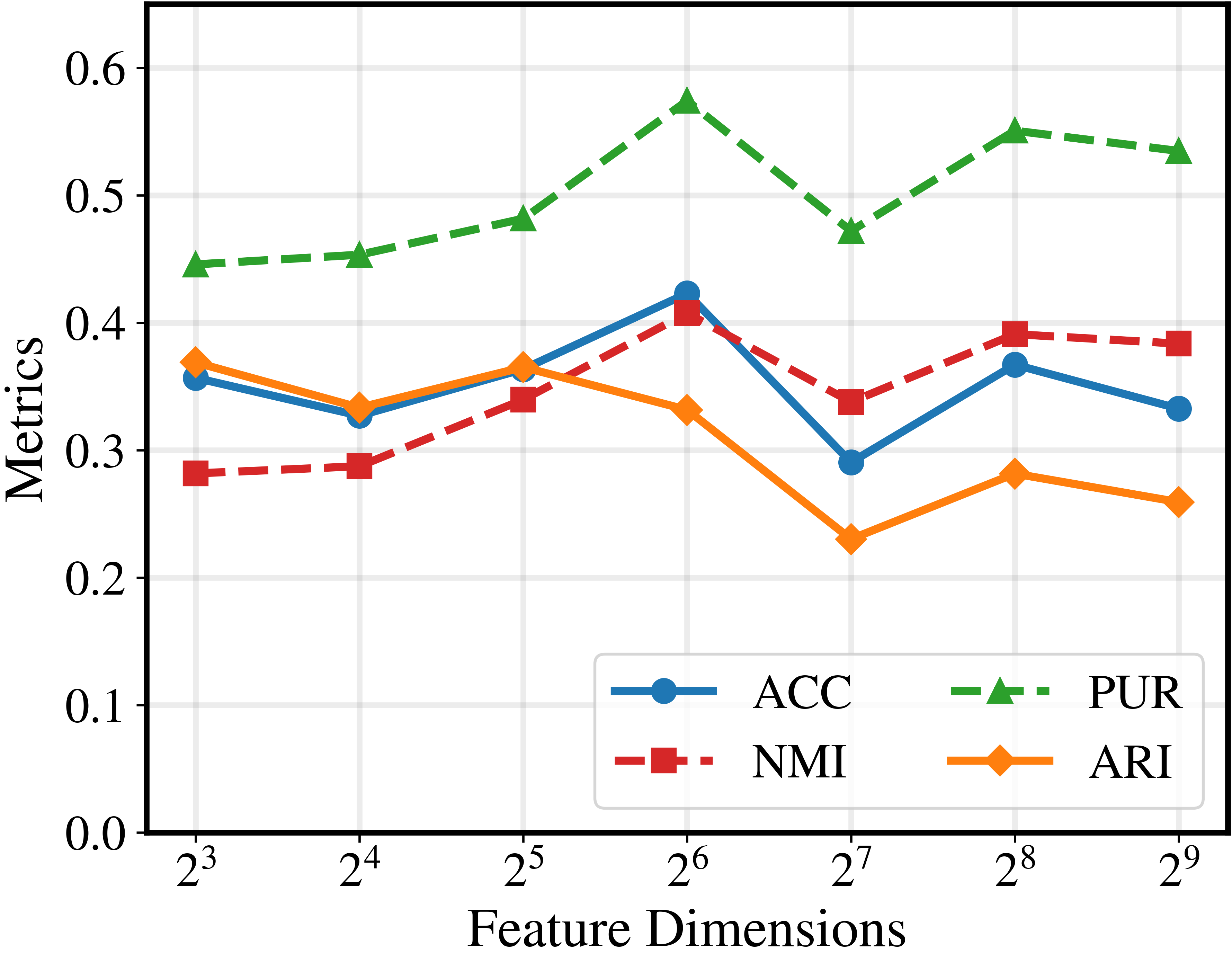}
}\hspace{0.05cm}
\subfigure[LGG] { 
\includegraphics[width=0.233\textwidth]{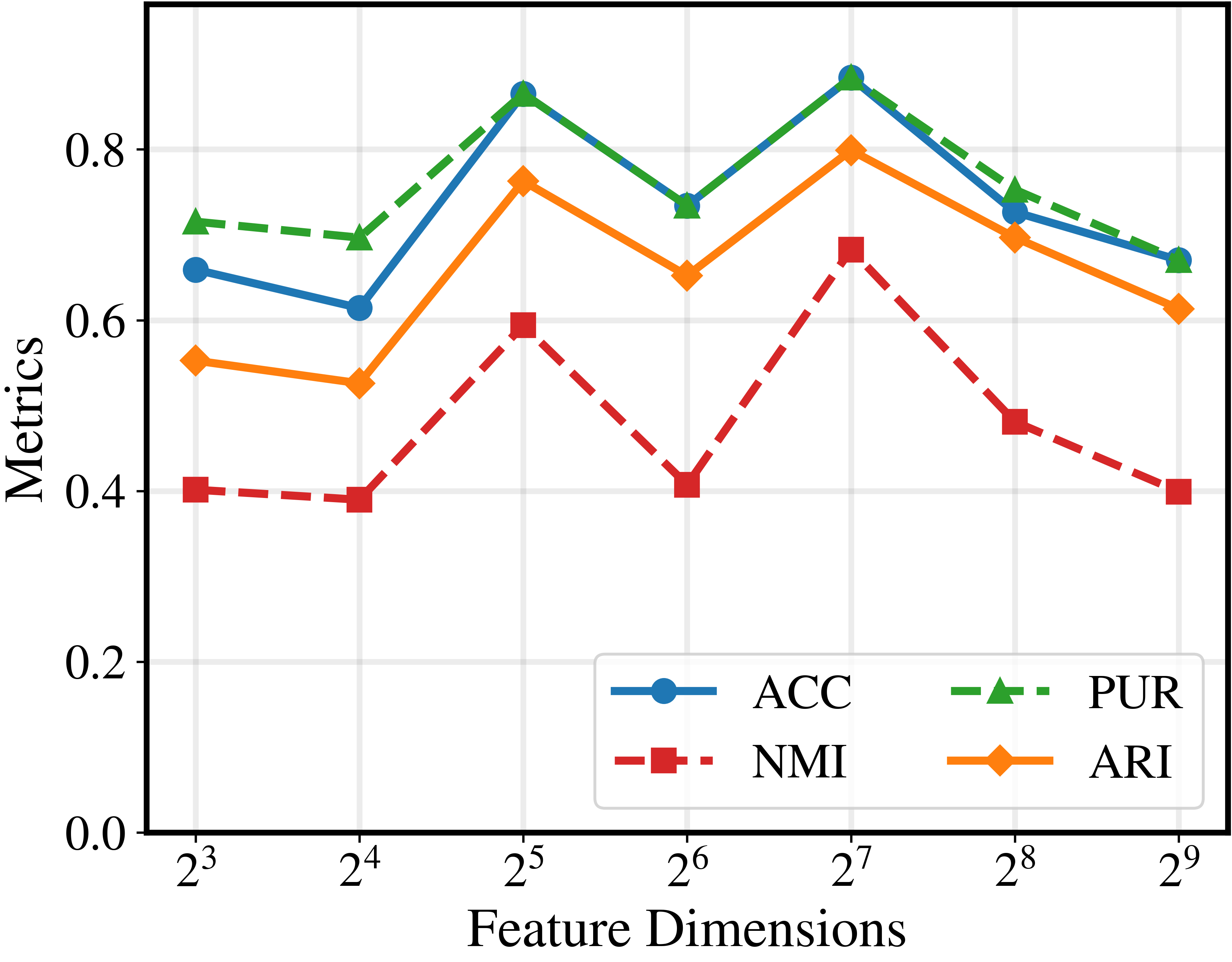}
}\vspace{-1em}
\caption{\revised{(a-d) Effect of embedding dimension $d$ on a range of evaluation metrics across different MVC datasets.}}
\label{fig:fdims} 
\end{figure*}
\subsubsection{Effectiveness of Loss Functions}
The overall loss function Eq. \eqref{eq:16overloss} includes two trade-off parameters, $\alpha$ and $\beta$. Fig. \ref{fig:bar} and Fig. \ref{fig:bar_2} show the Figures of GMAE when traversing $\alpha$ and $\beta$ from 0.01 to 0.07 with a step size of 0.01. We can observe that within the range of $\alpha$ from 0.01 to 0.02 and $\beta$ from 0.01 to 0.03, the clustering accuracy of GMAE is relatively insensitive to parameter variations and can be maintained at a relatively high level compared to other parameter settings. When the value of $\alpha$ is too large, $\mathcal{L}_{Cor}$ and $\mathcal{L}_{Dis}$ will dominate the optimization process. When the value of $\beta$ is too large, $\mathcal{L}_{Ent}$ will dominate the optimization process. Similarly, when both $\alpha$ and $\beta$ are too small, $\mathcal{L}_{Rec}$ will dominate the optimization process. This indicates the importance of the reconstruction loss in our framework, as well as the necessity of balancing all the losses. The suggested values for both $\alpha$ and $\beta$ are 0.01.

\subsection{Convergence analysis}
To thoroughly validate the convergence behavior of GMAE, we selected two datasets with notably different sample scales: the relatively small MSRCV1 (a-b) and the comparatively larger Digits (c-d). Both the complete setting (a, c) and the incomplete setting (b), along with the multi-view classification setting (d) used for reference, were considered in the evaluation. The experimental results in Fig. \ref{fig:resline} demonstrate that our method exhibits strong convergence across various experimental conditions. Even when dealing with the multiple views dataset Digits-6V (icomplete), which entails the dual challenge of view incompleteness and increased multi-view fusion complexity, our model continues to exhibit stable convergence as training is extended from 500 to 1000 epochs. The evaluation metrics ultimately stabilized after showing noticeable fluctuations in the early stages.

\section{Conclusion}
\label{sec:conclusion}
\revised{In this work, we propose GMAE to address inter-cluster entanglement in multi-view clustering. Through disentangled autoencoders and mutual information modulation, GMAE aligns cross-view distributions while preserving unique and common features. Our framework demonstrates state-of-the-art performance on 13 diverse datasets, including incomplete scenarios. With its linear complexity and robust representation capabilities, GMAE offers a scalable solution for various downstream tasks, particularly in clustering.}


\ifCLASSOPTIONcaptionsoff
  \newpage
\fi
\bibliographystyle{IEEEtran}
\bibliography{References} 

\begin{IEEEbiography}[{\includegraphics[width=1in,height=1.25in,clip,keepaspectratio]{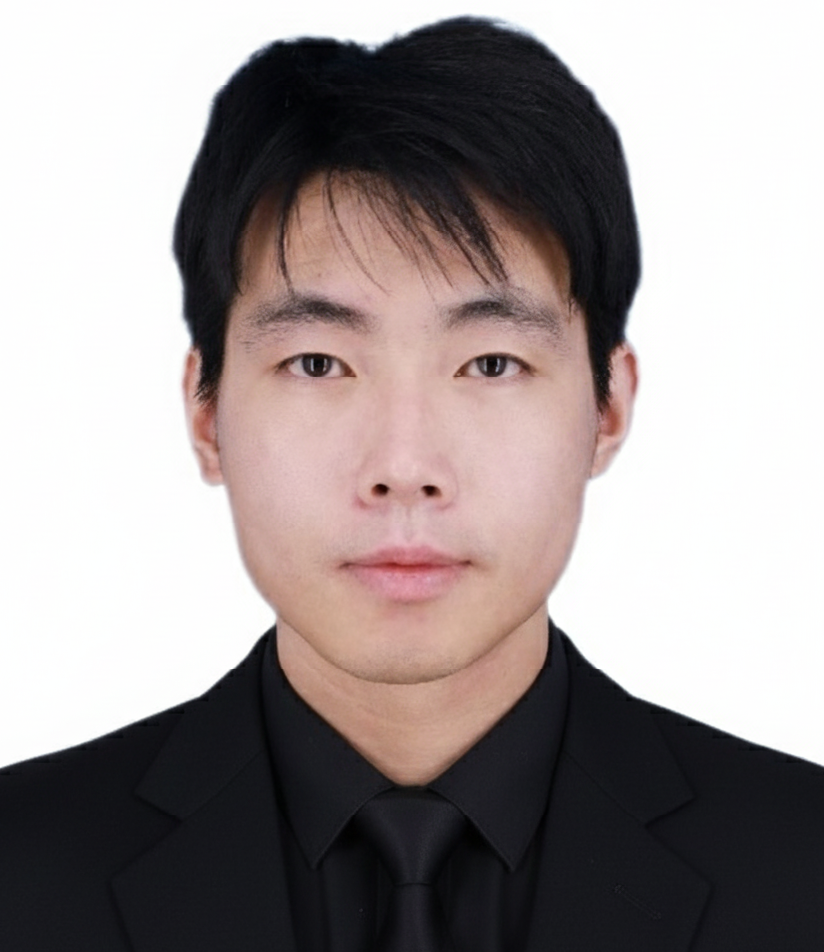}}]{Xin Zou} is currently pursuing the Ph.D. degree at
The Hong Kong University of Science and Technology, Guangzhou campus, China. He received the M.S. degree from China University of Geosciences in 2025. His research interests
include multimodal learning, trustworthy and efficient AI. He has published several peer-reviewed
papers such as ICML, NeurIPS, ICLR, etc. 
\end{IEEEbiography}
\begin{IEEEbiography}[{\includegraphics[width=1in,height=1.25in,clip,keepaspectratio]{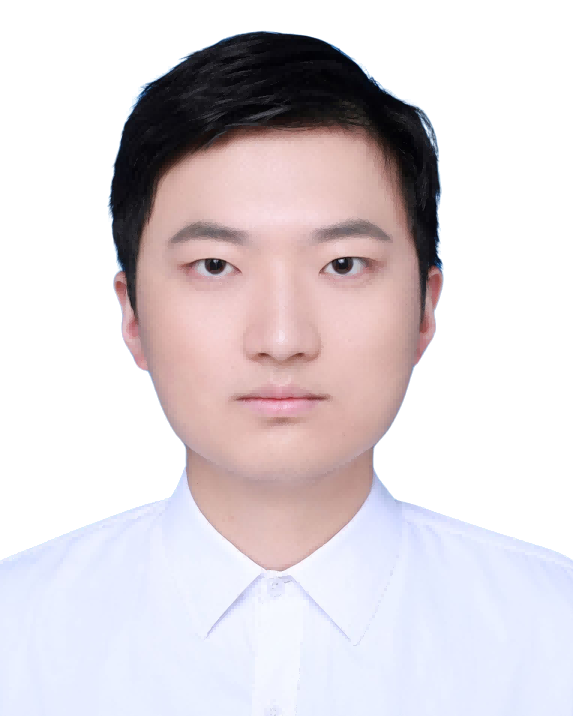}}]{Ruimeng Liu} is currently pursuing the Ph.D. degree in the School of Computer Science and Technology, Huazhong University of Science and Technology, Wuhan, China. He received the B.Eng. degree in Mechanical Design, Manufacturing, and Automation from the China University of Geosciences (Wuhan) in 2022, where he is also completing the M.Eng. degree in the school of Computer Science. His current research interests include multimodal representation learning and self-supervised learning, with a particular focus on investigating their interpretability and applications in computer vision and bioinformatics. More information can be found at \url{https://github.com/cleste-pome}.
\end{IEEEbiography}
\begin{IEEEbiography}[{\includegraphics[width=1in,height=1.25in,clip,keepaspectratio]{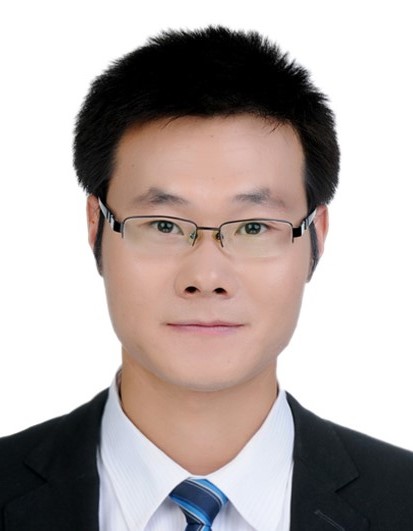}}]{Chang Tang (Senior Member, IEEE)} 
received his Ph.D. degree from Tianjin University, Tianjin, China in 2016. He joined the AMRL Lab of the University of Wollongong between Sep. 2014 and Sep. 2015. He is now a full Professor at the School of Software Engineering, Huazhong University of Science and Technology, Wuhan, China. Dr. Tang has published 100+ peer-reviewed papers, including those in highly regarded journals and conferences such as IEEE T-PAMI, IEEE T-MM, IEEE T-KDE, IEEE T-HMS, ICCV, CVPR, IJCAI, AAAI and ACM MM, etc. He serves as an associate editor of Neural Networks, IEEE Journal of Biomedical and Health Informatics, CAAI Transactions on Intelligence Technology, and BMC Bioinformatics. He regularly serves on the Technical Program Committees or as Area Chair of some top conferences such as NIPS, ICML, CVPR, ICCV, ECCV, IJCAI, ICME and AAAI. His current research interests include multi-modal learning and pattern recognition.
\end{IEEEbiography}
\begin{IEEEbiography}[{\includegraphics[width=1in,height=1.25in,clip,keepaspectratio]{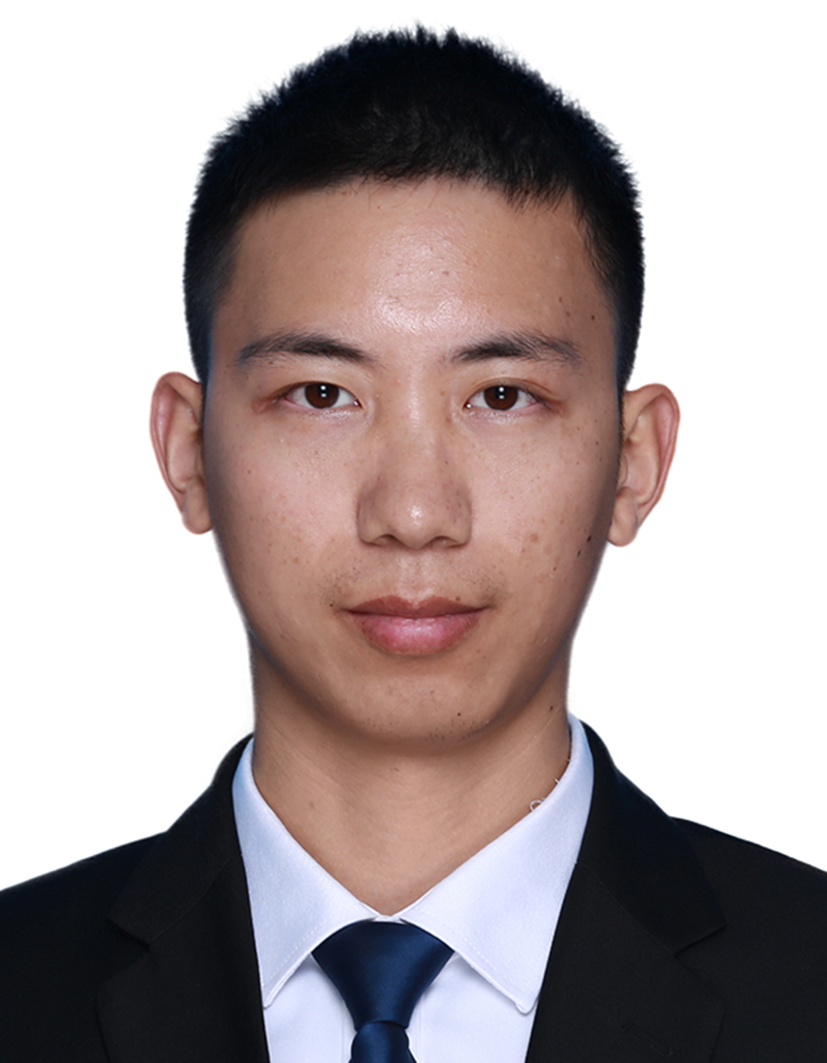}}]{Zhenglai Li (Member, IEEE)} obtained his Ph.D., M.S., and B.E. degrees from the China University of Geosciences, Wuhan, China, in 2024, 2021, and 2018, respectively. He is currently a postdoctoral research fellow at the Shenzhen Institute of Advanced Technology, Chinese Academy of Sciences, Shenzhen, China. Dr. Li has published 20 peer-reviewed papers in highly regarded journals and conferences, including IEEE TKDE, IEEE TMM, IEEE TIP, IEEE TCSVT, IEEE TGRS, and ACM MM. He regularly serves on the Technical Program Committees for leading conferences such as NeurIPS, ICLR, KDD, and ACM MM. His current research interests include machine learning and computer vision, with a focus on unsupervised learning, clustering, multi-view/multi-modal representation learning algorithms, and their applications in healthcare.
\end{IEEEbiography}
\begin{IEEEbiography}[{\includegraphics[width=1in,height=1.25in,clip,keepaspectratio]{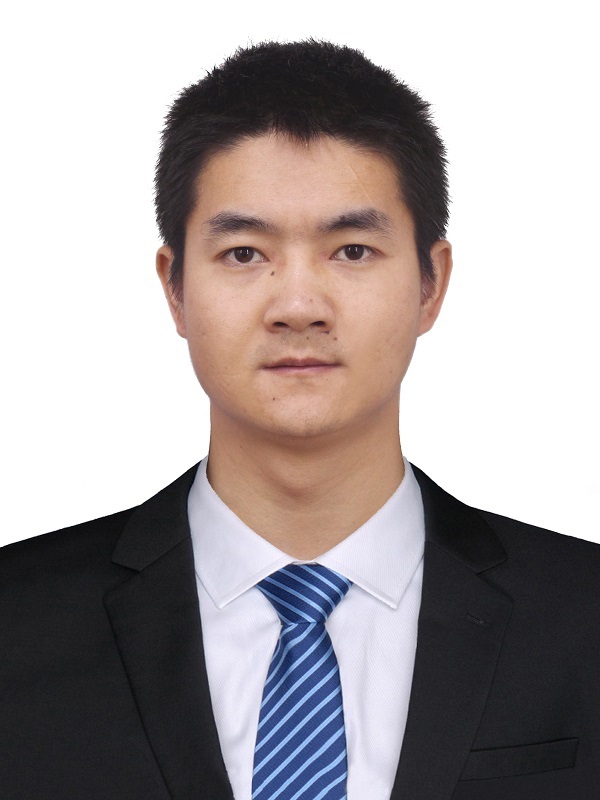}}]{Xinwang Liu (Senior Member, IEEE)} 
received his PhD degree from National University of Defense Technology (NUDT), China. He is now a full professor at the School of Computer, NUDT. His current research interests include kernel learning and unsupervised feature learning. Dr. Liu has published 60+ peer-reviewed papers, including those in highly regarded journals and conferences such as IEEE T-PAMI, IEEE T-KDE, IEEE T-IP, IEEE T-NNLS, IEEE T-MM, IEEE T-IFS, NeurIPS, ICCV, CVPR, AAAI, IJCAI, etc. He is a senior member of IEEE. More information can be found at \url{xinwangliu.github.io}.
\end{IEEEbiography}
\begin{IEEEbiography}[{\includegraphics[width=1in,height=1.25in,clip,keepaspectratio]{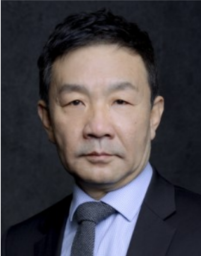}}]{Kunlun He} 
received his M.D. degree from the 3rd Military Medical University, Chongqing, China in 1988, and Ph.D. degree in Cardiology from Chinese PLA Medical School, Beijing, China in 1999. He worked as a postdoctoral research fellow at Division of circulatory physiology
of Columbia University from 1999 to 2003. He is the director and professor of Medical Big Data Research Center, Medical Engineering Laboratory of Chinese PLA General Hospital, Beijing, China. His research interests include big data and artificial intelligence of cardiovascular disease.
\end{IEEEbiography}
\begin{IEEEbiography}[{\includegraphics[width=1in,height=1.25in,clip,keepaspectratio]{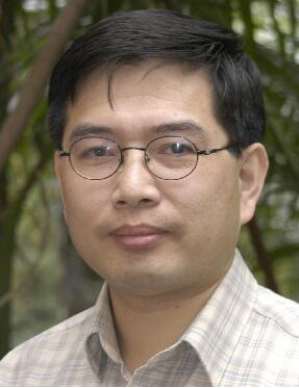}}]{Wanqing Li (Senior Member, IEEE)} 
received his PhD in electronic engineering from The University of Western Australia. He was an Associate Professor (91-92) at Zhejiang University, a Senior Researcher and later a Principal Researcher (98-03) at Motorola Research Lab, and a visiting researcher (08,10 and 13) at Microsoft Research US. He is currently a Professor and Director of Advanced Multimedia Research Lab (AMRL) of University of Wollongong, Australia. His research areas include machine learning, 3D computer vision, 3D multimedia signal processing and medical image analysis. Dr. Li is now a Senior Member of IEEE. He serves as an Associate Editor for IEEE Transactions on Circuits and Systems for Video Technology, IEEE Transactions on Multimedia and Journal of Visual Communication and Image Representation.
\end{IEEEbiography}
\end{document}